\newtheorem{theorem}{Theorem}
\newtheorem{lemma}[theorem]{Lemma}
\newtheorem{proposition}[theorem]{Proposition}
\newtheorem{assumption}{Assumption}
\theoremstyle{definition}
\newtheorem{definition}{Definition}
\newtheorem{remark}{Remark}
\newtheorem{example}{Example}
\newcommand{\R}{\mathbb{R}}
\newcommand{\N}{\mathbb{N}}
\newcommand{\Ep}{\mathbb{E}}
\renewcommand{\hat}{\widehat}
\renewcommand{\tilde}{\widetilde}
\newcommand{\argmin}{\operatornamewithlimits{argmin}}
\newcommand{\mone}{\textbf{1}}
\newcommand{\Z}{\mathbb{Z}}
\newcommand{\LASET}{\Lambda_\varphi(A,B)}
\title{Federated Learning with Relative Fairness}
\author{Shogo Nakakita$^1$, Tatsuya Kaneko$^1$, Shinya Takamaeda-Yamazaki$^1$, \\ Masaaki Imaizumi$^{1,2}$}
\address{$^1$The University of Tokyo, $^2$RIKEN Advanced Intelligence Project}
\date{ \today, \textit{Mail}: \url{nakakita@g.ecc.u-tokyo.ac.jp}}
\begin{document}
\maketitle

\begin{abstract}
This paper proposes a federated learning framework designed to achieve \textit{relative fairness} for clients. Traditional federated learning frameworks typically ensure absolute fairness by guaranteeing minimum performance across all client subgroups. However, this approach overlooks disparities in model performance between subgroups. The proposed framework uses a minimax problem approach to minimize relative unfairness, extending previous methods in distributionally robust optimization (DRO). A novel fairness index, based on the ratio between large and small losses among clients, is introduced, allowing the framework to assess and improve the relative fairness of trained models. Theoretical guarantees demonstrate that the framework consistently reduces unfairness. We also develop an algorithm, named \textsc{Scaff-PD-IA}, which balances communication and computational efficiency while maintaining minimax-optimal convergence rates. Empirical evaluations on real-world datasets confirm its effectiveness in maintaining model performance while reducing disparity. 
\end{abstract}

\section{Introduction}\label{sec:intro}
Federated learning is a regime of machine learning to train models with privacy preservation of data of clients \citep{konevcny2016afederated,konevcny2016bfederated,mcmahan2017communication,yang2020federated,kairouz2021advances}.
In the scheme of federated learning, models are trained just with communications regarding their weights, and thus the local data of each client are not collected into a centralized server.
This decentralized regime has advantages over the centralized one, such as privacy preservation and reduction of communication costs.
These advantages make federated learning applicable in many different fields, for example, the Internet of Things \citep{nguyen2021federated}, healthcare \citep{xu2021federated}, and finance \citep{long2020federated}.

\begin{figure}[htbp]
 \centering
 \begin{minipage}[b]{0.24\textwidth}
     \includegraphics[width=\textwidth]{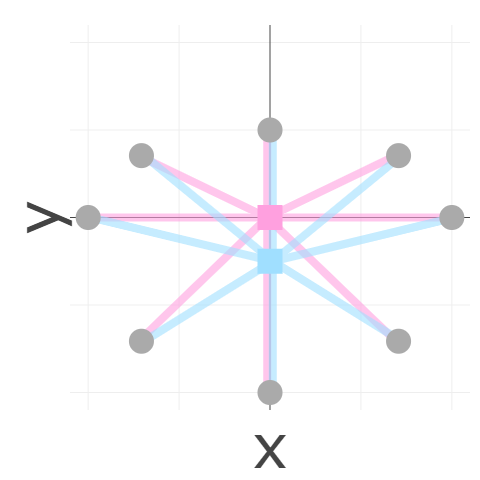}
 \end{minipage}
 \begin{minipage}[b]{0.48\textwidth}
     \includegraphics[width=\textwidth]{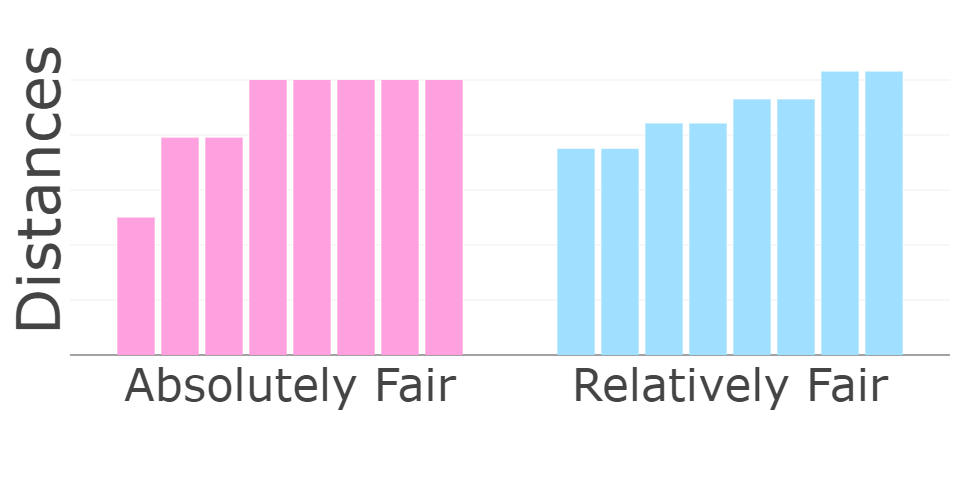}
 \end{minipage}

    \caption{
    Illustration of absolute fairness and relative fairness.
    In the left panel, distances from the eight gray points to the pink/blue center point are measured, respectively, and the right figure shows the distribution of those distances.
    While the pink distribution achieves absolute fairness which minimizes of the maximum distance, the blue distribution achieves relative fairness which minimizes the discrepancy among the distances.
    Section \ref{sec:figure_explain} shows more details.\label{fig:arc}} 
    \vskip-0.1in
\end{figure}

The fairness of trained models has received special attention in federated learning due to distributed structures.
While fairness in machine learning is multifaceted \citep{mehrabi2021survey,pessach2022review,caton2023fairness,barocas2023fairness}, some studies in federated learning take particular notes of fairness in the enjoyed performance of trained models among subgroups of clients.
This fairness is significant regardless of regime, but its achievement in federated learning is more difficult than in centralized one.
A cause of the difficulty is that the distribution of training attendees can be biased against that of the population of interest (\textit{participation gap}), which results in favoring majorities of attendees \citep{yuan2022we}.
Another cause is that empirical risks, typical loss functions in federated learning, reflect the preference of the majority even if no participation gap exists.

A significant challenge in fairness in federated learning is the control of \textit{relative fairness}.
Fairness in these previous studies means guarantees for minimal performance in any subpopulation; therefore, it is an \textit{absolute} fairness.
These guarantees are not sufficient to ensure relative fairness among subpopulations (e.g., differences or ratios of performances).
See Figure \ref{fig:arc} for the illustration of the difference.
One of the worst potential consequences of models lacking relative fairness is the digital divide, a typical failure of technology in society; therefore, methods for measuring and realizing this type of fairness are highly demanded.
While relative fairness has been the subject of studies on general machine learning \citep[for example,][]{pessach2022review,barocas2023fairness} and federated learning \citep{li2020fair}, it is still a developing issue.
A key approach to address these issues is distributionally robust optimization (DRO) as several studies \citep{mohri2019agnostic,deng2020distributionally,yu2023scaff}; however, the usual DRO is insufficient to consider \textit{relative} fairness among subpopulations.

In this study, we develop a federated learning framework to achieve relative fairness. This framework utilizes a minimax problem, which can be regarded as an extension of DRO, where the parameters are learned to minimize relative unfairness. The model trained under this framework consistently reduces relative unfairness compared to conventional methods. Additionally, we develop an algorithm for this learning process, which, through several innovations, balances stable learning with communication and computational efficiency. Moreover, the algorithm maintains a minimax-optimal convergence rate with respect to the number of updates.

Our learning framework consists of the following steps. First, we develop a generalized relative unfairness index, which is defined by the ratio of large losses to small losses within the set of clients. This index can reproduce several well-known fairness measures, such as the Atkinson index \citep{atkinson1970measurement}. Second, we develop an approximation of this relative unfairness index that can be efficiently computed by machine learning algorithms, reducing it to a type of DRO problem. Third, we develop an algorithm, \textsc{Scaff-PD-IA}, which extends the gradient update algorithms for existing DRO problems.

We present both theoretical and experimental evidence to verify the superiority of the developed framework and algorithm. First, the proposed learning framework consistently reduces relative unfairness. This reduction is demonstrated by changes in the hyperparameters involved in the algorithm. The reduction in unfairness is supported by both theoretical and experimental results. Second, the developed algorithm \textsc{Scaff-PD-IA} performs sufficiently well. Theoretically, this is shown by the the minimax optimal convergence rate of the algorithm. Experimentally, we demonstrate that the developed framework and algorithm reduce unfairness while maintaining performance through experiments using several real-world datasets.
Furthermore, we analyze the generalization error of the trained model and clarify the impact of learning through the proposed framework on the prediction performance for unseen data.

The contributions of this study are summarized as follows.

\textbf{Framework}
We propose the learning frame of DRO for handling the relative fairness by developing a measure of the relative (un)fairness. This measure is based on the relative unfairness index, which connects to several commonly used fairness measure.

\textbf{Algorithm}
We propose the algorithm \textsc{Scaff-PD-IA}, which efficiently solves the proposed framework of DRO. This algorithm employs several techniques to make the learning result stable and effective.

\textbf{Theoretical Guarantee}
There several theoretical results:
(i) we shows that our proposed framework always reduces the relative unfairness, (ii) we shows that the proposed algorithm \textsc{Scaff-PD-IA} achieves the minimax optimal convergence rate in terms of a number of updates, and (iii) we examine generalization errors in our setting combined with a supervised learning setting.

\textbf{Empirical Evaluation}
We finally analyze the empirical performance of several algorithms including \textsc{Stochastic}-AFL, DRFA, and \textsc{Scaff-PD} in fairness under several settings of problems.

\subsection{Related Works}
Distributionally robust optimization (DRO) is a classical topic in optimization theory and also gathers the interest of the statistics and machine learning communities; see \citet{rahimian2019distributionally}, \citet{shapiro2021chapter}, \citet{lin2022distributionally}, and references therein.
Although our study constrains ambiguity sets and does not introduce regularization terms, previous studies on DRO have a wide variety of settings in constraints and/or regularization on the ambiguity of distributions. 
\citet{duchi2021learning} study statistical properties of general DRO based on empirical measures and show the minimax optimality of the empirical risks under DRO problems.
Since recent datasets in machine learning are quite enormous, several studies \citep[e.g.,][]{levy2020large,qi2021online} consider large-scale optimization of DRO problems and present the convergence of algorithms theoretically and numerically.
Note that DRO plays an important role as a computational approach for out-of-distribution generalization \citep{shen2021towards}.
A bilevel federated optimization is a generalization of DRO. 
\citet{tarzanagh2022fednest} propose the \textsc{FedNest} algorithms for general bilevel optimization in the federated learning regime. 
\citet{xing2022big} consider bilevel optimization under the setting where clients are on a network graph.
\citet{qu2022generalized} and \citet{sun2023dynamic} discuss sharpness-aware optimization and its realization in federated learning.

Fairness in federated learning is multifaceted and thus has been studied via several approaches.
We first review works on fairness in the performance of models trained via federated learning among clients, which is called \textit{client-based fairness} in previous studies \citep[e.g.,][]{ezzeldin2023fairfed}.
\citet{mohri2019agnostic} study a federated minimax optimization problem with fairness and the stochastic agnostic federated learning (\textsc{Stochastic}-AFL) algorithm for the problem.
\citet{deng2020distributionally} propose the distributionally robust federated averaging (DRFA) algorithm for federated learning with DRO, which achieves a better communication efficiency than \citet{mohri2019agnostic} (they do not discuss federated learning with fairness, but their work can be easily applied).
\citet{zecchin2023communication} consider a gradient-descent-ascent algorithm with compressed communications in federated optimization to improve the complexities of communication.
\citet{yu2023scaff} propose the \textsc{Scaff-PD} algorithm for federated learning with minimal performance guarantees, which employs the idea of controlled averaging \citep{karimireddy2020scaffold} to control the heterogeneity of data distributions among clients.
While these studies are based on general DRO problems and their efficient optimization, \citet{li2020fair} discuss the $q$-Fair Federated Learning ($q$-FFL), extending a special case of \citet{mohri2019agnostic}, and show theoretical improvement of the fairness and the performance of modified algorithms ($q$-\textsc{FedSGD} and $q$-\textsc{FedAvg}).
We also note studies on other types of fairness in federated learning.
\citet{lyu2020collaborative} consider fairness in the performances of local models against the contributions of training attendees and propose a novel training scheme Collaborative Fair Federated Learning.
\citet{wang2021federated} discuss fairness in the influence on the updates of global models under the norms of the gradients of clients being heterogeneous at each step and develop the \textsc{FedFV} algorithm.
\citet{ezzeldin2023fairfed} study fairness in model outputs against inputs with demographic information whose achievement in federated learning is more difficult than in centralized learning and consider a new algorithm \textsc{FedFair}.

\subsection{Notation}
For all $m\in\N$, $[m]:=\{1,\ldots,m\}$, $[m]_{0}:=\{0,\ldots,m\}$. 
For all $x\in\R$, $\lfloor x\rfloor :=\max\{z\in\Z;x\ge z\}$ and $\lceil x\rceil :=\min\{z\in\Z;x\le z\}$.
For all $x\in\R^{m}$ with $m\in\N$, $x_{i}$ with $i\in[m]$ is the $i$-th coordinate of $x$.
For all $x,y\in\R^{m}$ with $m\in\N$, $\langle x,y\rangle =\sum_{i=1}^{n}x_{i}y_{i}$.
For any finite set $A$, $\mathrm{card}(A)$ denotes the cardinality of $A$.
For arbitrary normed space $(N,\|\cdot\|)$, for any set $A\subset N$ and positive number $\epsilon>0$, $\mathcal{N}(A,\|\cdot\|,\epsilon)$ denotes the minimum $\epsilon$-covering number, that is, $\mathcal{N}(A,\|\cdot\|,\epsilon)=\min\{\mathrm{card}(B);A\subset\bigcup_{x\in B}\{y\in B;\|x-y\|\le \epsilon\}\}$; we let $\mathcal{N}(A,\|\cdot\|,\epsilon)=\infty$ if there is no finite set $B$ with $A\subset\bigcup_{x\in B}\{y\in B;\|x-y\|\le \epsilon\}$.
For any set $A\subset \R^{m}$ with $m\in\N$, $\mathrm{int}(A)$ denotes the interior of $A$.
For any compact $C\subset\R^{m}$ with $m\in\N$ and $p\ge1$, $\|C\|_{p}:=\max_{c\in C}(\sum_{i=1}^{m}|c_{i}|^{p})^{1/p}$.
We also use the notation $D(x,y)=2^{-1}\|x-y\|_{2}^{2}$ for any $x,y\in\R^{m}$, $m\in\N$.
$\Delta^{n-1}:=\{c\in\R^{n};c_{i}\ge0,\sum_{i}c_{i}=1\}$ denotes the $(n-1)$-simplex.
$\mathcal{C}(A;[0,\infty))$ for $A\subset\R^{m}$ with $m\in\N$ is a collection of $[0,\infty)$-valued continuous functions on $A$, and $\mathcal{C}^{p}(A;[0,\infty))$ for open $A\subset\R^{m}$ is a collection of $[0,\infty)$-valued $p$-times continuously differentiable functions on $A$.
$\mone_{m}\in\R^{m}$ with $m\in\N$ is the vector whose all elements are $1$.

\section{Preliminary}

\subsection{Distribution Robust Optimization for Fair Federated Learning}
We present a distribution robust optimization (DRO) problem \citep{mohri2019agnostic,deng2020distributionally,yu2023scaff}, which is designed for federated learning. 
Let $n \in \N$ be a number of clients.
With a parameter space $\Theta\subset\R^{d}$ and a compact nonempty set $A\subset\Delta^{n-1}$ named an \textit{ambiguity set}, the DRO problem is defined as the following minimax problem:
\begin{equation}\label{eq:problem:previous}
    \min_{\theta\in\Theta}\max_{a\in A}\sum_{i=1}^{n}a_{i}f_{i}(\theta),
\end{equation}
where $\{f_{i}:i\in[n]\}\subset\mathcal{C}(\Theta;[0,\infty))$ is a sequence of loss functions of the $n$ clients.
The maximization step on the ambiguity set $A$ has a role to capture a subgroup of the $n$ clients who have a larger loss.
In other words, a situation where some clients face significant losses is negatively evaluated in the minimax form \eqref{eq:problem:previous}. 
In this sense, we can regard this optimization problem to improving the fairness among the clients.

We present typical examples of an ambiguity set $A$ designed for some specific purposes.

\begin{example}[Value-at-Risk] \label{ex:var}
    We can design a class of ambiguity sets to define conditional Value-at-Risk losses \citep{rockafellar2000optimization,rockafellar2002conditional} used in studies for fairness and DRO \citep{williamson2019fairness,duchi2021learning,pillutla2023federated,yu2023scaff}. 
Recall that  $\Delta^{n-1}:=\{c\in\R^{n};c_{i}\ge0,\sum_{i}c_{i}=1\}$ is the $(n-1)$-simplex.
With $\alpha\in(0,1]$, we can set  
\begin{align}\label{eq:ambiguity:CVaR}
    A= \Delta_{\alpha}^{n-1}:=\left\{a\in\Delta^{n-1};\ a_{i}\le ({\alpha n})^{-1},\ ^{\forall}i\in[n]\right\},
\end{align}
then the maximized value $\max_{a\in A}\sum_{i=1}^{n}a_{i}f_{i}(\theta)$ in \eqref{eq:problem:previous} approximates an average of the largest losses of $\lceil\alpha n\rceil$ clients at $\theta$.
Note that the case in which $\alpha=1/n$ gives $\max_{a\in A}\sum_{i=1}^{n}a_{i}f_{i}(\theta) = \max_{i\in[n]}f_{i}(\theta)$ is the case with $A=\Delta^{n-1}$, which is the largest loss among the $n$ clients.
Figure \ref{fig:ambiguities} contains an image of $\Delta_{0.5}^2$.
\end{example}

\subsection{Learning Algorithm for DRO}

To solve the minimax problem \eqref{eq:problem:previous}, several celebrated algorithms  have been developed \citep{nemirovskij1983problem,nemirovski2004prox,lin2020near}.
A typical algorithm is the primal-dual algorithm, which iteratively updates the parameter $\theta \in \Theta$ and the coefficient $a \in A$ with the round $r \in \N$ from an initialization $\theta^{0} \in \Theta, a^{0} \in A$ as
\begin{align}\label{eq:algorithm_dro}
    &\theta^{r+1}=\argmin_{\theta\in\Theta}\left\{\left\langle\nabla_{\theta}\sum_{i=1}^{n}a^{r}_{i}f_{i}(\theta^{r}),\theta\right\rangle +\frac{1}{2\eta_{r}}\|\theta-\theta^{r}\|_{2}^{2}\right\},\\
    &a^{r+1}=\argmin_{a\in A}\left\{\left\langle-\nabla_{a}\sum_{i=1}^{n}a^{r}_{i}f_{i}(\theta^{r}),a\right\rangle +\frac{1}{2\sigma_{r}}\|a-a^{r}\|_{2}^{2}\right\}
\end{align}
where $\eta_{r},\sigma_{r} > 0$ are learning rates.
In federated learning, however, derivation of gradients $\nabla_{\theta}f(\theta)$ is prohibitive due to computational complexities and privacy concerns.
Previous studies thus have adopted various settings of stochastic gradients instead of batch gradients.
The \textsc{Stochastic-AFL} algorithm \citep{mohri2019agnostic}, the DRFA algorithm \citep{deng2020distributionally}, and the \textsc{Scaff-PD} algorithm \citep{yu2023scaff} utilize the construction of stochastic gradients in \textsc{FedSGD}, \textsc{FedAvg}, and \textsc{Scaffold} respectively.

\section{Proposal: Learning Framework with Relative (Un)Fairness}\label{sec:proposal}

We propose a federated learning framework with relative fairness. 
As shown in Figure \ref{fig:arc}, relative fairness is a measure of fairness evaluated by the overall distribution of losses among clients. 
In this section, we first present the measure for relative fairness, and then propose a learning framework that incorporates an approximation of this measure.

\subsection{Relative Unfairness Index}

We first introduce a measure of relative fairness, named a \textit{relative unfairness index}. Although various (un)fairness measures are already known \citep{atkinson1970measurement,cobham2013all}, we consider the following generalized index. Here, both $A,B \subset \R^n$ are ambiguity sets for this measure, which will be specified later.
\begin{definition}[Relative unfairness index] \label{def:rf_index}
For loss functions $f_1(\theta),...,f_n(\theta)$ and ambiguity sets $A,B \subset \Delta^{n-1}$, a \textit{relative fairness index} is defined as follows:
\begin{equation}\label{eq:fairness}
    \mathfrak{R}_{A,B}(\theta):=\frac{\max_{a\in A}\sum_{i=1}^{n}a_{i}f_{i}(\theta)}{\min_{b\in B}\sum_{i=1}^{n}b_{i}f_{i}(\theta)}.
\end{equation}
\end{definition}
The relative unfairness index typically takes values in $[0,\infty]$. Additionally, when the ambiguity sets $A$ and $B$ are identical, it takes values in $[1,\infty]$.
The intuition behind this metric is that it calculates the ratio between large and small losses among clients. Using the Value-at-Risk (VaR) setup in Example \ref{ex:var}, the numerator $\max_{a\in A}\sum_{i=1}^{n}a_{i}f_{i}(\theta)$ is the average loss of the top 10\% of clients with the largest losses, and the denominator $\min_{b\in B}\sum_{i=1}^{n}b_{i}f_{i}(\theta)$ is the average loss of the bottom 10\% of clients with the smallest losses. Since this metric takes the form of a ratio, it has the advantage of being unaffected by the scale of the losses $\{f_i(\theta)\}_i$.

The relative unfairness index can represent several commonly used (un)fairness measures in welfare economics and social choice theory. When \(A\) and \(B\) are set as \(\Delta_{0.2}^{n-1}\), the relative unfairness index coincides with the $20:20$ index \citep{cobham2013all}, which is defined as the ratio of the average losses of the top 20\% and bottom 20\%. Additionally, when we set \(A=\Delta_{0.1}^{n-1}\) and \(B=\Delta_{0.4}^{n-1}\), the relative unfairness index corresponds to the Palma index \citep{cobham2013all}, which is the ratio of the top 10\% and bottom 40\% of losses. 
When we set \(A=\{a \in \{0,1\}^n, \|a\|=1\}\) and \(B=\{(1/n,...,1/n)\}\), the Atkinson measure $1- ({\min_{i}f_{i}(\theta)}) / ({\frac{1}{n} \sum_{i=1}^{n}f_{i}(\theta)})$ \citep{atkinson1970measurement}, which is calculated from the ratio of the minimum loss to the average, can be reproduced.
We put some more details in Section \ref{sec:compare_fairness}.

\begin{example}[Relative fairness index with VaR]\label{ex:proposal:design:pair}
We consider $\Delta_{\alpha}^{n-1}$ again and a generalization of conditional Value-at-Risk losses as Example \ref{ex:var}, which gives a clear motivation to our problem setting.
If $A=\Delta_{\alpha}^{n-1}$ and $B=\Delta_{\beta}^{n-1}$ with $\alpha,\beta\in(0,1)$, the relative fair index \eqref{eq:fairness} is approximately equivalent to the following value:
\begin{equation*}
    \max_{\sigma\in S_{n}}\frac{\frac{1}{\lfloor \alpha n\rfloor}\sum_{i=1}^{\lfloor \alpha n\rfloor}f_{\sigma(i)}\left(\theta\right)}{\frac{1}{\lfloor \beta n\rfloor}\sum_{i=n-\lfloor \beta n\rfloor+1}^{n}f_{\sigma(i)}\left(\theta\right)},
\end{equation*}
where $S_{n}$ is the symmetric group on $[n]$.
A permutation $\sigma\in S_{n}$ sorting losses $\{f_{i}(\theta);i\in[n]\}$ in descending order attains the maximum value. 
Therefore, this loss function is the difference between the empirical mean of $\lfloor \alpha n\rfloor$ clients with the larger losses and $\lfloor \beta n\rfloor$ clients with small losses.
We easily notice that $\mathfrak{R}_{A,B}$ with $A=\Delta_{\alpha}^{n-1},B=\Delta_{\beta}^{n-1}$ recovers the 20:20 ratio, the Palma ratio, and the Atkinson inequality measure with infinite aversion if $(\alpha,\beta)=(0.20,0.20)$, $(\alpha,\beta)=(0.40,0.10)$, and $(\alpha,\beta)=(1,1/n)$ respectively.
\end{example}

\subsubsection{Relative Fairness Discrepancy by Majorization Approximation}\label{sec:proposal:scalarized}

We extend the aforementioned relative fairness index in Definition \ref{def:rf_index} and develop a measure of relative fairness expressed in the form of a difference. 
The original relative fairness index is written in the form of a ratio, which introduces computational instability when optimizing it in learning algorithms. 
To address this issue is crucial to enhance computational usefulness of the relative fairness in practical use.

To solve this problem from the ratio form, we perform an approximation using the majorization technique \citep{marshall1979inequalities}. Specifically, through straightforward calculations (see Proposition \ref{prop:ratiobounds} in Appendix), we obtain the following inequalities for any $\varphi\in[0,1)$ and $\theta \in \Theta$:
\begin{align*}
    &\varphi+\frac{1}{C_b}\left(\max_{a\in A}\sum_{i=1}^{n}a_{i}f_{i}(\theta)-\varphi\min_{b\in B}\sum_{i=1}^{n}b_{i}f_{i}(\theta)\right) \le \mathfrak{R}_{A,B}(\theta)\le \varphi+\frac{1}{C_b'}\left(\max_{a\in A}\sum_{i=1}^{n}a_{i}f_{i}(\theta)-\varphi\min_{b\in B}\sum_{i=1}^{n}b_{i}f_{i}(\theta)\right),
\end{align*}
where $C_b := \max_{\theta\in\Theta}\min_{b\in B}\sum_{i=1}^{n}b_{i}f_{i}(\theta)$ and $C_b' := \min_{\theta\in\Theta,b\in B}\min_{b\in B}\sum_{i=1}^{n}b_{i}f_{i}(\theta)$ are constants which are assumed to be positive.
In words, the ratio-form relative fairness index is equivalent to the difference between the denominator and the numerator, up to a constant.
Here, $\varphi$ is thus a coefficient to control the tightness of the approximation above, the lower bound (resp.~the upper bound) is non-decreasing (resp.~the lower bound) in $\varphi$ for each $\theta$; see Proposition \ref{prop:ratiobounds} in Appendix.

Using this majorization approximation, we define the following discrepancy:
\begin{definition}[Relative fairness discrepancy] \label{def:rf_discrepancy}
For loss functions $f_1(\theta),...,f_n(\theta)$, ambiguity sets $A,B \subset \R^n$, and a coefficient $\varphi \in [0,1)$, a \textit{relative fairness discrepancy} is defined as follows:
\begin{align}
    \max_{a\in A}\sum_{i=1}^{n}a_{i}f_{i}(\theta)-\varphi \min_{b\in B}\sum_{i=1}^{n}b_{i}f_{i}(\theta) \label{eq:rf_discrepancy}
\end{align}
\end{definition}

This discrepancy, being in the form of a difference, allows for easy computation by many machine learning algorithms, such as gradient descent. This enables the construction of a useful learning framework.

\subsection{Proposed Framework} \label{sec:proposed_framework}

We use the proposed discrepancy \eqref{eq:rf_discrepancy} in Definition \ref{def:rf_discrepancy} to employ a learning framework for relative fairness. 
Since the relative fairness discrepancy consists of a maximization problem over $A$ and a minimization problem over $B$, this can be uniformly represented as a maximization problem over a certain set. 
Specifically, we develop an \textit{integrated ambiguity set} $\LASET\subset\R^{n}$:
\begin{align}
    \LASET :=\left\{\lambda\in\R^{n};\ ^{\exists}(a,b)\in A\times B,\ \lambda=\frac{1}{1-\varphi}\left(a-\varphi b\right)\right\}. \label{eq:large_a_set}
\end{align}
$\LASET$ is a subset of the hyperplane $\{h\in\R^{n}:\langle h,\mone_n\rangle=1\}$, but not necessarily a subset of the simplex $\Delta^{n-1}$; in this sense, $\LASET$ is possibly irregular in comparison to regular ambiguity sets being subsets of $\Delta^{n-1}$.
The term $\frac{1}{1-\varphi}$ is for normalizing its scale.

\begin{figure}[htbp]
    \centering
    \includegraphics[width=.49\linewidth]{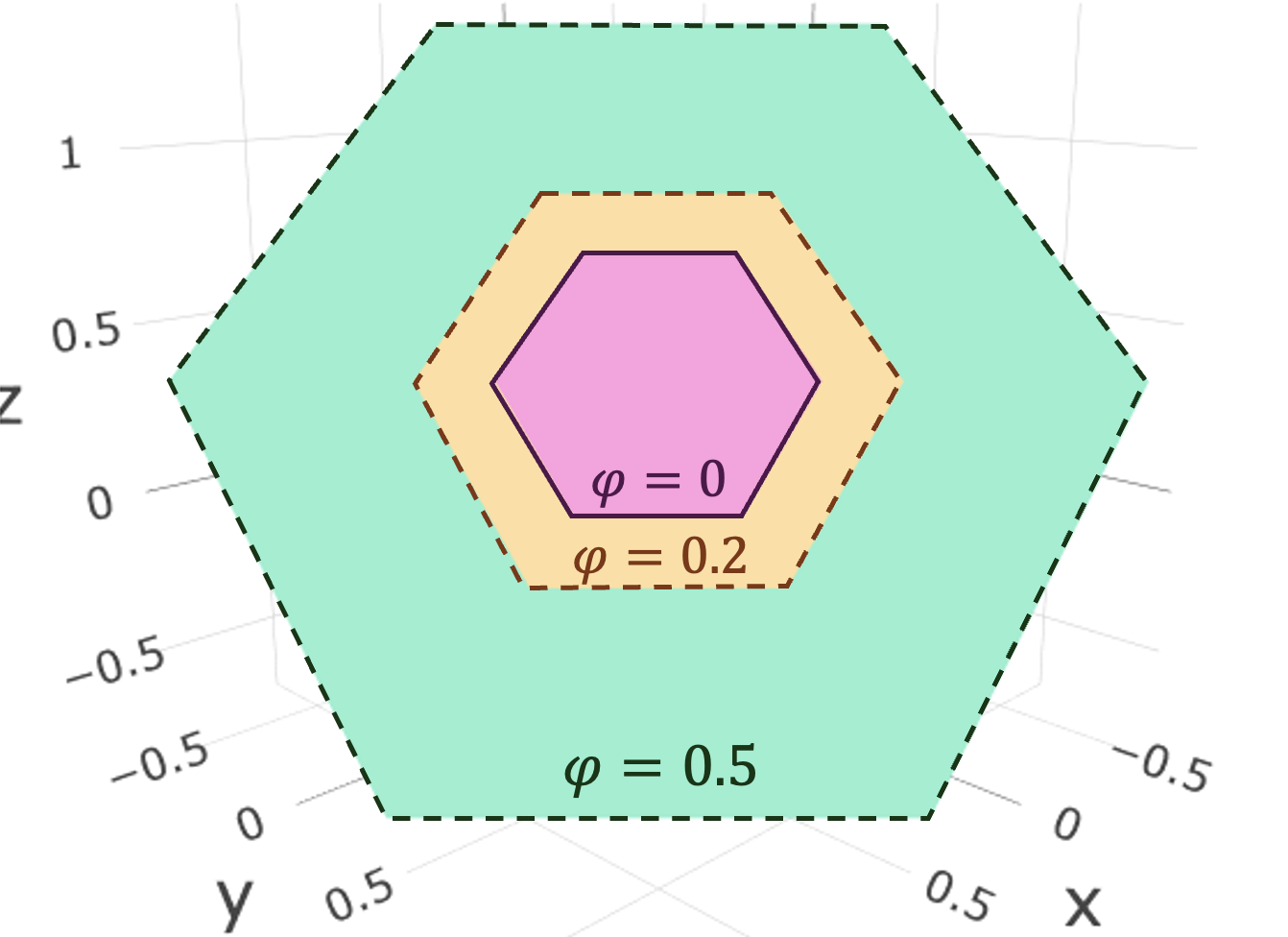}
    \caption{
    Plots of $\Lambda_{\varphi}(\Delta_{0.5}^{2},\Delta_{0.5}^{2})$, with $\varphi= 0$  (pink), $\varphi = 0.2$ (orange), and $\varphi = 0.5$ (green).  The set with solid lines is a regular ambiguity set ($\Lambda_{0}(\Delta_{0.5}^{2},\Delta_{0.5}^{2})=\Delta_{0.5}^{2}$), and with dashed lines is an integrated set.}    
    \label{fig:ambiguities}
\end{figure}

Finally, we propose our framework for \textit{federated learning with relative fairness}, using the relative fairness discrepancy \eqref{eq:rf_discrepancy} associated with the unified ambiguity set $\LASET$ in \eqref{eq:large_a_set}.
For ambiguity sets $A,B\subset\Delta^{n-1}$ and the coefficient $\varphi\in[0,1)$, we consider the following minimax problem:
\begin{equation}\label{eq:problem:scalarized}
    \min_{\theta\in\Theta}\max_{\lambda \in \LASET} \sum_{i=1}^{n} \lambda_i f_{i}(\theta).
\end{equation}
We observe that the problem \eqref{eq:problem:scalarized} is an extension of the DRO problem \eqref{eq:problem:previous} by replacing the ambiguity set $A$ with its united version $\LASET$.
Especially, if $\varphi=0$ holds, these problems are identical.
The ambiguity set $B$ is used to capture the effect of a subset of clients with smaller loss values, and $\varphi$ controls the intensity of the discrepancy between clients with larger losses and smaller losses.

We introduce new notation. 
We let $(\theta_\varphi^{\star},\lambda_\varphi^{\star})\in\Theta\times\LASET$ denote an arbitrary \textit{solution} of the minimax problem \eqref{eq:problem:scalarized}, that is, an element satisfying (i) $\sum_{i}\lambda_{\varphi,i}^{\star}f_{i}(\theta_\varphi^{\star})=\min_{\theta\in\Theta}\max_{\lambda\in\LASET}\sum_{i=1}^{n}\lambda_{\varphi,i}f_{i}(\theta)$ and (ii) for any $\theta\in\Theta,\lambda\in\LASET$, $\sum_{i}\lambda_{i}f_{i}(\theta^{\star})\le \sum_{i}\lambda_{\varphi,i}^{\star}f_{i}(\theta_\varphi^{\star})\le \sum_{i}\lambda_{\varphi,i}^{\star}f_{i}(\theta)$.
Note that they exist due to the compactness of $\Theta$ and $\LASET$ and the continuity of $f_{i}$ for all $i\in[n]$ (if necessary, see Lemma \ref{lem:existence} in Appendix \ref{appendix:technical}).
We also define a tuple $(\theta_{\varphi}^{\star},a_{\varphi}^{\star},b_{\varphi}^{\star})\in\Theta\times A\times B$ such that
        $\sum_{i=1}^{n}a_{i,\varphi}^{\star}f_{i}(\theta_{\varphi}^{\star})-\varphi\sum_{i=1}^{n}b_{i,\varphi}^{\star}f_{i}(\theta_{\varphi}^{\star})=\min_{\theta\in\Theta}\max_{a\in A,b\in B}\left(\sum_{i=1}^{n}a_{i}f_{i}(\theta)-\varphi\sum_{i=1}^{n}b_{i}f_{i}(\theta)\right)$ hold.

\section{Theory on Fairness Improvement}\label{sec:mono}

In this section, we show that the solutions of our optimization problem \eqref{eq:problem:scalarized} with $\varphi>0$ improves the relative fairness from the solutions of the original DRO problem \eqref{eq:problem:previous}, specifically, it makes the relative fairness index $\mathfrak{R}_{A,B}(\cdot)$ smaller.
Precisely speaking, we present that $\mathfrak{R}_{A,B}(\cdot)$ is non-increasing in $\varphi$ under minimal assumptions (Section \ref{sec:mono:nonstrict}) and strictly decreasing in $\varphi$ under additional mild conditions (Section \ref{sec:mono:strict}).
Recall that $\varphi = 0$ makes the original DRO problem and our problem \eqref{eq:problem:scalarized} identical.

\subsection{Weak Improvement}\label{sec:mono:nonstrict}

We first demonstrate that weak improvement, specifically, increasing \(\varphi\) does not lead to an increase in relative fairness index $\mathfrak{R}_{A,B}(\theta_{\varphi}^{\star})$.
To avoid division by zero, let us set the following assumption.
\begin{assumption}\label{assump:nozerodiv}
    For all $\theta\in\Theta$ and $b\in B$, $\sum_{i=1}^{n}b_{i}f_{i}(\theta)>0$. 
\end{assumption}
The following theorem obtained just by the compactness of $A$ and $B$ and the continuity of $f_{i}$.
\begin{theorem}\label{thm:mono:nonstrict}
    Under Assumption \ref{assump:nozerodiv}, $\mathfrak{R}_{A,B}(\theta_{\varphi}^{\star})$ is non-increasing in $\varphi\in[0,1)$, that is, we have $\mathfrak{R}_{A,B}(\theta_{\varphi}^{\star})\le \mathfrak{R}_{A,B}(\theta_{0}^{\star})$ for any $\varphi\in (0,1)$.
\end{theorem}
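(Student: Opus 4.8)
The plan is to reformulate \eqref{eq:problem:scalarized} as a minimization of a difference over $\theta$ and then exploit the elementary sensitivity inequalities that minimizers of a one-parameter family satisfy. Write $F(\theta):=\max_{a\in A}\sum_i a_i f_i(\theta)$ and $G(\theta):=\min_{b\in B}\sum_i b_i f_i(\theta)$, so that $\mathfrak{R}_{A,B}(\theta)=F(\theta)/G(\theta)$ with $G(\theta)>0$ by Assumption \ref{assump:nozerodiv}. Since $\max_{\lambda\in\LASET}\sum_i\lambda_i f_i(\theta)=(1-\varphi)^{-1}\bigl(F(\theta)-\varphi G(\theta)\bigr)$ and the factor $(1-\varphi)^{-1}>0$ does not affect the minimizing $\theta$, the point $\theta_\varphi^\star$ is a minimizer of $\theta\mapsto F(\theta)-\varphi G(\theta)$ over $\Theta$. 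Compactness of $A,B$ and continuity of the $f_i$ guarantee that $F,G$ are continuous and that the minimizers exist.

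Fix $0\le\varphi<\varphi'<1$ and set $\theta:=\theta_\varphi^\star$, $\theta':=\theta_{\varphi'}^\star$. First I would record the two optimality inequalities $F(\theta)-\varphi G(\theta)\le F(\theta')-\varphi G(\theta')$ and $F(\theta')-\varphi'G(\theta')\le F(\theta)-\varphi'G(\theta)$. Abbreviating $\Delta F:=F(\theta')-F(\theta)$ and $\Delta G:=G(\theta')-G(\theta)$, these rearrange to
\[
\varphi\,\Delta G\le \Delta F\le \varphi'\,\Delta G,
\]
and combining the outer bounds gives $\varphi\,\Delta G\le\varphi'\,\Delta G$, whence $\Delta G\ge0$; that is, $G(\theta_\varphi^\star)$ is non-decreasing in $\varphi$. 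This encodes the intuition that raising $\varphi$ pushes the solution toward larger small-loss averages.

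Next, because $G(\theta),G(\theta')>0$, the target inequality $\mathfrak{R}_{A,B}(\theta')\le\mathfrak{R}_{A,B}(\theta)$ is equivalent, after cross-multiplication, to $G(\theta)\,\Delta F\le F(\theta)\,\Delta G$. Using $\Delta F\le\varphi'\Delta G$ together with $G(\theta)\ge0$ and $\Delta G\ge0$ yields
\[
G(\theta)\,\Delta F - F(\theta)\,\Delta G \le \Delta G\bigl(\varphi'G(\theta)-F(\theta)\bigr),
\]
so the proof reduces to showing $F(\theta)\ge\varphi'G(\theta)$, i.e. $\mathfrak{R}_{A,B}(\theta_\varphi^\star)\ge\varphi'$. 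Setting $\varphi=0$ recovers the stated comparison with $\theta_0^\star$, and the degenerate case $\Delta G=0$ is handled by the squeeze $\Delta F=0$.

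I expect this last reduction to be the \emph{main obstacle}, since it is where the real content sits: the monotonicity can fail if the index drops below $\varphi'$ at the optimum. The clean sufficient condition is $\mathfrak{R}_{A,B}\ge1$ everywhere, equivalently $\max_{a\in A}\sum_i a_if_i(\theta)\ge\min_{b\in B}\sum_i b_if_i(\theta)$ for all $\theta$; since $\varphi'<1$ this gives $F(\theta)\ge G(\theta)\ge\varphi'G(\theta)$ and closes the argument. This domination holds automatically whenever $A$ and $B$ share a common weight vector, in particular the uniform weight $n^{-1}\mone_n$, which lies in every $\Delta_\alpha^{n-1}$, so it covers all the examples (the $20{:}20$, Palma, and Atkinson instances). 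I would therefore either record this domination as a standing hypothesis or verify it directly from the concrete ambiguity sets, and then assemble the preceding steps to conclude $\mathfrak{R}_{A,B}(\theta_{\varphi'}^\star)\le\mathfrak{R}_{A,B}(\theta_\varphi^\star)$.
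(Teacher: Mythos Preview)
Your argument is essentially the paper's: both start from the pair of optimality inequalities for $\theta_\varphi^\star$ and $\theta_{\varphi'}^\star$, extract that $F$ and $G$ are non-decreasing and $F-G$ is non-increasing in $\varphi$, and then combine these into the ratio bound. The paper packages the last step as ``$(F-G)/F$ is non-increasing,'' which is algebraically the same cross-multiplication $G\,\Delta F\le F\,\Delta G$ you write down.

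Your flagged obstacle is real, and it is implicitly present in the paper's proof as well: inferring $(F-G)_2/F_2\le (F-G)_1/F_1$ from $F_1\le F_2$ and $(F-G)_2\le(F-G)_1$ requires $(F-G)_1\ge 0$, i.e.\ $\mathfrak{R}_{A,B}(\theta_\varphi^\star)\ge 1$. This holds whenever $A\cap B\ne\varnothing$ (in particular for all the CVaR-type sets $\Delta_\alpha^{n-1}$, which all contain $n^{-1}\mone_n$), but it is not guaranteed by Assumption~\ref{assump:nozerodiv} alone; for example $n=2$, $A=\{(1,0)\}$, $B=\{(0,1)\}$, $\Theta=[0,1]$, $f_1(\theta)=1+\theta$, $f_2(\theta)=2+\tfrac{3}{2}\theta$ gives $\mathfrak{R}_{A,B}(\theta_\varphi^\star)=\tfrac12$ for $\varphi<\tfrac23$ and $\tfrac47>\tfrac12$ for $\varphi>\tfrac23$. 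So your instinct to record $A\cap B\ne\varnothing$ (or directly $\mathfrak{R}_{A,B}\ge 1$) as a standing hypothesis is exactly the right way to close both your argument and the paper's.
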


From this result, we have the motivation to use large $\varphi$ to let $\mathfrak{R}_{A,B}$ small owing to the monotonicity. 
    Note that $\mathfrak{R}_{A,B}(\theta_{\varphi}^{\star})$ is not strictly decreasing in $\varphi$ necessarily; we need further assumptions to derive it.
    A trivial edge case is where $\Theta$ is a singleton, which is not excluded by our assumptions on $\Theta$ (nonemptiness and compactness).
    A more intuitive edge case is where $f_{i}=f_{1}$ for all $i\in[n]$ and then $\mathfrak{R}_{A,B}(\theta_{\varphi}^{\star})=1$ for any $\varphi\in[0,1)$.
    These edge cases motivate us to consider more nontrivial structures for analysis on fairness improvement below.

\subsection{Strong Improvement}\label{sec:mono:strict}
In this section, we demonstrate the stronger result that the weak improvement: the strict monotonicity of $\mathfrak{R}_{A,B}(\theta_{\varphi}^{\star})$ in $\varphi$ similar to \citet{li2020fair}.
We exhibit a sufficient condition such that $\mathfrak{R}_{A,B}(\theta_{\varphi}^{\star})$ is strictly decreasing in $\varphi\in[0,
\overline{\varphi})$ for some $\overline{\varphi}\in(0,1]$.
We set the following ``imbalance'' condition on all the solutions of the problem \eqref{eq:problem:scalarized}.
\begin{assumption}\label{assump:imbalance}
    For some $\overline{\varphi}\in(0,1]$, for any $\varphi\in[0,\overline{\varphi})$, the following hold: (i) $\mathrm{int}(\Theta)\neq\varnothing$ and $\theta_{\varphi}^{\star}\in\mathrm{int}(\Theta)$ hold, (ii) $f_{i}\in \mathcal{C}(\Theta;[0,\infty))\cap\mathcal{C}^{1}(\mathrm{int}(\Theta);[0,\infty))$ holds, and (iii) at least one of the following inequalities holds:
    \begin{equation*}
        \sum_{i=1}^{n}a_{i,\varphi}^{\star}\left.\nabla f_{i}(\theta)\right|_{\theta=\theta_{\varphi}^{\star}}\neq\mathbf{0},\mbox{~~or~~}
        \sum_{i=1}^{n}b_{i,\varphi}^{\star}\left.\nabla f_{i}(\theta)\right|_{\theta=\theta_{\varphi}^{\star}}\neq\mathbf{0}.
    \end{equation*}
\end{assumption}
Assumption \ref{assump:imbalance} excludes cases in which there exist excessively stable solutions; for example, it successfully excludes both of the edge cases where $\Theta$ is a singleton and $f_{i}=f_{1}$ for all $i\in[n]$ in which $\mathfrak{R}_{A,B}(\theta_{\varphi}^{\star})=1$ for all $\varphi\in[0,1)$.
If $n=2$, $A=B=\Delta^{n-1}=\Delta^{1}$, and $f_{i}$ are strongly convex functions whose global minimum points are in $\mathrm{int}(\Theta)$, then Assumption \ref{assump:imbalance} indicates that the global minimum points of $f_{1},f_{2}$ are never identical.
For the case in which $n=2m$ with $m\in\N$, $A=B=\Delta_{.50}^{n-1}$, $f_{i}$ are strongly convex, and their global minimum points are in $\mathrm{int}(\Theta)$, Assumption \ref{assump:imbalance} requires that the global minimum point of the empirical risk $\sum_{i=1}^{n}f_{i}(\theta)$ is not a solution of the problem \eqref{eq:problem:scalarized} for any $\varphi\in[0,\overline{\varphi})$.
$\theta_{\varphi}^{\star}\in\mathrm{int}(\Theta)$ should hold if we consider only small $\overline{\varphi}$, which lets optimization algorithms converge.

\begin{theorem}\label{thm:mono:strict}
    Under { Assumptions \ref{assump:nozerodiv} and \ref{assump:imbalance}},
    $\mathfrak{R}_{A,B}(\theta_{\varphi}^{\star})$ is strictly decreasing in $\varphi\in[0,\overline{\varphi})$.
\end{theorem}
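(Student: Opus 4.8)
The plan is to argue by contradiction, leveraging the weak monotonicity already in hand. Write $M(\theta):=\max_{a\in A}\sum_{i}a_{i}f_{i}(\theta)$ and $m(\theta):=\min_{b\in B}\sum_{i}b_{i}f_{i}(\theta)$, so that $\mathfrak{R}_{A,B}(\theta)=M(\theta)/m(\theta)$ and the problem \eqref{eq:problem:scalarized} reduces, up to the fixed factor $(1-\varphi)^{-1}$, to minimizing the discrepancy $D_{\varphi}(\theta):=M(\theta)-\varphi\,m(\theta)$. Set $g(\varphi):=\min_{\theta\in\Theta}D_{\varphi}(\theta)=D_{\varphi}(\theta_{\varphi}^{\star})$, which is concave in $\varphi$ as a pointwise infimum of the affine maps $\varphi\mapsto M(\theta)-\varphi m(\theta)$. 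A convenient identity is $\mathfrak{R}_{A,B}(\theta_{\varphi}^{\star})=\varphi+g(\varphi)/m(\theta_{\varphi}^{\star})$, obtained by dividing $D_{\varphi}(\theta_{\varphi}^{\star})=g(\varphi)$ by $m(\theta_{\varphi}^{\star})>0$, positivity following from Assumption \ref{assump:nozerodiv} with the compactness of $\Theta$ and continuity of the $f_{i}$. By Theorem \ref{thm:mono:nonstrict} the map $\varphi\mapsto\mathfrak{R}_{A,B}(\theta_{\varphi}^{\star})$ is non-increasing on $[0,\overline{\varphi})$, so it can fail to be strictly decreasing only if it is constant on some nondegenerate subinterval $I=[\varphi_{1},\varphi_{2}]\subseteq[0,\overline{\varphi})$; I would assume this and derive a contradiction.

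First I would reduce a constant ratio on $I$ to constancy of both value functions. From the two optimality inequalities for $D_{\varphi_{1}}$ and $D_{\varphi_{2}}$ (the manipulation underlying Theorem \ref{thm:mono:nonstrict}) one sees that $m(\theta_{\varphi}^{\star})$ is non-decreasing in $\varphi$, and by Danskin's theorem $g'(\varphi)=-m(\theta_{\varphi}^{\star})$ wherever the derivative exists. If $\mathfrak{R}_{A,B}(\theta_{\varphi}^{\star})\equiv c$ on $I$, the identity above gives $g(\varphi)=(c-\varphi)\,m(\theta_{\varphi}^{\star})$; differentiating and substituting $g'=-m(\theta_{\varphi}^{\star})$ forces $(c-\varphi)\tfrac{d}{d\varphi}m(\theta_{\varphi}^{\star})=0$, so $m(\theta_{\varphi}^{\star})\equiv m^{\star}$ is constant on $I$ and hence $M(\theta_{\varphi}^{\star})\equiv c\,m^{\star}=:M^{\star}$ as well. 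Consequently $g$ is affine on $I$, and a fixed solution $\theta^{\star}:=\theta_{\varphi_{1}}^{\star}$ satisfies $D_{\varphi}(\theta^{\star})=M^{\star}-\varphi m^{\star}=g(\varphi)$ for every $\varphi\in I$: a single $\theta^{\star}\in\mathrm{int}(\Theta)$ is a common minimizer of $D_{\varphi}$ across all of $I$, and the active sets $\argmax_{a\in A}\langle a,F\rangle$ and $\argmin_{b\in B}\langle b,F\rangle$ at $F=(f_{i}(\theta^{\star}))_{i}$ do not depend on $\varphi$.

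Next I would invoke the interior first-order condition. By Assumption \ref{assump:imbalance}(i)--(ii), $\theta^{\star}\in\mathrm{int}(\Theta)$ and the $f_{i}$ are $C^{1}$ there, so the saddle structure of \eqref{eq:problem:scalarized} yields stationarity in $\theta$, namely $\sum_{i}a_{i,\varphi}^{\star}\nabla f_{i}(\theta^{\star})=\varphi\sum_{i}b_{i,\varphi}^{\star}\nabla f_{i}(\theta^{\star})$ for each $\varphi\in I$. Combined with Assumption \ref{assump:imbalance}(iii) this forces the small-loss gradient $\sum_{i}b_{i,\varphi}^{\star}\nabla f_{i}(\theta^{\star})$ to be nonzero throughout $I$, since its vanishing would make the large-loss gradient vanish too and violate (iii). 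The contradiction I aim for is that, the active sets at the fixed $\theta^{\star}$ being $\varphi$-independent, the stationarity relation holding across the whole interval pins a single nonzero denominator gradient $G_{b}$ with $\varphi_{1}G_{b}=\varphi_{2}G_{b}$, i.e.\ $G_{b}=\mathbf{0}$, which is absurd; this contradicts the assumed constancy of the ratio.

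The main obstacle is exactly this last collapse. The delicate point is that, although the argmax/argmin sets at $\theta^{\star}$ are $\varphi$-independent, the optimal weights $(a_{\varphi}^{\star},b_{\varphi}^{\star})$ realizing stationarity --- equivalently the representation of $\mathbf{0}$ inside the gradient set $\mathcal{G}_{A}-\varphi\,\mathcal{G}_{B}$, where $\mathcal{G}_{A},\mathcal{G}_{B}$ collect the admissible convex combinations of $\{\nabla f_{i}(\theta^{\star})\}$ --- may a priori be chosen differently for each $\varphi$ when these active sets are not singletons, in which case $\mathbf{0}\in\mathcal{G}_{A}-\varphi\,\mathcal{G}_{B}$ can persist over an interval without any vector being pinned. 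The crux is therefore to show that a single stationary pair can be selected uniformly over $I$, so the relation degenerates to $\varphi_{1}G_{b}=\varphi_{2}G_{b}$; this is where Assumption \ref{assump:imbalance}(iii) (the ``imbalance'' excluding over-stable solutions) must be used in full, for instance through a continuity or selection argument for the dual optimizers, or by ruling out that the whole segment $\{\varphi G_{b}:\varphi\in I\}$ lies in $\mathcal{G}_{A}$. Once this selection is justified the contradiction is immediate, and all the reduction steps above are routine.
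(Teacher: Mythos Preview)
You correctly reduce a hypothetical plateau of $\mathfrak{R}_{A,B}(\theta_\varphi^\star)$ on an interval $I=[\varphi_1,\varphi_2]$ to the statement that a single $\theta^\star:=\theta_{\varphi_1}^\star\in\mathrm{int}(\Theta)$ minimizes $D_\varphi$ for every $\varphi\in I$, and you correctly identify the remaining obstacle: you need a \emph{single} dual pair $(a^\star,b^\star)$ satisfying the interior first-order condition at two distinct values of $\varphi$, while the argmax/argmin sets at $\theta^\star$ need not be singletons. You then leave this ``crux'' unresolved, so the argument as written is incomplete.

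The missing step is not a selection or continuity argument but the saddle-point characterization via strong duality (available here from Lemma~\ref{lem:existence}). Since $\theta^\star$ minimizes $D_{\varphi_2}$ and $(a_{\varphi_1}^\star,b_{\varphi_1}^\star)$ attains the inner $\max$/$\min$ at $\theta^\star$, one has
\[
\sum_i a_{i,\varphi_1}^\star f_i(\theta^\star)-\varphi_2\sum_i b_{i,\varphi_1}^\star f_i(\theta^\star)
=\min_{\theta}\max_{a,b}\Bigl[\cdots\Bigr]
=\max_{a,b}\min_{\theta}\Bigl[\cdots\Bigr]
\ge \min_{\theta}\Bigl[\sum_i a_{i,\varphi_1}^\star f_i(\theta)-\varphi_2\sum_i b_{i,\varphi_1}^\star f_i(\theta)\Bigr],
\]
which forces $\theta^\star$ to minimize $\theta\mapsto\sum_i a_{i,\varphi_1}^\star f_i(\theta)-\varphi_2\sum_i b_{i,\varphi_1}^\star f_i(\theta)$ as well. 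Hence the \emph{same} pair $(a_{\varphi_1}^\star,b_{\varphi_1}^\star)$ yields interior stationarity at both $\varphi_1$ and $\varphi_2$; subtracting gives $\sum_i b_{i,\varphi_1}^\star\nabla f_i(\theta^\star)=\mathbf{0}$ and then $\sum_i a_{i,\varphi_1}^\star\nabla f_i(\theta^\star)=\mathbf{0}$, contradicting Assumption~\ref{assump:imbalance}(iii). The paper's proof is organized around exactly this strong-duality step but more directly: it fixes two points $\varphi<\tilde\varphi$, shows that if both \eqref{eq:mono:proof:1} and \eqref{eq:mono:proof:2} hold with equality then $(\theta_\varphi^\star,a_\varphi^\star,b_\varphi^\star)$ is also a saddle at $\tilde\varphi$, and derives the same gradient contradiction---your Danskin/envelope route to constancy of $M$ and $m$ on an interval is correct but unnecessary for this.
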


Therefore, if the heterogeneity (e.g., non-i.i.d.) of loss functions $\{f_{i}\}$ is sufficiently large (so that Assumption \ref{assump:imbalance} holds), then we only need to let $\varphi$ be positive for better fairness than $\theta_{0}^{\star}$ in previous studies.

\subsection{Simulation Study}\label{sec:mono:sim}

\begin{figure}[htbp]
    \centering
    \includegraphics[width=0.5\hsize]{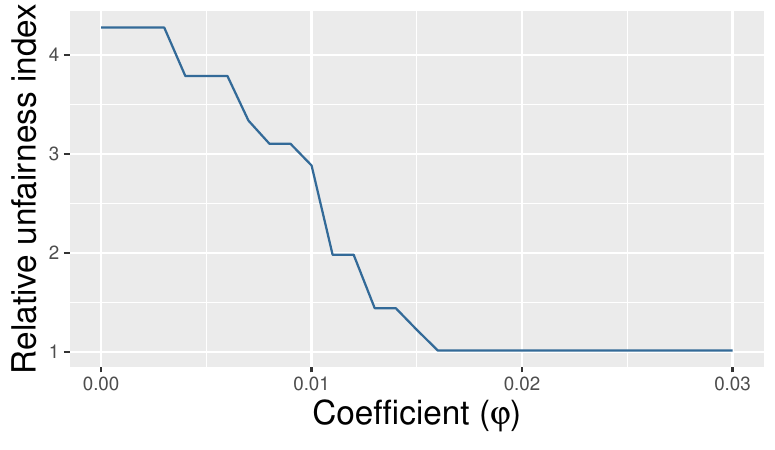}
    \caption{Relative unfairness indices in linear regression for the \texttt{penguins} data with three clients defined by \texttt{Adelie}, \texttt{Chinstrap}, and \texttt{Gentoo}.
    \label{fig:penguin:fairness}}
\end{figure}

We validate the fairness improvement by simulation.
We give a detailed explanation of the setting in Appendix; see Section \ref{sec:mono:sim:detail}.
We analyze a simple regression model with three clients and evaluate three quadratic loss functions of the regression parameter $\theta$ for datasets separated according to the three classes:
$f_{i}(\theta)=\sum_{j=10(i-1)+1}^{10i}(y_{j}-\theta_{1}-\theta_{2}(x_{j})_{(1)}+\theta_{3}(x_{j})_{2})^{2}$ with $i=1,2,3$ ($f_{1}$, $f_{2}$, and $f_{3}$ represent the losses of clients).
Setting $\Delta^{3}=A=B$, we exactly obtain the solutions of \eqref{eq:problem:scalarized} for all $\varphi\in\{0,0.01,\ldots,0.05\}$ and Figure \ref{fig:penguin:fairness} represents the unfairness measure \eqref{eq:fairness} of those solutions.
We observe that the unfairness measure decreases as $\varphi$ increases, which coincides with the conclusion of our theoretical analysis.
Note that the estimator of previous studies on fairness via DRO corresponds to the leftmost point of the figure; therefore, our proposal indeed gives better relative fairness in this case.

\section{Algorithm}\label{sec:convergence}

To solve the proposed problem \eqref{eq:problem:scalarized}, we propose an algorithm, named \textsc{Scaff-PD-IA{$(A,B,\varphi)$}} (\textsc{Scaff-PD} with integrated ambiguity; {we refer to it as \textsc{Scaff-PD-IA} in short when $A$, $B$, and $\varphi$ are obvious from contexts}), which is an extension of the algorithm \textsc{Scaff-PD} by \citet{yu2023scaff} (we do not introduce regularization of $\lambda$ that appears in the work of \citealp{yu2023scaff}; however, in the derivation, we employ it to let the proof be easy to compare to the original proof of \citealp{yu2023scaff}).
\textsc{Scaff-PD-IA} efficiently solves the DRO problem with an integrated ambiguity set  $\LASET$. 
While \textsc{Scaff-PD} allows nonnegative weights on client losses, \textsc{Scaff-PD-IA} deals with possibly negative weights and we give a convergence guarantee for such nontrivial settings.

\subsection{Algorithm Design}

While \textsc{Scaff-PD-IA} is basically an algorithm analogous to the gradient-descent-ascent method displayed in \eqref{eq:algorithm_dro}, a very typical solver for minimax problems, it has two representative features:
(i) the controlled averaging to stabilize the  optimization, and (ii) the local stochastic update approach to make communication and computation be efficient.
We show the pseudo-codes of \textsc{Scaff-PD-IA} and the local update in Algorithms \ref{alg:scaffpd} and \ref{alg:local}.

\paragraph{Controlled Averaging.}
To stabilize optimization, we adopt the controlled averaging strategy of \textsc{Scaffold} \citep{karimireddy2020scaffold} and \textsc{Scaff-PD} \citep{yu2023scaff}.
Controlled averaging is a variance reduction method such that for each round $r \in \N$, we derive  the globally desirable direction 
\begin{align}
    c^{r}=\sum_{i=1}^{n}\lambda_{i}^{r+1}\nabla f_{i}(\theta^{r}),
\end{align}
where $\theta^{r}$ and $\lambda^{r}$ are primal and dual variables in \textsc{Scaff-PD-IA} at $r$-th round.
This direction $c^{r}$ is used to de-bias client-level local updates using $\nabla f_{i}(\theta^{r})$.
This is effective in addressing the heterogeneity of client losses.
The variance reduction is essential in solving our optimization problem, since some $\lambda \in \LASET$ does not belong to $\Delta^{n-1}$ and it makes the gradient-based algorithms without unstable.

\paragraph{Local Stochastic Updates.}
Another feature of \textsc{Scaff-PD-IA} is local stochastic updates for each client.
This approach has been used in  \textsc{FedAvg}, and \textsc{Scaffold} and \textsc{Scaff-PD} as well.
In federated learning, we frequently need to reduce communication costs to make the optimization algorithm feasible.
Frequent access to all the available data should be very prohibitive for some clients, and stochastic local updates (e.g., mini-batch local updates) can address this problem.

We present the local stochastic update with some additional notations.
Specifically, we consider the following concrete structures of the loss function $f_{i}$: $f_{i}\in\mathcal{C}^{1}(\R^{d};[0,\infty))$ for all $i\in[n]$; and for a measurable space $(\mathbb{X},\mathcal{X})$, for some $f_{i}:(\R^{d},\mathcal{B}(\R^{d}))\times (\mathbb{X},\mathcal{X}) \to([0,\infty),\mathcal{B}([0,\infty))) $ and a random variable $\xi_{i}$ on $(\mathbb{X},\mathcal{X})$, we assume to have the form
\begin{equation*}
    f_{i}(\theta)=\Ep[f_{i}(\theta,\xi_{i})],~\mbox{and}~ \nabla f_{i}(\theta)=\Ep[\nabla_{\theta} f_{i}(\theta,\xi_{i})].
\end{equation*}
Moreover, we set the number of local steps $J\ge 2$ and a triple array of random variables $\{\xi_{i,j}^{r}:i\in[n],j\in[J]_{0},r\in\N\}$ and assume that $\{\xi_{i,j}^{r}:j\in[J]_{0},r\in\N\}\sim^{\mathrm{i.i.d.}}\mathcal{L}(\xi_{i})$ and it is independent of $\{\xi_{i',j}^{r}:j\in[J]_{0},r\in\N\}$ if $i\neq i'$.
Using the setup model, the local stochastic update adjusts the parameters in the form described in Algorithm \ref{alg:local}.

\begin{algorithm}
\caption{\textsc{Scaff-PD-IA}$(A,B,\varphi)$}\label{alg:scaffpd}
        \begin{algorithmic}
        \Require Set model parameters $(\theta^{1},\lambda^{1})=(\theta^{0},\lambda^{0})$
        \For{$r \gets 1$ to $R$}
            \State Set hyperparameters $\{\tau_{r},\sigma_{r},\varsigma_{r},\eta_{r}\}$ 
            \For{$i \gets 1$ to $n$}
                \State $L_{i}^{r}\gets f_{i}(\theta^{r})$, $c_{i}^{r}\gets\nabla f_{i}(\theta^{r}, \xi_{i,0})$, send $(L_{i}^{r},c_{i}^{r})$ to the server
            \EndFor
            \State $s^{r}\gets(1+\varsigma_{r})(L_{1}^{r},\ldots,L_{n}^{r})-\varsigma_{r}(L_{1}^{r-1},\ldots,L_{n}^{r-1})$
            \State $(a^{r+1},b^{r+1})\gets\argmin_{(a,b)\in A\times B}\{-\langle s^{r},\frac{1}{1-\varphi}(a-\varphi b)\rangle +\frac{1}{2\sigma_{r}}\|\frac{1}{1-\varphi}(a-\varphi b)-\lambda^{r}\|_{2}^{2}\}$ 
            \State $\lambda^{r+1}\gets\frac{1}{1-\varphi}(a^{r+1}-\varphi b^{r+1})$ \Comment{Update the dual variable $\lambda$}
            \State $c^{r}\gets\sum_{i=1}^{n}\lambda_{i}^{r+1}c_{i}^{r}$, send $c^{r}$ to each client \Comment{Compute the controlled averaging}
            \For{$i \gets 1$ to $n$}
                \State $\Delta u_{i}^{r}\gets\textsc{Local-stochastic-update}(\theta^{r},c_{i}^{r},c^{r},r)$, send $\Delta u_{i}^{r}$ to the server
            \EndFor
            \State $\theta^{r+1}\gets\argmin_{\theta\in\Theta}\{\langle\sum_{i=1}^{N}\lambda_{i}^{r+1}\Delta u_{i}^{r},\theta\rangle+\frac{1}{2\tau_{r}}\|\theta-\theta^{r}\|_{2}^{2}\}$ \Comment{Update the primal variable $\theta$}
        \EndFor
        \Ensure $(\theta^{R+1},\lambda^{R+1})$
    \end{algorithmic}
    \end{algorithm}
    
\begin{algorithm}
    \caption{\textsc{Local-stochastic-update}$(\theta,c_i,c,r)$ \citep{yu2023scaff}}\label{alg:local}
    \begin{algorithmic}
        \Require Set optimization hyperparameter $(\eta_{r},J)$ and model parameters $(\theta,c_{i},c)$
        \State $u_{i,0}=\theta$
        \For{$j \gets 1$ to $J$}
            \State $u_{i,j}\gets u_{i,j-1}-\eta_{r}(\nabla f_{i}(u_{i,j-1},\xi_{i,j})-c_{i}+c)$ \Comment{SGD with controlled averaging}
        \EndFor
        \State $\Delta u_{i}\gets (\theta-u_{i,J})/(\eta_{r}J)$
        \Ensure $\Delta u_{i}$
    \end{algorithmic}

\end{algorithm}

\subsection{Theoretical Analysis}

We perform a theoretical analysis of the convergence of the proposed algorithm \textsc{Scaff-PD-IA}. In particular, we study the convergence speed of the parameters that converge to the optimal value in terms of the number of updates. For the theoretical analysis, we follow the settings of existing studies \citep{yu2023scaff,deng2020distributionally} and consider the case where the loss function is strongly convex.

\subsubsection{Assumptions}
Let us set the following assumptions on the loss functions $f_i(\theta)$ and its stochastic variant $f_i(\theta,\xi_{i,j})$.
\begin{assumption}\label{assump:sets}
    $\Theta\subset\R^{d}$ and $\LASET\subset\{h\in\R^{n};\langle h,\mone_n\rangle=1\}$ are convex.
\end{assumption}

\begin{assumption}[uniform smoothness and convexity of $\{f_{i}\}$]\label{assump:risk}
$\{f_{i}:i\in[n]\}\subset\mathcal{C}^{1}(\R^{d};[0,\infty))$. Moreover,
    for some $M_{f},m_{f}>0$, for any $i\in[n]$, $x,y\in\R^{d}$, it holds that
    \begin{equation*}
        \frac{m_{f}}{2}\left\|x-y\right\|_{2}^{2}\le f_{i}\left(x\right)-f_{i}\left(y\right)-\left\langle \nabla f_{i}\left(y\right),x-y\right\rangle\le\frac{M_{f}}{2}\left\|x-y\right\|_{2}^{2}.
    \end{equation*}
\end{assumption}

\begin{assumption}[stochastic gradients]\label{assump:sg}
    For some $\delta_{g}\ge 0$, for all $i\in[n]$, $j\in [J]_{0}$, $r\in\N$, and $\theta\in\R^{d}$,
    \begin{equation*}
        \Ep\left[\left\|\nabla_{\theta} f_{i}\left(\theta,\xi_{i,j}^{r}\right)-\Ep\left[\nabla_{\theta} f_{i}\left(\theta,\xi_{i,j}^{r}\right)\right]\right\|_{2}^{2}\right]\le \delta_{g}^{2}, 
    \end{equation*}
\end{assumption}

\begin{assumption}\label{assump:smoothInBoth}
    For some $L_{\lambda\theta}>0$, for all $\theta,\theta'\in\Theta$, $\|(f_{1}(\theta),\ldots,f_{n}(\theta))-(f_{1}(\theta'),\ldots,f_{n}(\theta'))\|_{2}\le L_{\lambda\theta}\|\theta-\theta'\|_{2}$ holds.
\end{assumption}

\begin{assumption}[large parameter space]\label{assump:noprojection}
    $\theta^{r}-\tau_{r}\sum_{i=1}^{n}\lambda_{i}^{r+1}\Delta u_{i}^{r}\in\Theta$ holds, almost surely.
\end{assumption}

Note that Assumption \ref{assump:smoothInBoth} is always satisfied by $L_{\lambda\theta}\le \sqrt{n}M_{f}\mathrm{diam}(\Theta)$ under Assumption \ref{assump:risk}.
Though Assumption \ref{assump:noprojection} is not a weak condition, we can easily give reasonable sufficient conditions for it such as uniform dissipativity and smoothness; see Appendix \ref{appendix:asbounded}.
Note that we need Assumption \ref{assump:noprojection} not because of our $\LASET$ setting; it is to justify the proof of Lemma A.3 of \citet{yu2023scaff}, which is used in our proof as well.

\subsubsection{Convergence Statement}

We give the following convergence guarantee for \textsc{Scaff-PD-IA}. 
which is the same as that of \citet{yu2023scaff} with $\LASET\subset\Delta^{n-1}$.
As a preparation, we consider a parameter \( \theta^\star \in \Theta \), which is the solution to the minimax optimization problem \eqref{eq:problem:scalarized}. 
The details of \( \theta^\star \), as well as its uniqueness, are demonstrated in Proposition \ref{prop:unique:general} in Section \ref{appendix:uniqune}.

\begin{theorem}[Convergence of \textsc{Scaff-PD-IA}]\label{thm:scaffpd}
    Suppose that Assumptions \ref{assump:sets}--\ref{assump:noprojection} and $\|\LASET\|_{1}<1+2m_{f}/M_{f}$ hold.
    Then, there exists a unique element $\theta^{\star}\in\Theta$ and nonempty set $\LASET^{\star}\subset\LASET$ such that $\{(\theta,\lambda)\in\Theta\times\LASET;(\theta,\lambda)\text { is a solution of the problem \eqref{eq:problem:scalarized}}\}=\{\theta^{\star}\}\times\LASET^{\star}$.
    Moreover, for some sequences of positive hyperparameters $\{\sigma_{r}\}_{r\in[R]_{0}},\{\tau_{r}\}_{r\in[R]_{0}},\{\varsigma_{r}\}_{r\in[R]},\{\eta_{r}\}_{r\in[R]}$, and positive constants $C_{1},C_{2}\ge1$, for all $R\in\N$, $\theta^{0}\in\Theta$, $\lambda^{0}\in\LASET$, and $\lambda^{\star}\in\LASET^{\star}$, we have
    \begin{equation}
        \Ep\left[\left\|\theta^{R+1}-\theta^{\star}\right\|_{2}^{2}\right]\le \frac{C_{1}\delta_{g}^{2}}{R}+\frac{C_{2}}{R^{2}}\left(D\left(\theta^{\star},\theta^{0}\right)+D\left(\lambda^{\star},\lambda^{0}\right)+F\left(\theta^{0},\lambda^{\star}\right)-F\left(\theta^{\star},\lambda^{0}\right)\right).
    \end{equation}
\end{theorem}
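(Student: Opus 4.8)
The plan is to write $F(\theta,\lambda):=\sum_{i=1}^{n}\lambda_{i}f_{i}(\theta)$ and exploit that $F$ is linear (hence concave) in $\lambda$ and uniformly strongly convex in $\theta$ over $\LASET$. The uniform strong convexity is the one place where the signed weights demand care, and it is exactly what the hypothesis $\|\LASET\|_{1}<1+2m_{f}/M_{f}$ buys. First I would establish it: fix $\lambda\in\LASET$ and decompose $\lambda=\lambda^{+}-\lambda^{-}$ into positive and negative parts. Since $\langle\lambda,\mone_{n}\rangle=1$, one has $\|\lambda^{+}\|_{1}=(\|\lambda\|_{1}+1)/2$ and $\|\lambda^{-}\|_{1}=(\|\lambda\|_{1}-1)/2$. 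Applying the lower bound of Assumption~\ref{assump:risk} on coordinates of positive weight and the upper bound on those of negative weight yields
\[
    F(x,\lambda)-F(y,\lambda)-\langle\nabla_{\theta}F(y,\lambda),x-y\rangle\ge\frac{\mu(\lambda)}{2}\|x-y\|_{2}^{2},\quad \mu(\lambda)=\frac{(m_{f}+M_{f})-(M_{f}-m_{f})\|\lambda\|_{1}}{2}.
\]
A direct substitution gives $\mu(\lambda)=m_{f}^{2}/M_{f}$ at $\|\lambda\|_{1}=1+2m_{f}/M_{f}$, and since $\mu(\cdot)$ is decreasing in $\|\lambda\|_{1}$, the hypothesis furnishes a uniform modulus $\mu_{0}:=\inf_{\lambda\in\LASET}\mu(\lambda)>m_{f}^{2}/M_{f}>0$. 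Consequently $\theta\mapsto\max_{\lambda\in\LASET}F(\theta,\lambda)$ is $\mu_{0}$-strongly convex on the convex set $\Theta$, hence has a unique minimizer $\theta^{\star}$; the dual maximizers at $\theta^{\star}$ form the nonempty, compact, and (by linearity in $\lambda$) convex set $\LASET^{\star}$, which gives the asserted product structure. This is the content of Proposition~\ref{prop:unique:general}, which I would invoke.

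For the convergence bound I would adapt the primal--dual analysis of \citet{yu2023scaff} to the objective $F$, the only structural changes being the use of $\mu_{0}$ in place of $m_{f}$ and the appearance of $\|\LASET\|_{1}$ in the constants. The two prox-steps of Algorithm~\ref{alg:scaffpd} yield, via the three-point identity, a dual inequality controlling $\langle s^{r},\lambda^{r+1}-\lambda\rangle$ and a primal inequality controlling $\langle\sum_{i}\lambda_{i}^{r+1}\Delta u_{i}^{r},\theta^{r+1}-\theta\rangle$ in terms of squared increments and the distances $D(\cdot,\cdot)$. The extrapolated loss vector $s^{r}=(1+\varsigma_{r})L^{r}-\varsigma_{r}L^{r-1}$ is the Chambolle--Pock device that lets the bilinear coupling $F(\theta^{r+1},\lambda)-F(\theta,\lambda^{r+1})$ telescope across rounds, while $\mu_{0}$-strong convexity contributes the term $\tfrac{\mu_{0}}{2}\|\theta-\theta^{\star}\|_{2}^{2}$ that powers the acceleration.

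The technical core is controlling the stochastic local-update estimator $\Delta u_{i}^{r}=(\theta^{r}-u_{i,J})/(\eta_{r}J)$. Here I would reuse Lemma~A.3 of \citet{yu2023scaff} (justified by Assumption~\ref{assump:noprojection}) to show that $\Ep[\sum_{i}\lambda_{i}^{r+1}\Delta u_{i}^{r}]$ approximates $\nabla_{\theta}F(\theta^{r},\lambda^{r+1})$ with bias controlled by $M_{f},\eta_{r},J$, while the controlled-averaging correction $-c_{i}^{r}+c^{r}$ cancels the client-heterogeneity drift so that the residual variance is governed by $\delta_{g}^{2}$ (Assumption~\ref{assump:sg}) and $L_{\lambda\theta}$ (Assumption~\ref{assump:smoothInBoth}). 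The novelty is that the weights $\lambda_{i}^{r+1}$ are signed, so these bounds now carry factors of $\|\lambda^{r+1}\|_{1}\le\|\LASET\|_{1}$ rather than $\sum_{i}\lambda_{i}^{r+1}=1$; they stay finite by compactness of $\LASET$ and are absorbed into $C_{1},C_{2}$. Assembling the per-round inequalities into a Lyapunov functional combining $D(\theta^{\star},\theta^{r})$, $D(\lambda^{\star},\lambda^{r})$, and the gap $F(\theta^{r},\lambda^{\star})-F(\theta^{\star},\lambda^{r})$, then choosing $\{\tau_{r},\sigma_{r},\varsigma_{r},\eta_{r}\}$ exactly as in \citet{yu2023scaff} (increasing momentum $\varsigma_{r}$ with step sizes tuned to $\mu_{0}$), I would telescope the weighted sum over $r\in[R]$ and apply $\mu_{0}$-strong convexity once more to convert the gap bound into the stated bound on $\Ep[\|\theta^{R+1}-\theta^{\star}\|_{2}^{2}]$, with the $1/R$ term carrying $\delta_{g}^{2}$ and the $1/R^{2}$ term the initialization quantities.

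The main obstacle is precisely this signed-weight bookkeeping: in \citet{yu2023scaff} nonnegativity is used both to keep the primal objective convex and to bound the aggregated gradient through the simplex constraint $\sum_{i}\lambda_{i}=1$. Dropping nonnegativity forces one to (a) recover strong convexity via the $\|\LASET\|_{1}$ threshold, which degrades the effective modulus from $m_{f}$ (attained on the simplex) to $\mu_{0}$, and (b) re-derive every variance and drift estimate with $\ell^{1}$-norm bounds on the signed weights. Verifying that these modified constants affect only $C_{1},C_{2}$ and leave the accelerated $1/R^{2}$ scaling intact is the delicate part at the heart of the argument.
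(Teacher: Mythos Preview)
Your proposal is correct and follows essentially the same route as the paper: establish uniform strong convexity in $\theta$ over $\LASET$ via the positive/negative-part decomposition (this is exactly Lemma~\ref{lem:min:weightedMean:general} and Proposition~\ref{prop:unique:general}), then rerun the primal--dual Lyapunov analysis of \citet{yu2023scaff} with every client-aggregate bound rewritten in terms of $\|\lambda^{r+1}\|_{1}\le\|\LASET\|_{1}$ rather than $\sum_{i}\lambda_{i}^{r+1}=1$. The one place the paper differs from your sketch is the very last step: rather than converting a gap bound into a distance bound via strong convexity, the paper carries the term $\bigl(\tfrac{1}{2\tau_{R}}+\tfrac{m_{f}}{8}-\tfrac{M_{f}(\|\LASET\|_{1}-1)}{4}\bigr)\|\theta^{R+1}-\theta\|_{2}^{2}$ explicitly inside the Lyapunov quantity $Z_{R+1}$ and simply drops the nonnegative saddle gap $F(\bar{\theta}^{R+1},\lambda^{\star})-F(\theta^{\star},\bar{\lambda}^{R+1})$ at the end---but this is a cosmetic difference in bookkeeping, not in strategy.
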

This theorem recovers the convergence guarantee of \textsc{Scaff-PD} by \citet{yu2023scaff} even if $\LASET$ is not a subset of $\Delta^{n-1}$.
The rate $\mathcal{O}(1/R)$ by Theorem \ref{thm:scaffpd} is optimal in a minimax sense, that is, there is no stochastic first-order algorithm whose rate of convergence is $o(1/R)$ for all the problems included in a certain class.
Our minimax optimization problem \eqref{eq:problem:scalarized} can be reduced to a minimization problem for strongly convex functions by setting $A=B=\{(1,0,\ldots,0)\}$.
\citet{agarwal2012information} show that the minimax rate of convergence of first-order stochastic algorithms for minimization of strongly convex functions is $\Omega(1/R)$.
Therefore, we immediately notice that the rate $\mathcal{O}(1/R)$ cannot be improved in the minimax sense.

In addition, the rate $\mathcal{O}(1/R^{2})$ by Theorem \ref{thm:scaffpd} under $\delta_{g}=0$ (that is, a deterministic first-order optimization setting) matches the optimal rate of deterministic first-order algorithms for strongly-convex-concave minimax optimization problems given by \citet{ouyang2021lower}.
Theorem 2.2 of \citet{ouyang2021lower} gives the lower complexity bound for deterministic first-order algorithms in our problem setting via rescaling the dual variable in its proof\footnote{Precisely speaking, \textsc{Scaff-PD-IA} has local steps for each client as $J\ge 2$, which are not represented by the oracle setting of \citet{ouyang2021lower}; the detailed investigation is left for a future work.}.

\subsection{Selection of Coefficient $\varphi$} \label{sec:selection_varphi}
We propose an adaptive approach for the selection of $\varphi\in[0,1)$ which minimizes a negative utility function of absolute unfairness and relative unfairness.
Its detailed derivation process will be presented in Section \ref{sec:selection_varphi_appendix}, and we give only its outline here.

We introduce the negative utility function, which balances the parameters for the relative and absolute unfairness as
\begin{align}
    u(\theta,A,B)&:=\left(\max_{a\in A}\sum_{i=1}^{n}a_{i}f_{i}(\theta)\right)^{1/2}\left(\frac{\max_{a\in A}\sum_{i=1}^{n}a_{i}f_{i}(\theta)}{\min_{b\in B}\sum_{i=1}^{n}b_{i}f_{i}(\theta)}\right)^{1/2}=\left(\max_{a\in A}\sum_{i=1}^{n}a_{i}f_{i}(\theta)\right)\left(\min_{b\in B}\sum_{i=1}^{n}b_{i}f_{i}(\theta)\right)^{-1/2}. \label{def:negative_utility}
\end{align}
Next, we approximate this utility function. 
With the optimizer $(\theta^\star_0, a^\star_0, b^\star_0)$ with setting $\varphi = 0$ and associated notation $\mathfrak{c}=\langle (\sum_{i=1}^{n}a_{0,i}^{\star}\nabla^{2}f_{i}(\theta_{0}^{\star}))^{-1}\sum_{i=1}^{n}b_{0,i}^{\star}\nabla f_{i}(\theta_{0}^{\star}),\sum_{i=1}^{n}b_{0,i}^{\star}\nabla f_{i}(\theta_{0}^{\star})\rangle$, $\mathfrak{a}=\sum_{i=1}^{n}a_{0,i}^{\star}f_{i}(\theta_{0}^{\star})$, and 
$\mathfrak{b}=\sum_{i=1}^{n}b_{0,i}^{\star}f_{i}(\theta_{0}^{\star})$, 
, 
we obtain a simplified approximation for the negative utility function:
\begin{equation}
    u(\hat{\theta}_{\varphi}^{\star},A,B)\approx \left(\mathfrak{a}+\frac{\varphi^{2}}{2}\mathfrak{c}\right)\left(\mathfrak{b}+\varphi \mathfrak{c}\right)^{-1/2}.
\end{equation}
Then, the right-hand side of the negative utility is minimized by
\begin{equation}
    \varphi=\frac{\sqrt{\mathfrak{b}^{2}+2(1-0.25)\mathfrak{a}\mathfrak{c}}-\mathfrak{b}}{\mathfrak{c} (1+0.5)}. \label{eq:phi_selected}
\end{equation}
This coefficient \eqref{eq:phi_selected} represents a value that appropriately balances absolute and relative unfairness. It is worth noting that the right-hand side of \eqref{eq:phi_selected} does not depend on $\varphi$, hence we only need to compute can compute $(\theta^\star_0, a^\star_0, b^\star_0)$ using only existing methods, such as \citet{yu2023scaff}, for the case where $\varphi=0$.

\section{Generalization Analysis}\label{sec:generalization}

We analyze the generalization error of a learner trained in the proposed learning problem \eqref{eq:problem:scalarized}, specifically the error of the learner with respect to new data observed independently from the training data. As discussed in \citet{mohri2019agnostic} and \citet{li2020fair}, generalization error plays a crucial role in problems where prediction is of primary importance.

Assume that the optimization objective $L_{\lambda}:\mathcal{H}\to\R$ and its empirical version $L_{\lambda,Nn}:\mathcal{H}\to\R$ with $\lambda\in\LASET$, an integrated ambiguity set $\LASET\subset\{h\in\R^{n}:\langle h,(1,\ldots,1)\rangle =1\}$, and $N\in\N$ such that
\begin{equation*}
    L_{\lambda}(h)=\sum_{i=1}^{n}\lambda_{i}\Ep_{(x,y)\sim P_{i}}[\ell(h(x),y)],\quad L_{\lambda,Nn}(h)=\sum_{i=1}^{n}\frac{\lambda_{i}}{N}\sum_{j=1}^{N}\ell(h(x_{i,j}),y_{i,j}),
\end{equation*}
where $P_{i}$ is the data distribution of the $i$-th client on a measurable space $(\mathbb{X}\times\mathbb{Y},\mathcal{X}\times\mathcal{Y})$, $\mathcal{H}$ is a collection of hypotheses $h:(\mathbb{X},\mathcal{X})\to(\mathbb{X}',\mathcal{X}')$ with a measurable space $(\mathbb{X}',\mathcal{X}')$, $\ell:(\mathbb{X}'\times\mathbb{Y},\mathcal{X}'\times\mathcal{Y})\to([0,M],\mathcal{B}(\R)|_{[0,M]})$ is a loss function with $M>0$, and $\{(x_{i,j},y_{i,j});i\in[n],j\in[N]\}$ is an array of i.i.d.~random variables such that $(x_{i,j},y_{i,j})\sim P_{i}$.
Note that we set $\ell$ is $\{-1,+1\}$-valued and fix the sample size $N$ for all the training attendees for brevity; it is easy to generalize it and let the sample sizes dependent on attendees as \citet{mohri2019agnostic}.

In order to analyze the generalization error, we introduce several technical notions.
We define $\mathcal{G}$ as the family of loss functions associated with $\mathcal{H}$ such as $\mathcal{G}=\{(x,y)\mapsto\ell(h(x),y);h\in\mathcal{H}\}$.
Let us also define the skewness of $\lambda\in\LASET$ and $\LASET$ itself as 
\begin{equation*}
    \mathfrak{s}(\lambda)=n\sum_{i=1}^{n}\left|\lambda_{i}-\frac{1}{n}\right|^{2}+1,\quad\mathfrak{s}(\LASET)=\max_{\lambda\in\LASET}\mathfrak{s}(\lambda).
\end{equation*}
We, then, present an upper bound on the generalization error:
\begin{theorem}[bounds for generalization error]\label{thm:generalization}
    Suppose that $\mathcal{G}$ admits its VC-dimension $\mathrm{VC}(\mathcal{G})<Nn$. Then, for fixed $\epsilon>0$ and $N\in\N$, for any $\delta>0$, with probability at least $1-\delta$, for all $h\in\mathcal{H}$ and $\lambda\in\LASET$, we obtain
    \begin{equation*}
        L_{\lambda}(h)\le L_{\lambda,Nn}(h)+M\epsilon+\sqrt{\frac{\mathfrak{s}(\LASET)}{2Nn}}\left(4\sqrt{\mathrm{VC}(\mathcal{G})}\sqrt{1+\log\left(\frac{Nn}{\mathrm{VC}(\mathcal{G})}\right)}+M\sqrt{\log\left(\frac{\mathcal{N}(\LASET,\|\cdot\|_{1},\epsilon)}{\delta}\right)}\right).
    \end{equation*}
\end{theorem}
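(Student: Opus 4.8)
The plan is to separate the two sources of randomness — the hypothesis $h$ and the weight vector $\lambda$ — controlling each with a different tool and then gluing them with a covering argument over $\LASET$. The starting observation is that the skewness simplifies: since $\sum_{i}\lambda_{i}=1$ for every $\lambda\in\LASET$, a direct expansion gives $\mathfrak{s}(\lambda)=n\sum_{i=1}^{n}\lambda_{i}^{2}$, so the $\ell_{2}$-mass of the empirical weights is $\sum_{i,j}(\lambda_{i}/N)^{2}=\mathfrak{s}(\lambda)/(Nn)$. This single quantity is what will sit in front of every concentration term, and it is precisely the object that replaces the usual ``$1/(Nn)$'' of i.i.d.\ analysis once $\lambda$ leaves the simplex.

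First I would fix $\lambda\in\LASET$ and bound $\Phi_{\lambda}(S):=\sup_{h\in\mathcal{H}}\big(L_{\lambda}(h)-L_{\lambda,Nn}(h)\big)$. Because each summand $\ell(h(x_{i,j}),y_{i,j})$ lies in $[0,M]$, replacing a single sample $(x_{i,j},y_{i,j})$ perturbs $L_{\lambda,Nn}(h)$ by at most $|\lambda_{i}|M/N$, so the bounded-difference coefficients satisfy $\sum_{i,j}(|\lambda_{i}|M/N)^{2}=M^{2}\mathfrak{s}(\lambda)/(Nn)$, and McDiarmid's inequality yields, with probability at least $1-\delta'$,
\[
\Phi_{\lambda}(S)\le\Ep[\Phi_{\lambda}(S)]+M\sqrt{\tfrac{\mathfrak{s}(\lambda)}{2Nn}\log(1/\delta')}.
\]
This produces the $M\sqrt{\log(\,\cdot\,/\delta)}$ deviation term. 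For the mean, standard symmetrization introduces Rademacher signs $\sigma_{i,j}$ and gives $\Ep[\Phi_{\lambda}(S)]\le 2\,\Ep_{S,\sigma}\sup_{g\in\mathcal{G}}\sum_{i,j}(\lambda_{i}/N)\sigma_{i,j}g(x_{i,j},y_{i,j})$; since the $\sigma_{i,j}$ are symmetric, the signs of $\lambda_{i}$ may be absorbed, so only $|\lambda_{i}|$ — equivalently the radius $\|(\lambda_{i}/N)\|_{2}=\sqrt{\mathfrak{s}(\lambda)/(Nn)}$ — matters. Applying the Sauer--Shelah lemma to $\mathcal{G}$, whose growth function on $Nn$ points is at most $(eNn/\mathrm{VC}(\mathcal{G}))^{\mathrm{VC}(\mathcal{G})}$, together with Massart's finite-class lemma (or a Dudley entropy integral) converts this weighted Rademacher complexity into $\sqrt{\mathfrak{s}(\lambda)/(2Nn)}\cdot 4\sqrt{\mathrm{VC}(\mathcal{G})}\sqrt{1+\log(Nn/\mathrm{VC}(\mathcal{G}))}$.

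Second, I would remove the dependence on $\lambda$ by covering. Choosing a minimal $\epsilon$-cover of $\LASET$ in $\|\cdot\|_{1}$ of cardinality $\mathcal{N}(\LASET,\|\cdot\|_{1},\epsilon)$ and applying the fixed-$\lambda$ bound to each cover point with $\delta'=\delta/\mathcal{N}(\LASET,\|\cdot\|_{1},\epsilon)$, a union bound supplies the $\log(\mathcal{N}(\LASET,\|\cdot\|_{1},\epsilon)/\delta)$ inside the square root. For an arbitrary $\lambda$ with nearest cover point $\lambda'$, boundedness of $\ell$ gives $|L_{\lambda}(h)-L_{\lambda'}(h)|\le M\|\lambda-\lambda'\|_{1}\le M\epsilon$, and likewise for the empirical objective, which is exactly the additive $M\epsilon$ slack. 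Finally I would replace $\mathfrak{s}(\lambda)$ by its maximum $\mathfrak{s}(\LASET)$ over the set, yielding a bound uniform in $\lambda$ and completing the statement.

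The main obstacle — and the reason the classical agnostic-federated-learning analysis does not transfer verbatim — is that $\LASET$ is not contained in the simplex, so the weights $\lambda_{i}$ may be negative and of magnitude larger than one. Every place where the classical argument uses $\lambda_{i}\ge0$ or $\|\lambda\|_{1}=1$ must instead be driven by $\|\lambda\|_{2}$; the identity $\mathfrak{s}(\lambda)=n\sum_{i}\lambda_{i}^{2}$ is what makes this uniform across the McDiarmid coefficients, the sign-absorption in symmetrization, and the $\ell_{2}$-radius fed into Massart's lemma. The delicate bookkeeping is verifying that these three appearances of the weight magnitude all collapse to the single factor $\sqrt{\mathfrak{s}(\lambda)/(2Nn)}$, rather than to separate and possibly larger constants.
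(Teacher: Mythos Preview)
Your proposal is correct and follows essentially the same route as the paper: the paper's formal version (Theorem~\ref{thm:generalization_appendix}) first establishes the fixed-$\lambda$ bound via McDiarmid plus symmetrization exactly as in \citet{mohri2019agnostic}, then bounds the weighted Rademacher complexity by Massart's lemma combined with Sauer's lemma, and the covering/union-bound step over $\LASET$ is identical to yours. Your explicit remark that the possibly negative $\lambda_{i}$ are absorbed into the Rademacher signs, so that only $\|\lambda\|_{2}$ (equivalently $\mathfrak{s}(\lambda)=n\sum_{i}\lambda_{i}^{2}$) enters, is precisely the point that distinguishes the present setting from the simplex case and is treated implicitly in the paper's Massart step.
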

This result shows that the upper bound of the generalization error is composed of the VC dimension $\mathrm{VC}(\mathcal{G})$ of the hypothesis set and the maximum skewness $\mathfrak{s}(\LASET)$ of the ambiguity set $\LASET$. 
In addition to the general upper bound based on the VC dimension, the result shows that solving the minimax problem reveals the influence of the size of $\LASET$  on the generalization error. Note that this derivation is almost equivalent to that of \citet{mohri2019agnostic} and \citet{li2020fair}.

In order to gain more specific insights, we conduct a more detailed analysis of the skewness $\mathfrak{s}(\LASET)$ under particular conditions. Specifically, we consider the typical setting where $A = B = \Delta_\alpha^{n-1}$ and examine the upper bound of the skewness $\mathfrak{s}(\LASET)$.
\begin{proposition}\label{prop:skewness}
    Set $A=B=\Delta_{\alpha}^{n-1}$ with $\alpha\in[1/n,1]$ and $\varphi\in[0,1)$.
    Then the skewness of the corresponding $\LASET$ has the following asymptotic representation:
    \begin{align*}
        \mathfrak{s}(\LASET)&= 
        \alpha\left[\left(\frac{1}{(1-\varphi)\alpha}-1\right)^{2}+\left(\frac{\varphi}{(1-\varphi)\alpha}+1\right)^{2}\right]+1+\mathcal{O}(1/n).
    \end{align*}
\end{proposition}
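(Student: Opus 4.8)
The plan is to turn the maximization $\mathfrak{s}(\LASET)=\max_{\lambda\in\LASET}\mathfrak{s}(\lambda)$ into a finite optimization over vertex pairs of $\Delta_{\alpha}^{n-1}\times\Delta_{\alpha}^{n-1}$ and then evaluate the resulting coordinate sum, pushing all discreteness effects into $\mathcal{O}(1/n)$. First I would observe that every $\lambda=\frac{1}{1-\varphi}(a-\varphi b)$ with $(a,b)\in A\times B$ satisfies $\langle\lambda,\mone_n\rangle=\frac{1}{1-\varphi}(1-\varphi)=1$, so that $\mathfrak{s}(\lambda)=1+n\sum_{i=1}^{n}(\lambda_i-1/n)^2$ is a convex quadratic in $\lambda$. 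Since $\LASET$ is the image of the polytope $A\times B$ under an affine map, and a convex function attains its maximum over a polytope at an extreme point, the maximum is realized by some $\lambda$ coming from a pair $(a,b)$ with $a$ and $b$ vertices of $\Delta_{\alpha}^{n-1}$; this reduces the problem to a finite search.

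Next I would use the explicit vertex structure of $\Delta_{\alpha}^{n-1}$: up to a single remainder coordinate carrying mass $1-\lfloor\alpha n\rfloor/(\alpha n)=\mathcal{O}(1/n)$, a vertex places the value $1/(\alpha n)$ on $\lfloor\alpha n\rfloor$ coordinates and $0$ elsewhere, and the remainder coordinate contributes only $\mathcal{O}(1/n)$ to the skewness. Writing $\kappa_a:=\frac{1}{(1-\varphi)\alpha}$ and $\kappa_b:=\frac{\varphi}{(1-\varphi)\alpha}$, a coordinate on which only $a$ is active carries $\lambda_i=\kappa_a/n$, a coordinate on which only $b$ is active carries $\lambda_i=-\kappa_b/n$, and a coordinate in the overlap carries $\lambda_i=1/(\alpha n)$. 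Expanding $\|a-\varphi b\|_2^2$ shows that it is affine and nonincreasing in the overlap size (the coefficient is $-2\varphi/(\alpha n)^2\le 0$), so a maximizer minimizes the overlap: the two supports are disjoint when $\alpha\le 1/2$ and overlap in the forced $(2\alpha-1)n$ coordinates when $\alpha>1/2$.

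With the maximizing configuration fixed I would partition $[n]$ into the $a$-active block and the $b$-active block, each of cardinality $\alpha n+\mathcal{O}(1)$, and collect their contributions to $n\sum_i(\lambda_i-1/n)^2$, namely $n\cdot\alpha n\cdot(\kappa_a/n-1/n)^2=\alpha(\kappa_a-1)^2$ from the first block and $\alpha(\kappa_b+1)^2$ from the second. Adding the definitional constant $1$ and absorbing the remainder-coordinate and floor/ceiling errors into $\mathcal{O}(1/n)$ yields the stated asymptotic expression $\alpha[(\kappa_a-1)^2+(\kappa_b+1)^2]+1+\mathcal{O}(1/n)$. The hard part will be the combinatorial support optimization—rigorously proving that minimal overlap is optimal and tracking the two regimes $\alpha\le 1/2$ and $\alpha>1/2$ separately—together with the uniform control of the discretization errors needed to certify that all neglected terms are genuinely $\mathcal{O}(1/n)$.
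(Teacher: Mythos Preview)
Your approach---reduce the maximum to vertex pairs by convexity of $\mathfrak{s}$, then evaluate coordinate-wise---is exactly what the paper does; the paper's proof is simply terser, writing down a sandwich with $\lfloor n\alpha\rfloor+1$ and $\lfloor n\alpha\rfloor$ summands ``by convexity'' and simplifying.

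There is, however, a genuine accounting gap in your final step (and the same omission appears in the paper's argument). When you ``partition $[n]$ into the $a$-active block and the $b$-active block, each of cardinality $\alpha n+\mathcal{O}(1)$'', you have covered only $2\alpha n+\mathcal{O}(1)$ coordinates. The remaining $(1-2\alpha)n+\mathcal{O}(1)$ coordinates lie in neither support, so $\lambda_i=0$ there, and each contributes $(0-1/n)^2=1/n^2$. Their total contribution to $n\sum_i(\lambda_i-1/n)^2$ is therefore $(1-2\alpha)+\mathcal{O}(1/n)$, which is of order one, not $\mathcal{O}(1/n)$; it cannot be absorbed into the error term. Carrying your computation through faithfully yields
\[
\mathfrak{s}(\LASET)=\alpha\bigl[(\kappa_a-1)^2+(\kappa_b+1)^2\bigr]+(1-2\alpha)+1+\mathcal{O}(1/n),
\]
not the stated ``$+1$''. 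A quick check with $\alpha=0.2,\ \varphi=0$ (where $\mathfrak{s}(\LASET)=n\|\lambda\|_2^2=1/\alpha=5$ but the displayed formula gives $4.4$) confirms the discrepancy. Your instinct that $\alpha>1/2$ needs separate treatment is also well founded: the paper's lower-bound construction (disjoint supports of size $\lfloor\alpha n\rfloor$) is infeasible when $2\lfloor\alpha n\rfloor>n$, and the forced overlap changes the leading term again; the paper's proof does not address this regime.
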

Hence $\mathfrak{s}(\LASET)$ for this $\LASET$ with fixed $\alpha$ and $\varphi$ is independent of $n$ and then $N$ or $n$ has the same effect to decrease the minimax Rademacher complexity as the result of \citet{mohri2019agnostic}.

The skewness parameter $\mathfrak{s}(\LASET)$ is an important feature of the bounds given by Theorem \ref{thm:generalization}.
For fixed $\alpha\in(0,1)$, as $\varphi$ increases, the skewness also increases, which results in the upper bound of the generalization error in Theorem \ref{thm:generalization} also increasing. This comes from the fact that the integrated ambiguity set $\LASET$ gets larger as $\varphi$ increases, which is explained in Section \ref{sec:proposal}.
This result suggests that considering the integrated ambiguity set to achieve relative fairness affects the generalization error because of its high degree of freedom.

\section{Experiments}\label{sec:experiments}

We compare our \textsc{Scaff-PD-IA} algorithm with the \textsc{FedAvg}, \textsc{Scaffold}, \textsc{Stochastic-AFL} (we refer to it as \textsc{SAFL} below in short), \textsc{DRFA}, and \textsc{Scaff-PD} algorithms. 
We display the results of two numerical experiments: the classification of the MNIST dataset and that of the CIFAR10 dataset with neural networks.
In both the tasks, the number of clients $n$ is fixed as $100$.
We generate the heterogeneity of the datasets of clients in the manner same as \citet{yurochkin2019bayesian} (see also \citealp{li2022federated}, \citealp{yu2022tct}, and \citealp{yu2023scaff}).
For more details of the experiments, see Appendix \ref{appendix:experiments}.

\begin{table}[]
    \centering
    \begin{tabular}{ll||ccc|cc}\hline
        & & \multicolumn{3}{c|}{Average Accuracy} & \multicolumn{2}{c}{Unfairness} \\
        Task & Algorithm &  All  & Worst-20\% & Best-20\% & $\mathfrak{R}_{A,B}$ & Gini \\\hline\hline
        \multirow{6}{*}{MNIST} & \textsc{FedAvg}  &.7080 & .4239 & .9376 & 7.966 & .3496 \\
        & \textsc{Scaffold}  &.7425 & .6026 & .8980 & 3.154 & .2047\\
        & \textsc{SAFL} &.8976 & .8295 & .9538 & 2.992 & .2084 \\
        & \textsc{DRFA} &.8969 & .8265 & \textbf{.9613} & 4.378 & .2520 \\
        & \textsc{Scaff-PD} &.8837 & .8204 & .9436 & 2.567 & .1870 \\
        & \textsc{Scaff-PD-IA} &\textbf{.9005} & \textbf{.8483} & .9523 & \textbf{2.483} & \textbf{.1802} \\ \hline
        \multirow{6}{*}{CIFAR10} & \textsc{FedAvg} & .1704 & .0006 & .3937 & 2.135 & .1587\\
        & \textsc{Scaffold} & .1365 & .0000 & .3453 & 1.612 & .1013 \\
        & \textsc{SAFL} & .3560 & \textbf{.2532} & .4711 & 1.187 & .0345 \\
        & \textsc{DRFA} & .3402 & .0615 & \textbf{.5761} & 2.277 & .1671 \\
        & \textsc{Scaff-PD} & .3682 & .2452 & .4792 & 1.187 & .0345 \\
        & \textsc{Scaff-PD-IA} & \textbf{.3693} & .2504 & .4752 & \textbf{1.167} & \textbf{.0307} \\\hline
    \end{tabular}
    \caption{Summary of the experiments for the MNIST dataset and CIFAR10 dataset with 2-layer neural nets.
    The column `Average Accuracy' summarizes the average accuracy of the population or sub-populations; the column ``All'' reports the average validation accuracy of the population and the column ``Worst-20\%'' (resp.~``Best-20\%'') exhibits the average validation accuracy of the worst-20\% (resp.~best-20\%) clients in accuracy.
    The column ``$\mathfrak{R}_{A,B}$'' displays the relative unfairness indices $\mathfrak{R}_{A,B}(\theta)$.
    The column ``Gini' exhibits the Gini coefficients for losses.
    The \textbf{bold} numbers in the accuracy columns  (resp.~unfairness column) represent the largest (resp.~smallest) values within the experiments.}
    \label{tab:experiment:summary}
\end{table}

Table \ref{tab:experiment:summary} summarizes the experiments; we measure the performance of the algorithms by (i) the average validation accuracy of all clients, (ii) the average validation accuracy of the worst-20\% clients in accuracy, (iii) the average validation accuracy of the best-20\% clients in accuracy, (iv) the relative unfairness index (the ratio of the average validation loss of the worst-20\% clients in losses to that of the best-20\% clients), and (v) the Gini coefficient for losses.
Each client splits their data into training data and validation data with the proportion 80:20 and computes its validation accuracy and loss.
The performance (i) is essentially an objective of ordinary federated learning algorithms such as \textsc{FedAvg} and \textsc{Scaffold}.
The performance (ii) is an aim of previous fair federated learning algorithms \textsc{SAFL}, \textsc{DRFA}, and \textsc{Scaff-PD}.
The performance (iv) indicates relative fairness and thus the aim of \textsc{Scaff-PD-IA}.
The Gini coefficient, a very typical unfairness measurement in welfare economics, is not our direct objective, but we introduce it as another unfairness measurement.

We see that \textsc{Scaff-PD-IA} achieves not only the smallest relative unfairness indices but also the best average accuracy of all the clients.
Though the average accuracy of population is not an objective of \textsc{Scaff-PD-IA}, the experiments indicate that the algorithm does not greatly sacrifice the average accuracy of population.
In the MNIST experiment, notably, \textsc{Scaff-PD-IA} performs better in the average accuracy of the worst-20\% than \textsc{SAFL} and \textsc{DRFA}, while it performs worse in the average accuracy of the best-20\% than them.
In total, the remarkable performance in the average accuracy of population of \textsc{Scaff-PD-IA} is not caused by subgroups enjoying good local performance. 
In the CIFAR10 experiment, \textsc{Scaff-PD-IA} dominates \textsc{DRFA} and \textsc{Scaff-PD} in the average accuracy of all and worst-20\% clients and spread between the average accuracy of worst-20\% and that of best-20\%.
On \textsc{SAFL} and \textsc{Scaff-PD-IA}, their performances in best-20\% accuracy are very close, while \textsc{Scaff-PD-IA}'s performance in the average accuracy of all clients is non-negligibly better than that \textsc{SAFL}.
\begin{figure}[htbp]
    \centering
    \includegraphics[width=0.5\linewidth]{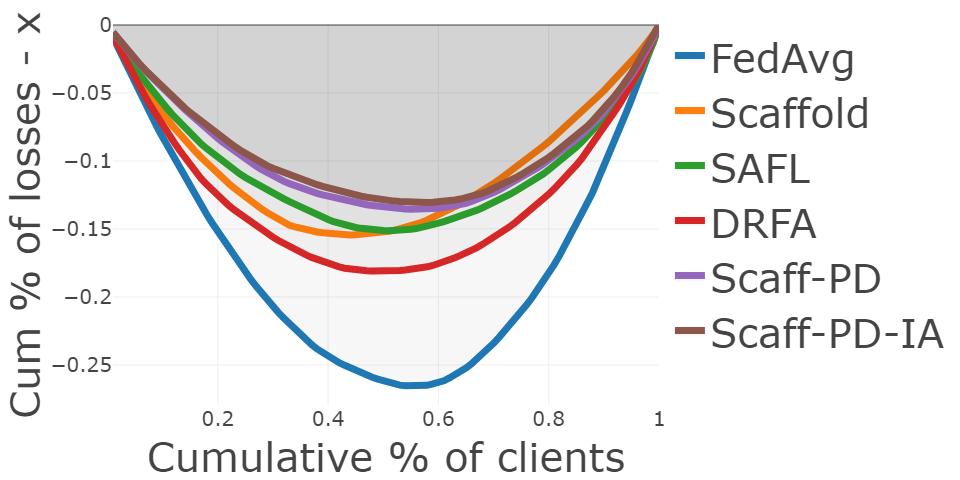}
    \caption{(Transformed) Lorenz curves of losses in the MNIST experiment.
    The smaller the area above the curve, the smaller unfairness it achieves.
    }
    \label{fig:experiment:lorenz}
\end{figure}

We also display the (transformed) Lorenz curves of the losses of the MNIST experiment in Figure \ref{fig:experiment:lorenz}.
The ordinary Lorenz curve \( \ell(x)\) is the cumulative proportion of losses earned by the bottom \( x \) fraction of the clients, and we transform it into the transformed one as $\ell'(x)=\ell(x)-x/2$.
The curve determines the Gini coefficient \( G = 1 - 2 \int_0^1 \ell' (x) \, dx \), which corresponds to a volume of the area above the curve.  
We see that the region above the the Lorenz curve for \textsc{Scaff-PD-IA} (brown) is smaller than most the others.
Although the Lorenz curve for \textsc{Scaffold} (orange) is above the curve for \textsc{Scaff-PD-IA} when $x$ is close to $1$, it is far below when $x$ is close to $0$.
The values of the Gini coefficient determined by the Lorentz curve are displayed in Table \ref{tab:experiment:summary}; \textsc{Scaff-PD-IA} achieves the smallest unfairness in the sense of not only our index but also the Gini coefficient.

\section{Conclusion}

We proposed a novel federated learning framework that prioritizes relative fairness among client subgroups. While the conventional federated learning frameworks focus on absolute fairness, we introduced a new relative fairness index and formulated a minimax optimization problem aimed at minimizing relative unfairness. Our proposed algorithm, Scaff-PD-IA, was designed to balance communication efficiency with minimax-optimal convergence rates, ensuring stable learning outcomes.
Through both theoretical analysis and empirical evaluations, we demonstrated that our framework consistently reduces relative unfairness without sacrificing model performance. 

\appendix

\section{Details of Figure \ref{fig:arc}} \label{sec:figure_explain}
    We give some details of Figure \ref{fig:arc}.
    The left panel of Figure \ref{fig:arc} is a scatter plot of 8 data points (gray balls)
    $D:=\{(\pm 1,0),(\pm1,1/2)/\sqrt{2}, (0,1/2), (\pm1,-1)/\sqrt{2}, (0,-1)\}$ 
    , the solution of $\min_{\theta\in\R^{2}}\max_{v\in D}\|v-\theta\|_{2}$ (pink square), and that of $\min_{\theta\in\R^{2}}(\max_{v\in D}\|v-\theta\|_{2}-\min_{v'\in D}\|v'-\theta\|_{2})$ (blue square). 
    The middle panel of Figure \ref{fig:arc} is a bar plot of the $\ell^{2}$-distances between the gray data points and the pink square point.
    The right panel of Figure \ref{fig:arc} is a bar plot of the $\ell^{2}$-distances between the gray data points and the blue square point.
    We can observe that the blue point distributes the distances more fairly than the pink point (the sums of the distances are 7.081 (pink) and 7.105 (blue) and thus almost the same).
    Absolute fairness merely minimization of the maximum loss, but relative fairness considers minimization of the discrepancy between losses.

\section{Other (Un)Fairness Measures} \label{sec:compare_fairness}

We present some (un)fairness measures related to the relative unfairness index.
For a given sequence of $n$ utilities $\{u_{i}\}_{i\in [n]}\subset(0,\infty)^{n}$, we can represent the 20:20 ratio, the Palma ratio \citep{cobham2013all}, and the Atkinson inequality measure with infinite aversion as 
\begin{align*}
    \text{(20:20 ratio) } &\frac{\frac{1}{U}\max_{\sigma\in S_{n}}\sum_{i=1}^{\lfloor .20\times n\rfloor}u_{\sigma(i)}}{\frac{1}{U}\min_{\sigma\in S_{n}}\sum_{i=1}^{\lfloor .20\times n\rfloor}u_{\sigma(i)}}=\max_{\sigma\in S_{n}}\frac{\frac{1}{\lfloor .20\times n\rfloor}\sum_{i=1}^{\lfloor .20\times n\rfloor}u_{\sigma(i)}}{\frac{1}{\lfloor .20\times n\rfloor}\sum_{i=1}^{\lfloor .20\times n\rfloor}u_{\sigma(n-i+1)}},\\
    \text{(The Palma ratio) } &\frac{\frac{1}{U}\max_{\sigma\in S_{n}}\sum_{i=1}^{\lfloor .10\times n\rfloor}u_{\sigma(i)}}{\frac{1}{U} \min_{\sigma\in S_{n}}\sum_{i=1}^{\lfloor .40\times n\rfloor}u_{\sigma(i)}}\approx\frac{1}{4}\max_{\sigma\in S_{n}}\frac{\frac{1}{\lfloor .10\times n\rfloor}\sum_{i=1}^{\lfloor .10\times n\rfloor}u_{\sigma(i)}}{\frac{1}{\lfloor .40\times n\rfloor}\sum_{i=1}^{\lfloor .40\times n\rfloor}u_{\sigma(n-i+1)}},\\
    \text{(An Atkinson measure) } &1-\frac{\min_{i}u_{i}}{\frac{1}{n} \sum_{i=1}^{n}u_{i}},
\end{align*}
where $U=\sum_{i}u_{i}$ is the total utility.
As losses are opposite to utilities, we can naturally define their versions for losses by taking their reciprocals, and the relative fairness index \eqref{def:rf_index} can represent them for every $\theta$. 

\section{Details of Selection of Coefficient $\varphi$} \label{sec:selection_varphi_appendix}

We provide the details of the selection process of the coefficient $\varphi$, presented in Section \ref{sec:selection_varphi}.

First, we introduce the negative utility function, which balances the parameters for the relative and absolute unfairness.
Second, we obtain a parameter for control the absolute unfairness by training the parameter with $\varphi=0$.
Third, we obtain the approximator of the trained parameter with $\varphi > 0$, by minimizing the second-order Taylor approximation of the target value of the optimization.
Finally, we derive an analytic form of the optimal $\varphi$ which minimize the negative utility.

\paragraph{Step i.}
We introduce the following negative utility function balancing absolute unfairness and relative unfairness as \eqref{def:negative_utility}.
This is a monomial (or, more specifically, geometric mean) of absolute fairness and relative fairness.
Monomial functions are a typical and fundamental form of cost functions, and its optimization itself has been of interest \citep{duffin1962cost,duffin1967geometric,boyd2007tutorial}.
This objective is design to achieve a solution with better relative fairness to ones with similar absolute fairness performance.
In what follows, we select the coefficient $\varphi$ to minimize the negative utility function.

\paragraph{Step ii.}
To study the minimization of the negative utility function $u(\theta,A,B)$, we consider tuples $(\theta^\star_\varphi, a^\star_\varphi, b^\star_\varphi)$ and $(\theta^\star_0, a^\star_0, b^\star_0)$.
These notations follow the optimized parameters defined in Section \ref{sec:proposed_framework}.
Note that the first tuple $(\theta^\star_\varphi, a^\star_\varphi, b^\star_\varphi)$ with $\varphi > 0$ is obtained by our framework and the algorithm, and the second tuple $(\theta^\star_0, a^\star_0, b^\star_0)$ can be obtained by the previous DRO studies such as \citet{yu2023scaff}.

\paragraph{Step iii.}
We approximate the value of $\sum_{i=1}^{n}a_{\varphi,i}^{\star}f_{i}(\hat{\theta}_{\varphi}^{\star})$ and $\sum_{i=1}^{n}b_{\varphi,i}^{\star}f_{i}(\hat{\theta}_{\varphi}^{\star})$ under $\varphi > 0$ by a function of $\theta^\star_0$ without $\varphi$. To achieve this, we perform a second-order Taylor approximation of the objective function \eqref{eq:problem:scalarized} around $\theta^\star_0$.
In other words, under smoothness conditions and assumptions that solutions $(a_{\varphi}^{\star},b_{\varphi}^{\star})$ is close to $(a_{0}^{\star},b_{0}^{\star})$,
\begin{align}
    \sum_{i=1}^{n}a_{\varphi,i}^{\star}f_{i}(\theta)&\approx \sum_{i=1}^{n}a_{0,i}^{\star}f_{i}(\theta_{0}^{\star})+\frac{1}{2}\sum_{i=1}^{n}a_{0,i}^{\star}\left\langle \nabla^{2}f_{i}(\theta_{0}^{\star})(\theta-\theta_{0}^{\star}),\theta-\theta_{0}^{\star}\right\rangle,\\
    \sum_{i=1}^{n}b_{\varphi,i}^{\star}f_{i}(\theta)&\approx \sum_{i=1}^{n}b_{0,i}^{\star}f_{i}(\theta_{0}^{\star})+\sum_{i=1}^{n}b_{0,i}^{\star}\left\langle \nabla f_{i}(\theta_{0}^{\star}),\theta-\theta_{0}^{\star}\right\rangle.
\end{align}
By some calculation, we consider the following approximate optimization problem
\begin{equation}
    \min_{\theta\in\R^{d}}\left(\frac{1}{2}\sum_{i=1}^{n}a_{0,i}^{\star}\left\langle \nabla^{2}f_{i}(\theta_{0}^{\star})(\theta-\theta_{0}^{\star}),\theta-\theta_{0}^{\star}\right\rangle-\varphi\sum_{i=1}^{n}b_{0,i}^{\star}\left\langle \nabla f_{i}(\theta_{0}^{\star}),\theta-\theta_{0}^{\star}\right\rangle\right).
\end{equation}
Then, the solution of this problem is written as  
\(
\hat{\theta}_{\varphi}^{\star} =\theta_{0}^{\star}+\varphi(\sum_{i=1}^{n}a_{0,i}^{\star}\nabla^{2}f_{i}(\theta_{0}^{\star}))^{-1}\sum_{i=1}^{n}b_{0,i}^{\star}\nabla f_{i}(\theta_{0}^{\star}) 
\)
with invertible $\sum_{i=1}^{n}a_{0,i}^{\star}\nabla^{2}f_{i}(\theta_{0}^{\star})$.
As a result, we obtain the following approximation results
\begin{equation}
    \sum_{i=1}^{n}a_{\varphi,i}^{\star}f_{i}(\hat{\theta}_{\varphi}^{\star})\approx \mathfrak{a}+\frac{\varphi^{2}}{2}\mathfrak{c},\mbox{~and~} \sum_{i=1}^{n}b_{\varphi,i}^{\star}f_{i}(\hat{\theta}_{\varphi}^{\star})\approx \mathfrak{b}+\varphi \mathfrak{c}, \label{eq:approximation}
\end{equation}
where
$\mathfrak{c}=\langle (\sum_{i=1}^{n}a_{0,i}^{\star}\nabla^{2}f_{i}(\theta_{0}^{\star}))^{-1}\sum_{i=1}^{n}b_{0,i}^{\star}\nabla f_{i}(\theta_{0}^{\star}),\sum_{i=1}^{n}b_{0,i}^{\star}\nabla f_{i}(\theta_{0}^{\star})\rangle$, $\mathfrak{a}=\sum_{i=1}^{n}a_{0,i}^{\star}f_{i}(\theta_{0}^{\star})$, and
$\mathfrak{b}=\sum_{i=1}^{n}b_{0,i}^{\star}f_{i}(\theta_{0}^{\star})$.
.

\paragraph{Step iv.}
By substituting the approximation results \eqref{eq:approximation} into the negative utility function \eqref{def:negative_utility}, we obtain a simplified expression for the negative utility:
\begin{equation}
    u(\hat{\theta}_{\varphi}^{\star},A,B)\approx \left(\mathfrak{a}+\frac{\varphi^{2}}{2}\mathfrak{c}\right)\left(\mathfrak{b}+\varphi \mathfrak{c}\right)^{-1/2}.
\end{equation}
Then, the right-hand side of the negative utility is minimized by
\begin{equation}
    \varphi=\frac{\sqrt{\mathfrak{b}^{2}+2(1-0.25)\mathfrak{a}\mathfrak{c}}-\mathfrak{b}}{\mathfrak{c} (1+0.5)}.
\end{equation}

\section{Technical Results}\label{appendix:technical}
We show a proof of the following elementary result for clarity and self-containedness.
\begin{lemma}\label{lem:existence}
    For $m,n\in\N$, let $\Theta\subset\R^{m}$ and $\Lambda\subset\R^{n}$ be compact sets, $\phi:\Theta\to\R^{n}$ be a continuous function, and $\psi:\Lambda\to\R$ be a lower semicontinuous function.
    There exist some $\theta^{\star}\in \Theta$ and $\lambda^{\star}\in\Lambda$ such that (i) it holds that
    \begin{equation*}
        \langle \phi(\theta^{\star}),\lambda^{\star}\rangle -\psi(\lambda^{\star})=\min_{\theta\in \Theta}\max_{\lambda\in\Lambda}\left(\langle \phi(\theta^{\star}),\lambda\rangle -\psi(\lambda)\right),
    \end{equation*}
    and (ii) for any $\theta\in\Theta$ and $\lambda\in\Lambda$, 
    \begin{equation*}
        \langle \phi(\theta^{\star}),\lambda\rangle -\psi(\lambda)\le \langle \phi(\theta^{\star}),\lambda^{\star}\rangle -\psi(\lambda^{\star})\le \langle \phi(\theta),\lambda^{\star}\rangle -\psi(\lambda^{\star}).
    \end{equation*}
\end{lemma}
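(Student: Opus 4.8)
The plan is to observe that conditions (i) and (ii) together assert precisely that $(\theta^{\star},\lambda^{\star})$ is a saddle point of $g(\theta,\lambda):=\langle\phi(\theta),\lambda\rangle-\psi(\lambda)$ on $\Theta\times\Lambda$. I would construct the two one-sided optimizers separately, using compactness and semicontinuity, and then glue them into a saddle point via a minimax equality. Concretely, set $h(\theta):=\max_{\lambda\in\Lambda}g(\theta,\lambda)$ and $\underline{h}(\lambda):=\min_{\theta\in\Theta}g(\theta,\lambda)$, and aim to produce $\theta^{\star}\in\argmin_{\theta\in\Theta}h(\theta)$ together with $\lambda^{\star}\in\argmax_{\lambda\in\Lambda}\underline{h}(\lambda)$.

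First I would check that all four extrema are attained. For fixed $\theta$, the map $\lambda\mapsto g(\theta,\lambda)$ is upper semicontinuous, since $\langle\phi(\theta),\lambda\rangle$ is continuous and $-\psi$ is upper semicontinuous by the lower semicontinuity of $\psi$; hence the maximum over the compact set $\Lambda$ is attained and $h$ is well defined. Moreover $h$ is lower semicontinuous on $\Theta$, being a pointwise supremum of the functions $\theta\mapsto g(\theta,\lambda)$, each continuous in $\theta$ because $\phi$ is continuous; so $h$ attains its minimum on the compact set $\Theta$ at some $\theta^{\star}$. Symmetrically, $\underline{h}$ is a pointwise infimum of upper semicontinuous functions, hence upper semicontinuous, and attains its maximum on $\Lambda$ at some $\lambda^{\star}$, while the infimum defining $\underline{h}(\lambda)$ is attained because $\theta\mapsto g(\theta,\lambda)$ is continuous on the compact set $\Theta$. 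This already gives $\max_{\lambda}g(\theta^{\star},\lambda)=\min_{\theta}\max_{\lambda}g$ and $\min_{\theta}g(\theta,\lambda^{\star})=\max_{\lambda}\min_{\theta}g$.

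The crux is the minimax equality $\min_{\theta}\max_{\lambda}g(\theta,\lambda)=\max_{\lambda}\min_{\theta}g(\theta,\lambda)$. The inequality $\ge$ is automatic (weak duality). For the reverse I would invoke Sion's minimax theorem, whose hypotheses are convexity of $\Theta$ and $\Lambda$ together with $g(\theta,\cdot)$ quasiconcave and upper semicontinuous and $g(\cdot,\lambda)$ quasiconvex and lower semicontinuous; the semicontinuity is established above, and the quasiconcavity in $\lambda$ holds because $g$ is affine in $\lambda$ up to the concave term $-\psi$. Writing $v$ for the common value, the bounds $g(\theta^{\star},\lambda^{\star})\le\max_{\lambda}g(\theta^{\star},\lambda)=v$ and $g(\theta^{\star},\lambda^{\star})\ge\min_{\theta}g(\theta,\lambda^{\star})=v$ force $g(\theta^{\star},\lambda^{\star})=v$, which is (i); and then $g(\theta^{\star},\lambda)\le v=g(\theta^{\star},\lambda^{\star})$ for all $\lambda$ and $g(\theta,\lambda^{\star})\ge v=g(\theta^{\star},\lambda^{\star})$ for all $\theta$ yield the two inequalities in (ii).

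I expect the minimax equality to be the main obstacle, as it is exactly where convexity is indispensable. Without it, the construction still delivers (i), and—choosing $\lambda^{\star}$ instead to be an inner maximizer of $g(\theta^{\star},\cdot)$—the left inequality of (ii) holds trivially; but the right inequality, asserting that $\theta^{\star}$ also minimizes $g(\cdot,\lambda^{\star})$, can fail in the absence of convex–concave structure, as a finite matching-pennies–type instance already shows. I would therefore either rely on the convexity of $\Theta$ and $\Lambda$ and the affine-in-$\lambda$ form of the objective, both present in the paper's setting (e.g.\ Assumption \ref{assump:sets}), to apply Sion's theorem, or state these as explicit hypotheses; the remaining attainment steps are then routine.
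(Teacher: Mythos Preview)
Your route via Sion's minimax theorem is genuinely different from the paper's. The paper argues elementarily: it sets $V=\phi(\Theta)$, notes that $\rho(v):=\max_{\lambda\in\Lambda}(\langle v,\lambda\rangle-\psi(\lambda))$ is Lipschitz on the compact image $V$, picks $\theta^{\star}$ with $\phi(\theta^{\star})\in\argmin_{v\in V}\rho(v)$, and (implicitly) takes $\lambda^{\star}\in\argmax_{\lambda}g(\theta^{\star},\lambda)$. That delivers (i) and the left inequality of (ii), but the proof stops there and never addresses the right inequality $g(\theta^{\star},\lambda^{\star})\le g(\theta,\lambda^{\star})$. Your suspicion is therefore on point: under the lemma's stated hypotheses alone the full saddle-point property can fail, and your matching-pennies remark is a valid obstruction to (ii) as written. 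Your Sion-based repair is the standard way to close this, but it needs slightly more than you list. Concavity of $g(\theta,\cdot)$ requires $\psi$ convex, which the lemma does not assume (lower semicontinuity alone does not give it); and quasiconvexity of $g(\cdot,\lambda)=\langle\phi(\cdot),\lambda\rangle$ does not follow from convexity of $\Theta$ or from the affine-in-$\lambda$ structure you cite. In the paper's actual applications these hold because $\psi$ is taken convex and the $f_{i}$ are strongly convex with $\|\LASET\|_{1}$ controlled (Assumption~\ref{assump:risk} together with the computation in Proposition~\ref{prop:unique:general}), not from Assumption~\ref{assump:sets} by itself.
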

\begin{proof}
    Define $V:=\{\phi(\theta);\theta\in\Theta\}\subset\R^{n}$, which is a compact set in $\R^{n}$ due to the continuity of $\phi$ and $\Theta$.
    For any $v\in V$, $\langle v,\lambda\rangle -\psi(\lambda)$ attains its maximum in $\lambda\in\Lambda$ owing to the lower semicontinuity of $-\langle v,\cdot\rangle $ and $\psi(\cdot)$ and the compactness of $\Lambda$.
    Define a function $\rho:V\to\R$ such that $\rho(v)=\max_{\lambda\in\Lambda}(\langle v,\lambda\rangle -\psi(\lambda))$ and then this $\rho$ is a Lipschitz continuous function due to the compactness of $\Lambda$, which attains its minimum in $v\in V$ owing to its continuity and the compactness of $V$.
    By the definition of $V$, there exist $\theta\in\Theta$ such that $\phi(\theta)$ minimizes $\rho(\phi(\theta))$.
    This is the desired conclusion.
\end{proof}
\begin{remark}
Based on this result, the existence of solutions is guaranteed for all the minimax problems in this paper and will not be mentioned below.
\end{remark}

\begin{lemma}\label{lem:max:sign:general}
    For any compact $\Lambda\subset\{\lambda\in\R^{n}:\sum_{i=1}^{n}\lambda_{i}=1\}$, it holds that
    \begin{align*}
       \text{(i)}& \max_{\lambda\in\Lambda}\sum_{i=1}^{n}\lambda_{i}\mathbb{I}_{[0,\infty)}\left(\lambda_{i}\right)=\frac{1}{2}\left(\left\|\Lambda\right\|_{1}+1\right),\\
       \text{(ii)}&\max_{\lambda\in\Lambda}\sum_{i=1}^{n}|\lambda_{i}|\mathbb{I}_{(-\infty,0)}\left(\lambda_{i}\right)=\frac{1}{2}\left(\left\|\Lambda\right\|_{1}-1\right).
    \end{align*}
\end{lemma}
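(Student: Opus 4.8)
The plan is to reduce both identities to a single pointwise algebraic relation that holds for every $\lambda$ on the hyperplane $\{\sum_{i=1}^n \lambda_i = 1\}$, and only afterward to take the maximum over $\Lambda$. First I would introduce the positive and negative parts $\lambda_i^+ := \max(\lambda_i, 0)$ and $\lambda_i^- := \max(-\lambda_i, 0)$, and set $P(\lambda) := \sum_{i=1}^n \lambda_i^+ = \sum_{i=1}^n \lambda_i \mathbb{I}_{[0,\infty)}(\lambda_i)$ and $N(\lambda) := \sum_{i=1}^n \lambda_i^- = \sum_{i=1}^n |\lambda_i| \mathbb{I}_{(-\infty,0)}(\lambda_i)$, so that the two quantities to be maximized in (i) and (ii) are exactly $P(\lambda)$ and $N(\lambda)$.

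The key step uses the two elementary identities $\lambda_i = \lambda_i^+ - \lambda_i^-$ and $|\lambda_i| = \lambda_i^+ + \lambda_i^-$. Summing over $i$ and invoking the hyperplane constraint $\sum_i \lambda_i = 1$ yields the linear system
\begin{equation*}
    P(\lambda) - N(\lambda) = \sum_{i=1}^n \lambda_i = 1, \qquad P(\lambda) + N(\lambda) = \sum_{i=1}^n |\lambda_i|,
\end{equation*}
whose solution is
\begin{equation*}
    P(\lambda) = \frac{1}{2}\Bigl(\sum_{i=1}^n |\lambda_i| + 1\Bigr), \qquad N(\lambda) = \frac{1}{2}\Bigl(\sum_{i=1}^n |\lambda_i| - 1\Bigr).
\end{equation*}
These hold for every $\lambda \in \Lambda$, since $\Lambda$ lies in the hyperplane.

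Finally I would maximize over $\Lambda$. Both $P$ and $N$ are strictly increasing affine functions of the single quantity $\sum_i |\lambda_i|$, so maximizing either over $\Lambda$ reduces to maximizing $\sum_i |\lambda_i|$, whose maximum over the compact set $\Lambda$ is attained (by continuity of $\lambda \mapsto \sum_i |\lambda_i|$ and compactness of $\Lambda$) and equals $\|\Lambda\|_1$ by definition. Substituting gives $\max_{\lambda\in\Lambda} P(\lambda) = \frac{1}{2}(\|\Lambda\|_1 + 1)$ and $\max_{\lambda\in\Lambda} N(\lambda) = \frac{1}{2}(\|\Lambda\|_1 - 1)$, which are precisely (i) and (ii). There is no genuine obstacle here; the only points that require care are that the coupling between the positive and negative parts comes entirely from the constraint $\sum_i \lambda_i = 1$ (without it the two sums would vary independently and the clean formula would fail), and that compactness is what guarantees the maxima are achieved so that $\|\Lambda\|_1$ is well-defined.
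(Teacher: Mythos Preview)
Your proof is correct and follows essentially the same approach as the paper's: both reduce to the pointwise identities $\lambda_i^+ = \tfrac{1}{2}(|\lambda_i|+\lambda_i)$ and $\lambda_i^- = \tfrac{1}{2}(|\lambda_i|-\lambda_i)$, sum them, use the constraint $\sum_i\lambda_i=1$, and then take the maximum over $\Lambda$. The only cosmetic difference is that you phrase the algebra as solving a $2\times 2$ linear system for $P(\lambda)$ and $N(\lambda)$, whereas the paper writes the identities directly; the content is identical.
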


\begin{proof}
    Note that for any $\lambda\in\R^{n}$ with $\sum_{i=1}^{n}\lambda_{i}=1$,
    \begin{align*}
        &\sum_{i=1}^{n}\lambda_{i}\mathbb{I}_{[0,\infty)}\left(\lambda_{i}\right)=\frac{1}{2}\sum_{i=1}^{n}\left(\left|\lambda_{i}\right|+\lambda_{i}\right)=\frac{1}{2}\sum_{i=1}^{n}\left|\lambda_{i}\right|+\frac{1}{2},\\
        &\sum_{i=1}^{n}|\lambda_{i}|\mathbb{I}_{(-\infty,0)}\left(\lambda_{i}\right)=\frac{1}{2}\sum_{i=1}^{n}\left(\left|\lambda_{i}\right|-\lambda_{i}\right)=\frac{1}{2}\sum_{i=1}^{n}\left|\lambda_{i}\right|-\frac{1}{2}.
    \end{align*}
    Hence the statement holds.
\end{proof}

\begin{lemma}\label{lem:min:weightedMean:general}
    For all compact $\Lambda\subset\{\lambda\in\R^{n}:\sum_{i=1}^{n}\lambda_{i}=1\}$, $M,m\in\R$ with $M\ge m$, and $n\in\N$, it holds that
    \begin{align*}
        \min_{\lambda\in\Lambda,x\in[m,M]^{n}}\sum_{i=1}^{n}\lambda_{i}x_{i}= m-\frac{M-m}{2}\left(\left\|\Lambda\right\|_{1}-1\right).
    \end{align*}
\end{lemma}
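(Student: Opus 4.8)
The plan is to exploit the product structure of the feasible region by carrying out the minimization over $x$ first, for each fixed $\lambda$, and only afterwards minimizing the resulting quantity over $\lambda \in \Lambda$.

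First I would fix $\lambda \in \Lambda$. Since the objective $\sum_{i=1}^{n}\lambda_i x_i$ is separable across the coordinates of $x$, with each $x_i$ ranging independently over $[m,M]$, the inner minimization decouples coordinatewise: the term $\lambda_i x_i$ is minimized at $x_i = m$ when $\lambda_i \ge 0$ and at $x_i = M$ when $\lambda_i < 0$ (here $M \ge m$ is used). This yields
\[
    \min_{x\in[m,M]^{n}}\sum_{i=1}^{n}\lambda_{i}x_{i}
    = m\sum_{i=1}^{n}\lambda_{i}\mathbb{I}_{[0,\infty)}(\lambda_{i})
    + M\sum_{i=1}^{n}\lambda_{i}\mathbb{I}_{(-\infty,0)}(\lambda_{i}).
\]

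Next I would rewrite the two sums using the constraint $\sum_{i}\lambda_{i}=1$, reusing the elementary identities established inside the proof of Lemma \ref{lem:max:sign:general}, namely $\sum_{i}\lambda_{i}\mathbb{I}_{[0,\infty)}(\lambda_{i}) = \tfrac12(\|\lambda\|_{1}+1)$ and $\sum_{i}\lambda_{i}\mathbb{I}_{(-\infty,0)}(\lambda_{i}) = -\tfrac12(\|\lambda\|_{1}-1)$. After substitution the inner minimum depends on $\lambda$ only through $\|\lambda\|_{1}$, and equals $\tfrac{m}{2}(\|\lambda\|_{1}+1) - \tfrac{M}{2}(\|\lambda\|_{1}-1)$.

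Finally I would minimize this over $\lambda \in \Lambda$. Regarded as a function of $s = \|\lambda\|_{1}$, it is affine with slope $\tfrac{m-M}{2}\le 0$, hence non-increasing in $s$; therefore the minimum is attained by maximizing $\|\lambda\|_{1}$ over $\Lambda$, that is at $\|\lambda\|_{1} = \|\Lambda\|_{1}$, which is achieved by the compactness of $\Lambda$ and the continuity of $\|\cdot\|_{1}$. Substituting $\|\Lambda\|_{1}$ and rearranging $\tfrac{m}{2}(\|\Lambda\|_{1}+1) - \tfrac{M}{2}(\|\Lambda\|_{1}-1)$ gives the claimed value $m - \tfrac{M-m}{2}(\|\Lambda\|_{1}-1)$. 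There is no genuine obstacle here; the only point needing care is the sign-based bookkeeping, since $\lambda$ may carry negative coordinates (reflecting that $\Lambda$ need not lie in the simplex), and this is precisely what the identities borrowed from Lemma \ref{lem:max:sign:general} dispatch cleanly.
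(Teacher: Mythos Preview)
Your proof is correct and uses essentially the same ingredients as the paper's---the sign-splitting identities from the proof of Lemma~\ref{lem:max:sign:general} and the same choice of minimizing $x$. Your organization is slightly more direct: you compute the inner minimum over $x$ exactly and then optimize over $\lambda$, whereas the paper first normalizes via $z_i=(M-m)^{-1}(x_i-m)$ to establish a lower bound and then separately exhibits an attaining pair $(\lambda,x)$.
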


\begin{proof}
    We just consider $M,m\in\R$ with $M>m$ since the statement under $M=m$ is trivial.
    
    First, we see that the argument of the minimum on the left-hand side of the statement is bounded below by the right-hand side.
    By letting $z_{i}:=(M-m)^{-1}(x_{i}-m)$, we obtain
    \begin{align*}
        \sum_{i=1}^{n}\lambda_{i}x_{i}\ge m-\frac{M-m}{2}\left(\left\|\Lambda\right\|_{1}-1\right) &\iff \sum_{i=1}^{n}\lambda_{i}(x_{i}-m)\ge -\frac{M-m}{2}\left(\left\|\Lambda\right\|_{1}-1\right)\\
        &\iff \sum_{i=1}^{n}\lambda_{i}z_{i}\ge -\frac{1}{2}\left(\left\|\Lambda\right\|_{1}-1\right).
    \end{align*}
    Hence it suffices to see that the last inequality holds for any $z_{i}\in[0,1]$.
    Lemma \ref{lem:max:sign:general} gives that for any $\lambda\in\Lambda$,
    \begin{align*}
        \sum_{i=1}^{n}\lambda_{i}\mathbb{I}_{(-\infty,0)}(\lambda_{i})\ge -\frac{1}{2}\left(\left\|\Lambda\right\|_{1}-1\right).
    \end{align*}
    Since for any $a\in\R$ and $b\in[0,1]$, $ab\ge a\mathbb{I}_{(-\infty,0)}(a)$, it holds that
    \begin{align*}
        \sum_{i=1}^{n}\lambda_{i}z_{i}\ge -\frac{1}{2}\left(\left\|\Lambda\right\|_{1}-1\right).
    \end{align*}

    The lower bound can be achieved by taking $\lambda\in\Lambda$ such that $\|\lambda\|_{1}=\|\Lambda\|_{1}$ and $x_{i}=m+(M-m)\mathbb{I}_{(-\infty,0)}(\lambda_{i})$,
    \begin{align*}
        \sum_{i=1}^{n}\lambda_{i}x_{i}=m+(M-m)\sum_{\lambda_{i}<0}\lambda_{i}\mathbb{I}_{(-\infty,0)}(\lambda_{i})=m-\frac{M-m}{2}\left(\left\|\Lambda\right\|_{1}-1\right).
    \end{align*}
    Therefore, we obtain the conclusion.
\end{proof}

\begin{proposition}\label{prop:ratiobounds}
    Fix a set $X$ and real functions $f:X\to[0,\infty)$ and $g:X\to[m_{g},M_{g}]$ with some $m_{g},M_{g}>0$. Then, for all $\varphi\in[0,1)$ and $x\in X$,
    \begin{equation*}
        \varphi+\frac{1}{M_{g}}\left(f(x)-\varphi g(x)\right)\le \frac{f(x)}{g(x)}\le \varphi+\frac{1}{m_{g}}\left(f(x)-\varphi g(x)\right).
    \end{equation*}
    Moreover, the lower bound is non-decreasing in $\varphi$ for each $x\in X$, and the upper bounds are non-increasing in $\varphi$ for each $x\in X$.
\end{proposition}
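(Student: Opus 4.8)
The plan is to fix $x\in X$, abbreviate $f=f(x)$ and $g=g(x)$, and reduce everything to a single elementary identity together with the monotonicity of $t\mapsto 1/t$ on the positive reals. The identity I would start from is
\[
\frac{f}{g}=\varphi+\frac{f-\varphi g}{g},
\]
which splits the ratio into the constant $\varphi$ plus a term carrying all of the $x$-dependence. Since $g\in[m_{g},M_{g}]$ with $0<m_{g}\le M_{g}$, taking reciprocals gives $1/M_{g}\le 1/g\le 1/m_{g}$, and multiplying this chain by the factor $f-\varphi g$ transfers the bracketing onto $(f-\varphi g)/g$. Adding $\varphi$ to each side then reproduces exactly the two displayed bounds, so the whole main inequality collapses to this one estimate applied pointwise.

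The one point that needs care — and the step I expect to be the real obstacle — is the sign of $f-\varphi g$, because multiplying an inequality by this factor preserves the order only when $f-\varphi g\ge 0$; were it negative, the chain would flip and the claimed bounds would reverse. I would therefore record that $f-\varphi g\ge 0$ in the intended use, where $f=\max_{a\in A}\sum_i a_i f_i(\theta)$ and $g=\min_{b\in B}\sum_i b_i f_i(\theta)$: whenever $A\cap B\neq\varnothing$ (as for $A=\Delta_{\alpha}^{n-1}$, $B=\Delta_{\beta}^{n-1}$, both of which contain the uniform weight) one has $f\ge g$, and since $\varphi\in[0,1)$ and $g\ge 0$ this yields $f\ge g>\varphi g$. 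Under this nonnegativity the bracketing argument goes through verbatim, and this is essentially the only nontrivial content of the main bound.

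For the monotonicity claims I would simply read off the dependence on $\varphi$, which is affine. Writing the lower bound as $\tfrac{f}{M_{g}}+\varphi\bigl(1-\tfrac{g}{M_{g}}\bigr)$, the coefficient of $\varphi$ is $1-g/M_{g}\ge 0$ because $g\le M_{g}$, so the lower bound is non-decreasing in $\varphi$. Symmetrically, the upper bound equals $\tfrac{f}{m_{g}}+\varphi\bigl(1-\tfrac{g}{m_{g}}\bigr)$ with coefficient $1-g/m_{g}\le 0$ since $g\ge m_{g}$, giving the non-increasing behaviour. These two observations require no case analysis and hold for every $x\in X$, so they complete the proposition once the pointwise inequality above is in hand.
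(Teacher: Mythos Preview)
Your approach mirrors the paper's exactly: the identity $f/g=\varphi+(f-\varphi g)/g$, the bracketing $1/M_{g}\le 1/g\le 1/m_{g}$, and the affine rewriting $\tfrac{f}{m_{g}}+\varphi\bigl(1-\tfrac{g}{m_{g}}\bigr)$ for the monotonicity claims are precisely what the paper does. You are in fact more careful than the paper on the sign of $f-\varphi g$: the paper passes from $\tfrac{1}{g}(f-\varphi g)$ to $\tfrac{1}{m_{g}}(f-\varphi g)$ without comment, tacitly assuming $f-\varphi g\ge 0$, which does not follow from the stated hypotheses alone (take $f=0$, $g=m_{g}<M_{g}$, $\varphi=1/2$ to see the lower bound fail). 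Your remark that in the intended application $f\ge g$ whenever $A\cap B\neq\varnothing$, hence $f-\varphi g\ge(1-\varphi)g>0$, is the correct patch and a genuine improvement over the paper's presentation.
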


\begin{proof}
    We only consider the upper bound since parallel discussion immediately holds for the lower bound.
    We see that
    \begin{equation*}
        \frac{f(x)}{g(x)}=\varphi+\frac{1}{g(x)}\left(f(x)-\varphi g(x)\right)\le \varphi+\frac{1}{m_{g}}\left(f(x)-\varphi g(x)\right).
    \end{equation*}
    Rearranging the right-hand side, we have that
    \begin{equation*}
        \varphi+\frac{1}{m_{g}}\left(f(x)-\varphi g(x)\right)=\frac{f(x)}{m_{g}}+\varphi\underbrace{\left(1-\frac{g(x)}{m_{g}}\right)}_{\le 0}.
    \end{equation*}
    This is the desired conclusion.
\end{proof}

\section{Theoretical Analysis of Relative Unfairness Indices}

\subsection{Proof of Theorem \ref{thm:mono:nonstrict}}
For convenience, we introduce the following notation within this section: $F_{A,\max}:\Theta\to\R$ and $F_{B,\min}:\Theta\to\R$ such that
\begin{align}
    F_{A,\max}(\theta)=\max_{a\in A}\sum_{i=1}^{n}a_{i}f_{i}(\theta),\ F_{B,\min}(\theta)=\min_{b\in B}\sum_{i=1}^{n}b_{i}f_{i}(\theta).
\end{align}
\begin{proof}[Proof of Theorem \ref{thm:mono:nonstrict}]
    Fix $\varphi,\tilde{\varphi}\in[0,1)$ with $\varphi<\tilde{\varphi}$. 
    By the definition of $\theta_{\varphi}^{\star},\theta_{\tilde{\varphi}}^{\star}$, the following system of inequalities holds:
    \begin{align*}
        F_{A,\max}(\theta_{\varphi}^{\star})-\varphi F_{B,\min}(\theta_{\varphi}^{\star})&\le F_{A,\max}(\theta_{\tilde{\varphi}}^{\star})-\varphi F_{B,\min}(\theta_{\tilde{\varphi}}^{\star}),\\
        F_{A,\max}(\theta_{\tilde{\varphi}}^{\star})-\tilde{\varphi}F_{B,\min}(\theta_{\tilde{\varphi}}^{\star})&\le F_{A,\max}(\theta_{\varphi}^{\star})-\tilde{\varphi}F_{B,\min}(\theta_{\varphi}^{\star}).
    \end{align*}
    We easily derive 
    \begin{equation}
        F_{A,\max}(\theta_{\varphi}^{\star})\le F_{A,\max}(\theta_{\tilde{\varphi}}^{\star}).\label{eq:mono:proof:1}
    \end{equation}
    Noticing that the system is equivalent to
    \begin{align*}
        (1-\varphi)F_{A,\max}(\theta_{\varphi}^{\star})+\varphi (F_{A,\max}-F_{B,\min})(\theta_{\varphi}^{\star})&\le (1-\varphi)F_{A,\max}(\theta_{\tilde{\varphi}}^{\star})+\varphi (F_{A,\max}-F_{B,\min})(\theta_{\tilde{\varphi}}^{\star}),\\
        (1-\tilde{\varphi})F_{A,\max}(\theta_{\tilde{\varphi}}^{\star})+\tilde{\varphi}(F_{A,\max}-F_{B,\min})(\theta_{\tilde{\varphi}}^{\star})&\le (1-\tilde{\varphi})F_{A,\max}(\theta_{\varphi}^{\star})+\tilde{\varphi}(F_{A,\max}-F_{B,\min})(\theta_{\varphi}^{\star}),
    \end{align*}
    we also obtain
    \begin{align*}
        F_{A,\max}(\theta_{\varphi}^{\star})+\frac{\varphi}{1-\varphi} (F_{A,\max}-F_{B,\min})(\theta_{\varphi}^{\star})&\le F_{A,\max}(\theta_{\tilde{\varphi}}^{\star})+\frac{\varphi}{1-\varphi} (F_{A,\max}-F_{B,\min})(\theta_{\tilde{\varphi}}^{\star}),\\
        F_{A,\max}(\theta_{\tilde{\varphi}}^{\star})+\frac{\tilde{\varphi}}{1-\tilde{\varphi}}(F_{A,\max}-F_{B,\min})(\theta_{\tilde{\varphi}}^{\star})&\le F_{A,\max}(\theta_{\varphi}^{\star})+\frac{\tilde{\varphi}}{1-\tilde{\varphi}}(F_{A,\max}-F_{B,\min})(\theta_{\varphi}^{\star}),
    \end{align*}
    and thus
    \begin{equation}
        F_{A,\max}(\theta_{\tilde{\varphi}}^{\star})-F_{B,\min}(\theta_{\tilde{\varphi}}^{\star})\le F_{A,\max}(\theta_{\varphi}^{\star})-F_{B,\min}(\theta_{\varphi}^{\star})\label{eq:mono:proof:2}.
    \end{equation}
    Therefore,
    \begin{equation*}
        \frac{F_{A,\max}(\theta_{\tilde{\varphi}}^{\star})-F_{B,\min}(\theta_{\tilde{\varphi}}^{\star})}{F_{A,\max}(\theta_{\tilde{\varphi}}^{\star})}\le \frac{F_{A,\max}(\theta_{\varphi}^{\star})-F_{B,\min}(\theta_{\varphi}^{\star})}{F_{A,\max}(\theta_{\varphi}^{\star})},
    \end{equation*}
    which gives
    \begin{equation*}
        \frac{F_{A,\max}(\theta_{\tilde{\varphi}}^{\star})}{F_{B,\min}(\theta_{\tilde{\varphi}}^{\star})}\le \frac{F_{A,\max}(\theta_{\varphi}^{\star})}{F_{B,\min}(\theta_{\varphi}^{\star})}.
    \end{equation*}
    This is the desired conclusion.
\end{proof}

\subsection{Proof of Theorem \ref{thm:mono:strict}}

\begin{proof}[Proof of Theorem \ref{thm:mono:strict}]
    Let $\varphi,\tilde{\varphi}\in[0,\overline{\varphi})$ with $\varphi<\tilde{\varphi}$.
    
    (i) We show that all $(\theta_{\varphi}^{\star},a_{\varphi}^{\star},b_{\varphi}^{\star})$ never solve the minimax problem
    \begin{equation*}
        \min_{\theta\in\Theta}\max_{a\in A,b\in B}\left(\sum_{i=1}^{n}a_{i}f_{i}(\theta)-\tilde{\varphi}\sum_{i=1}^{n}b_{i}f_{i}(\theta)\right)
    \end{equation*}
    by contradiction.
    Assume that for some $(\theta_{\varphi}^{\star},a_{\varphi}^{\star},b_{\varphi}^{\star})$, (a) it holds that 
    \begin{equation*}
        \sum_{i=1}^{n}a_{i,\varphi}^{\star}f_{i}(\theta_{\varphi}^{\star})-\tilde{\varphi}\sum_{i=1}^{n}b_{i,\varphi}^{\star}f_{i}(\theta_{\varphi}^{\star})=\min_{\theta\in\Theta}\max_{a\in A,b\in B}\left(\sum_{i=1}^{n}a_{i}f_{i}(\theta)-\tilde{\varphi}\sum_{i=1}^{n}b_{i}f_{i}(\theta)\right)
    \end{equation*}
    and (b) for any $(\theta,a,b)\in \Theta\times A\times B$,
    \begin{equation*}
        \sum_{i=1}^{n}a_{i}f_{i}(\theta_{\varphi}^{\star})-\tilde{\varphi}\sum_{i=1}^{n}b_{i}f_{i}(\theta_{\varphi}^{\star})\le \sum_{i=1}^{n}a_{i,\varphi}^{\star}f_{i}(\theta_{\varphi}^{\star})-\tilde{\varphi}\sum_{i=1}^{n}b_{i,\varphi}^{\star}f_{i}(\theta_{\varphi}^{\star})\le \sum_{i=1}^{n}a_{i,\varphi}^{\star}f_{i}(\theta)-\tilde{\varphi}\sum_{i=1}^{n}b_{i,\varphi}^{\star}f_{i}(\theta).
    \end{equation*}
    Then it yields that for any $\theta\in\Theta$,
    \begin{align*}
        \sum_{i=1}^{n}a_{i,\varphi}^{\star}f_{i}(\theta_{\varphi}^{\star})-\varphi\sum_{i=1}^{n}b_{i,\varphi}^{\star}f_{i}(\theta_{\varphi}^{\star})&\le \sum_{i=1}^{n}a_{i,\varphi}^{\star}f_{i}(\theta)-\varphi\sum_{i=1}^{n}b_{i,\varphi}^{\star}f_{i}(\theta)\\
        \sum_{i=1}^{n}a_{i,\varphi}^{\star}f_{i}(\theta_{\varphi}^{\star})-\tilde{\varphi}\sum_{i=1}^{n}b_{i,\varphi}^{\star}f_{i}(\theta_{\varphi}^{\star})&\le \sum_{i=1}^{n}a_{i,\varphi}^{\star}f_{i}(\theta)-\tilde{\varphi}\sum_{i=1}^{n}b_{i,\varphi}^{\star}f_{i}(\theta).
    \end{align*}
    On the other hand, for any $\theta\in\Theta$,
    \begin{equation*}
        \sum_{i=1}^{n}a_{i,\varphi}^{\star}f_{i}(\theta)-\tilde{\varphi}\sum_{i=1}^{n}b_{i,\varphi}^{\star}f_{i}(\theta)=\sum_{i=1}^{n}a_{i,\varphi}^{\star}f_{i}(\theta)-\varphi\sum_{i=1}^{n}b_{i,\varphi}^{\star}f_{i}(\theta)+(\varphi-\tilde{\varphi})\sum_{i=1}^{n}b_{i,\varphi}^{\star}f_{i}(\theta)
    \end{equation*}
    and thus
    \begin{equation*}
        \sum_{i=1}^{n}a_{i,\varphi}^{\star}\nabla f_{i}(\theta)-\tilde{\varphi}\sum_{i=1}^{n}b_{i,\varphi}^{\star}\nabla f_{i}(\theta)=\sum_{i=1}^{n}a_{i,\varphi}^{\star}\nabla f_{i}(\theta)-\varphi\sum_{i=1}^{n}b_{i,\varphi}^{\star}\nabla f_{i}(\theta)+(\varphi-\tilde{\varphi})\sum_{i=1}^{n}b_{i,\varphi}^{\star}\nabla f_{i}(\theta).
    \end{equation*}
    By letting $\theta=\theta_{\varphi}^{\star}$ and noting the optimality of $\theta_{\varphi}^{\star}\in\mathrm{int}(\Theta)$, we conclude $\sum_{i=1}^{n}b_{i,\varphi}^{\star}\nabla f_{i}(\theta)=\mathbf{0}$ and thus $\sum_{i=1}^{n}a_{i,\varphi}^{\star}\nabla f_{i}(\theta)=\mathbf{0}$.
    This contradicts the assumption.

    (ii) In the next place, we show that for any $(\theta_{\varphi}^{\star},a_{\varphi}^{\star},b_{\varphi}^{\star})$ and $(\theta_{\tilde{\varphi}}^{\star},a_{\tilde{\varphi}}^{\star},b_{\tilde{\varphi}}^{\star})$, either
    \begin{align*}
        \max_{a\in A}\sum_{i=1}^{n}a_{i}f_{i}(\theta_{\tilde{\varphi}}^{\star})&> \max_{a\in A}\sum_{i=1}^{n}a_{i}f_{i}(\theta_{\varphi}^{\star}),\\
        \max_{a\in A}\sum_{i=1}^{n}a_{i}f_{i}(\theta_{\varphi}^{\star})-\min_{b\in B}\sum_{i=1}^{n}b_{i}f_{i}(\theta_{\varphi}^{\star})&\ge \max_{a\in A}\sum_{i=1}^{n}a_{i}f_{i}(\theta_{\tilde{\varphi}}^{\star})-\min_{b\in B}\sum_{i=1}^{n}b_{i}f_{i}(\theta_{\tilde{\varphi}}^{\star})
    \end{align*}
    or
    \begin{align*}
        \max_{a\in A}\sum_{i=1}^{n}a_{i}f_{i}(\theta_{\tilde{\varphi}}^{\star})&\ge  \max_{a\in A}\sum_{i=1}^{n}a_{i}f_{i}(\theta_{\varphi}^{\star}),\\
        \max_{a\in A}\sum_{i=1}^{n}a_{i}f_{i}(\theta_{\varphi}^{\star})-\min_{b\in B}\sum_{i=1}^{n}b_{i}f_{i}(\theta_{\varphi}^{\star})&> \max_{a\in A}\sum_{i=1}^{n}a_{i}f_{i}(\theta_{\tilde{\varphi}}^{\star})-\min_{b\in B}\sum_{i=1}^{n}b_{i}f_{i}(\theta_{\tilde{\varphi}}^{\star})
    \end{align*}
    holds.
    We prove it by Eqs.~\eqref{eq:mono:proof:1} and \eqref{eq:mono:proof:2} and contradiction.
    We assume both of the following equalities for some $(\theta_{\varphi}^{\star},a_{\varphi}^{\star},b_{\varphi}^{\star})$ and $(\theta_{\tilde{\varphi}}^{\star},a_{\tilde{\varphi}}^{\star},b_{\tilde{\varphi}}^{\star})$:
    \begin{align*}
        \max_{a\in A}\sum_{i=1}^{n}a_{i}f_{i}(\theta_{\tilde{\varphi}}^{\star})&= \max_{a\in A}\sum_{i=1}^{n}a_{i}f_{i}(\theta_{\varphi}^{\star}),\\
        \max_{a\in A}\sum_{i=1}^{n}a_{i}f_{i}(\theta_{\varphi}^{\star})-\min_{b\in B}\sum_{i=1}^{n}b_{i}f_{i}(\theta_{\varphi}^{\star})&= \max_{a\in A}\sum_{i=1}^{n}a_{i}f_{i}(\theta_{\tilde{\varphi}}^{\star})-\min_{b\in B}\sum_{i=1}^{n}b_{i}f_{i}(\theta_{\tilde{\varphi}}^{\star}).
    \end{align*}
    Then, it concludes that (a)
    \begin{align*}
        \sum_{i=1}^{n}a_{i,\varphi}^{\star}f_{i}(\theta_{\varphi}^{\star})-\tilde{\varphi}\sum_{i=1}^{n}b_{i,\varphi}^{\star}f_{i}(\theta_{\varphi}^{\star})&=\sum_{i=1}^{n}a_{i,\tilde{\varphi}}^{\star}f_{i}(\theta_{\tilde{\varphi}}^{\star})-\tilde{\varphi}\sum_{i=1}^{n}b_{i,\tilde{\varphi}}^{\star}f_{i}(\theta_{\tilde{\varphi}}^{\star})\\
        &=\min_{\theta\in\Theta}\max_{a\in A,b\in B}\left(\sum_{i=1}^{n}a_{i}f_{i}(\theta)-\tilde{\varphi}\sum_{i=1}^{n}b_{i}f_{i}(\theta)\right),
    \end{align*}
    and (b) for all $\theta\in\Theta$ and $(a,b)\in A\times B$,
    \begin{equation*}
        \sum_{i=1}^{n}a_{i}f_{i}(\theta_{\varphi}^{\star})-\tilde{\varphi}\sum_{i=1}^{n}b_{i}f_{i}(\theta_{\varphi}^{\star})
        \le \sum_{i=1}^{n}a_{i,\varphi}^{\star}f_{i}(\theta_{\varphi}^{\star})-\tilde{\varphi}\sum_{i=1}^{n}b_{i,\varphi}^{\star}f_{i}(\theta_{\varphi}^{\star})
        \le \sum_{i=1}^{n}a_{i,\varphi}^{\star}f_{i}(\theta)-\tilde{\varphi}\sum_{i=1}^{n}b_{i,\varphi}^{\star}f_{i}(\theta)
    \end{equation*}
    because of the following equality holding true and (a):
    \begin{equation*}
        \sum_{i=1}^{n}a_{i,\varphi}^{\star}f_{i}(\theta_{\varphi}^{\star})-\tilde{\varphi}\sum_{i=1}^{n}b_{i,\varphi}^{\star}f_{i}(\theta_{\varphi}^{\star})=\max_{a\in A, b\in B}\sum_{i=1}^{n}a_{i}f_{i}(\theta_{\varphi}^{\star})-\tilde{\varphi}\sum_{i=1}^{n}b_{i}f_{i}(\theta_{\varphi}^{\star}).
    \end{equation*}
    However, both (a) and (b) never hold simultaneously by (i). Hence Eqs.~\eqref{eq:mono:proof:1} and \eqref{eq:mono:proof:2} give the desired result.

    (iii) Finally, (ii) yield
    \begin{equation*}
        \frac{\max_{a\in A}\sum_{i=1}^{n}a_{i}f_{i}(\theta_{\varphi}^{\star})-\min_{b\in B}\sum_{i=1}^{n}b_{i}f_{i}(\theta_{\varphi}^{\star})}{\max_{a\in A}\sum_{i=1}^{n}a_{i}f_{i}(\theta_{\varphi}^{\star})}
        > \frac{\max_{a\in A}\sum_{i=1}^{n}a_{i}f_{i}(\theta_{\tilde{\varphi}}^{\star})-\min_{b\in B}\sum_{i=1}^{n}b_{i}f_{i}(\theta_{\tilde{\varphi}}^{\star})}{\max_{a\in A}\sum_{i=1}^{n}a_{i}f_{i}(\theta_{\tilde{\varphi}}^{\star})}.
    \end{equation*}
    Therefore,
    \begin{equation*}
        \frac{\max_{a\in A}\sum_{i=1}^{n}a_{i}f_{i}(\theta_{\varphi}^{\star})}{\min_{b\in B}\sum_{i=1}^{n}b_{i}f_{i}(\theta_{\varphi}^{\star})}
        > \frac{\max_{a\in A}\sum_{i=1}^{n}a_{i}f_{i}(\theta_{\tilde{\varphi}}^{\star})}{\min_{b\in B}\sum_{i=1}^{n}b_{i}f_{i}(\theta_{\tilde{\varphi}}^{\star})}.
    \end{equation*}
    We obtain the conclusion.
\end{proof}

\section{Uniqueness of Solutions}\label{appendix:uniqune}
We give the following proposition, which shows that $\Theta^{\star}$ is a singleton under mild conditions. 
\begin{proposition}\label{prop:unique:general}
    Suppose that Assumptions \ref{assump:sets} and \ref{assump:risk} hold. 
    Let $\psi:\LASET\to\R$ be a lower semicontinuous function.
    If (i) $M_{f}>m_{f}$ and $\|\LASET\|_{1}<1+2m_{f}/(M_{f}-m_{f})$ or (ii) $M_{f}=m_{f}$, then there exists a unique $\theta^{\star}\in\Theta$ such that for some $\lambda^{\star}\in\LASET$,
    \begin{align*}
        \sum_{i=1}^{n}\lambda_{i}^{\star}f_{i}(\theta^{\star})-\psi(\lambda^{\star})=\min_{\theta\in\Theta}\max_{\lambda\in\LASET}\left(\sum_{i=1}^{n}\lambda_{i}f_{i}(\theta)-\psi(\lambda)\right),
    \end{align*}
    and for any $\theta\in\Theta$ and $\lambda\in\LASET$, $\sum_{i=1}^{n}\lambda_{i}f_{i}(\theta^{\star})-\psi(\lambda)\le \sum_{i=1}^{n}\lambda_{i}^{\star}f_{i}(\theta^{\star})-\psi(\lambda^{\star})\le \sum_{i=1}^{n}\lambda_{i}^{\star}f_{i}(\theta)-\psi(\lambda^{\star})$.
\end{proposition}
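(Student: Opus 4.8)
The plan is to show that the scalarized objective $g(\theta):=\max_{\lambda\in\LASET}\left(\sum_{i=1}^{n}\lambda_{i}f_{i}(\theta)-\psi(\lambda)\right)$ is strongly convex in $\theta$; since $\Theta$ is convex and compact by Assumption \ref{assump:sets}, a strongly convex $g$ possesses a unique minimizer, and that minimizer must coincide with the primal component of every saddle point. Existence of at least one saddle point $(\theta^{\star},\lambda^{\star})$ is already furnished by Lemma \ref{lem:existence}, taking $\phi(\theta)=(f_{1}(\theta),\ldots,f_{n}(\theta))$ and the given $\psi$, so the whole content of the proposition is uniqueness.

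First I would establish the crucial intermediate claim: for every $\lambda\in\LASET$, the map $\theta\mapsto\sum_{i=1}^{n}\lambda_{i}f_{i}(\theta)$ is strongly convex with a modulus bounded below uniformly in $\lambda$. The subtlety is that $\LASET$ need not lie in the simplex, so some coordinates $\lambda_{i}$ may be negative and the corresponding $f_{i}$ contribute concavity. To handle this I would apply Assumption \ref{assump:risk} termwise: for $\lambda_{i}\ge 0$ the second-order remainder $f_{i}(x)-f_{i}(y)-\langle\nabla f_{i}(y),x-y\rangle$ is bounded below by $\lambda_{i}\frac{m_{f}}{2}\|x-y\|_{2}^{2}$, whereas for $\lambda_{i}<0$ multiplying the smoothness upper bound by a negative number yields the lower bound $\lambda_{i}\frac{M_{f}}{2}\|x-y\|_{2}^{2}$. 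Summing and invoking Lemma \ref{lem:max:sign:general}, which evaluates $\sum_{\lambda_{i}\ge 0}\lambda_{i}=\frac{1}{2}(\|\lambda\|_{1}+1)$ and $\sum_{\lambda_{i}<0}\lambda_{i}=-\frac{1}{2}(\|\lambda\|_{1}-1)$ on the hyperplane $\sum_{i}\lambda_{i}=1$, the curvature lower bound becomes $\frac{1}{4}\bigl[(m_{f}-M_{f})\|\lambda\|_{1}+(m_{f}+M_{f})\bigr]\|x-y\|_{2}^{2}$.

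Second, I would check positivity of this modulus. In case (i), since $\|\lambda\|_{1}\le\|\LASET\|_{1}<1+2m_{f}/(M_{f}-m_{f})=(M_{f}+m_{f})/(M_{f}-m_{f})$, the coefficient $(m_{f}-M_{f})\|\lambda\|_{1}+(m_{f}+M_{f})$ is strictly positive, uniformly in $\lambda$; in case (ii) with $M_{f}=m_{f}$ the bound in Assumption \ref{assump:risk} is an equality, so $\sum_{i}\lambda_{i}f_{i}$ has curvature exactly $\frac{m_{f}}{2}\|x-y\|_{2}^{2}$ regardless of the signs of the weights, which is why no constraint on $\|\LASET\|_{1}$ is needed there. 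Either way there is a constant $\underline{m}>0$ with $\sum_{i}\lambda_{i}\bigl(f_{i}(x)-f_{i}(y)-\langle\nabla f_{i}(y),x-y\rangle\bigr)\ge\frac{\underline{m}}{2}\|x-y\|_{2}^{2}$ for all $\lambda\in\LASET$. Since a pointwise supremum over $\lambda$ (with $-\psi(\lambda)$ merely an additive constant in $\theta$) of $\underline{m}$-strongly convex functions is again $\underline{m}$-strongly convex, $g$ is $\underline{m}$-strongly convex and hence admits a unique minimizer on $\Theta$.

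Finally, I would tie the minimizer of $g$ to the saddle point. From the saddle inequalities the maximum over $\lambda$ at $\theta^{\star}$ is attained at $\lambda^{\star}$, so $g(\theta^{\star})=\sum_{i}\lambda_{i}^{\star}f_{i}(\theta^{\star})-\psi(\lambda^{\star})$, which by Lemma \ref{lem:existence} equals the minimax value $\min_{\theta}g(\theta)$; thus the primal coordinate of every saddle point minimizes $g$, and strong convexity forces this $\theta^{\star}$ to be unique. The main obstacle is the curvature accounting in the first step: negative dual weights render $\sum_{i}\lambda_{i}f_{i}$ possibly nonconvex, and the entire proposition hinges on showing that the threshold $\|\LASET\|_{1}<1+2m_{f}/(M_{f}-m_{f})$ is precisely what keeps the worst-case weighting strongly convex, which is exactly where Lemma \ref{lem:max:sign:general} performs the essential bookkeeping.
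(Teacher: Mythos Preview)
Your proof is correct and follows essentially the same route as the paper: both establish uniform strong convexity of $\theta\mapsto\sum_{i}\lambda_{i}f_{i}(\theta)$ over $\lambda\in\LASET$ by splitting the weights by sign and invoking the identities behind Lemma \ref{lem:max:sign:general} (the paper packages this step via Lemma \ref{lem:min:weightedMean:general}, arriving at the same modulus $\tfrac{1}{4}\bigl[(m_{f}-M_{f})\|\LASET\|_{1}+(m_{f}+M_{f})\bigr]$), and both then deduce uniqueness of $\theta^{\star}$ from that strong convexity. The only cosmetic difference is that you pass through strong convexity of the marginal $g(\theta)=\max_{\lambda}F(\theta,\lambda)$, whereas the paper argues by contradiction with two putative saddle points $(\theta^{\star,1},\lambda^{\star,1})$ and $(\theta^{\star,2},\lambda^{\star,2})$; the substance is identical.
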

\begin{proof}
    (a) We first show that $\sum_{i=1}^{n}\lambda_{i}f_{i}(\theta)-\psi(\lambda)$ is strongly convex in $\theta\in\Theta$ for any $\lambda\in\LASET$.
    We only consider the case (i) since (ii) is parallel.
    Lemma \ref{lem:min:weightedMean:general} along with Assumption \ref{assump:risk} verifies that for any $\theta,\theta'\in\Theta$ and $\lambda\in\LASET$,
    \begin{align*}
        \sum_{i=1}^{n}\lambda_{i}f_{i}\left(\theta\right)-\sum_{i=1}^{n}\lambda_{i}f_{i}\left(\theta'\right)-\sum_{i=1}^{n}\lambda_{i}\left\langle\nabla f_{i}\left(\theta'\right),\theta-\theta'\right\rangle\ge \frac{1}{2}\left(m_{f}-\frac{M_{f}-m_{f}}{2}\left(\left\|\LASET\right\|_{1}-1\right)\right)\left\|\theta-\theta'\right\|_{2}^{2}.
    \end{align*}
    The condition $\|\LASET\|_{1}<1+2m_{f}/(M_{f}-m_{f})$ leads to the desired strong convexity.

    (b) Assumption \ref{assump:sets} and (a) guarantee that for all $\lambda\in\LASET$, there exists a unique $\theta^{\star}(\lambda)\in\Theta$ such that $\sum_{i=1}^{n}\lambda_{i}f_{i}(\theta^{\star}(\lambda))-\psi(\lambda)=\min_{\theta\in\Theta}\sum_{i=1}^{n}\lambda_{i}f_{i}(\theta)-\psi(\lambda)$.
    Besides, consider arbitrary $(\theta^{\star,1},\lambda^{\star,1}),(\theta^{\star,2},\lambda^{\star,2})\in\Theta\times\LASET$ such that for $\ell\in\{1,2\}$,
    \begin{align*}
        \sum_{i=1}^{n}\lambda_{i}^{\star,\ell}f_{i}(\theta^{\star,\ell})-\psi(\lambda^{\star,\ell})=\min_{\theta\in\Theta}\max_{\lambda\in\LASET}\left(\sum_{i=1}^{n}\lambda_{i}f_{i}(\theta)-\psi(\lambda)\right)
    \end{align*}
    and for any $\theta\in\Theta$ and $\lambda\in\LASET$, $\sum_{i=1}^{n}\lambda_{i}f_{i}(\theta^{\star,\ell})-\psi(\lambda)\le \sum_{i=1}^{n}\lambda_{i}^{\star,\ell}f_{i}(\theta^{\star,\ell})-\psi(\lambda^{\star,\ell})\le \sum_{i=1}^{n}\lambda_{i}^{\star,\ell}f_{i}(\theta)-\psi(\lambda^{\star,\ell})$.
    We first assume that $\theta^{\star,1}\neq\theta^{\star,2}$.
    Since $\theta^{\star,1}$ is the unique minimizer of $\sum_{i=1}^{n}\lambda_{i}^{\star,1}f_{i}(\theta)-\psi(\lambda^{\star,1})$, we see that $\sum_{i=1}^{n}\lambda_{i}^{\star,1}f_{i}(\theta^{\star,2})-\psi(\lambda^{\star,1})>\sum_{i=1}^{n}\lambda_{i}^{\star,2}f_{i}(\theta^{\star,2})-\psi(\lambda^{\star,2})$.
    This, however, contradicts the definition of $\lambda^{\star,2}$, the maximizer of $\sum_{i=1}^{n}\lambda_{i}f_{i}(\theta^{\star,2})-\psi(\lambda)$.
    Therefore, $\theta^{\star,1}=\theta^{\star,2}$ and this gives the desired conclusion.
\end{proof}

\section{Convergence Guarantee of \textsc{Scaff-PD-IA}}

We consider the following problem more general than \eqref{eq:problem:scalarized}:
\begin{equation}\label{eq:problem:lambda:general}
    \min_{\theta\in\Theta}\max_{\lambda\in\LASET}\left(\sum_{i=1}^{n}\lambda_{i}f_{i}(\theta)-\psi(\lambda)\right),
\end{equation}
where $\psi$ is a real-valued convex function on $\LASET$ with $M_{\psi}$-smoothness ($M_{\psi}>0$) and $m_{\psi}$-strong-convexity ($m_{\psi}\ge0$; possibly $m_{\psi}=0$), that is, $(m_{\psi}/2)\|\lambda-\lambda'\|_{2}^{2}\le \psi(\lambda)-\psi(\lambda')-\langle \nabla \psi(\lambda'),\lambda-\lambda'\rangle\le (M_{\psi}/2)\|\lambda-\lambda'\|_{2}^{2}$ for all $\lambda,\lambda'\in\LASET$.
The \textsc{Scaff-PD-IA} algorithm with the regularization as \citet{yu2023scaff} is then given as Algorithm \ref{alg:scaffpd:reg}.
Though the representation of the update of $\lambda^{r}$ is different to that of Algorithm \ref{alg:scaffpd}, they are equivalent; the representation of Algorithm \ref{alg:scaffpd} is practical for implementation, and that of Algorithm \ref{alg:scaffpd:reg} is convenient for theoretical analysis.
Let $\Phi(\theta,\lambda):=\sum_{i=1}^{n}\lambda_{i}f_{i}(\theta)$ and $F(\theta,\lambda):=\Phi(\theta,\lambda)-\psi(\lambda)$ respectively.

\begin{algorithm}
\caption{\textsc{Scaff-PD-IA} with regularization}\label{alg:scaffpd:reg}
        \begin{algorithmic}
        \Require Set model parameters $(\theta^{1},\lambda^{1})=(\theta^{0},\lambda^{0})$
        \For{$r \gets 1$ to $R$}
            \State Set hyperparameters $\{\tau_{r},\sigma_{r},\varsigma_{r},\eta_{r}\}$
            \For{$i \gets 1$ to $n$}
                \State $L_{i}^{r}\gets f_{i}(\theta^{r})$, $c_{i}^{r}\gets\nabla f_{i}(\theta^{r},\xi_{i,0})$, send $(L_{i}^{r},c_{i}^{r})$ to the server
            \EndFor
            \State $s^{r}\gets(1+\varsigma_{r})(L_{1}^{r},\ldots,L_{n}^{r})-\varsigma_{r}(L_{1}^{r-1},\ldots,L_{n}^{r-1})$
            \State $\lambda^{r+1}\gets\argmin_{\lambda\in\LASET}\{\psi(\lambda)-\langle s^{r},\lambda\rangle +\frac{1}{2\sigma_{r}}\|\lambda-\lambda^{r}\|_{2}^{2}\}$
            \State $c^{r}\gets\sum_{i=1}^{n}\lambda_{i}^{r+1}c_{i}^{r}$, send $c^{r}$ to each client
            \For{$i \gets 1$ to $n$}
                \State $\Delta u_{i}^{r}\gets\textsc{Local-stochastic-update}(\theta^{r},c_{i}^{r},c^{r},r)$, send $\Delta u_{i}^{r}$ to the server
            \EndFor
            \State $\theta^{r+1}\gets\argmin_{\theta\in\Theta}\{\langle\sum_{i=1}^{N}\lambda_{i}^{r+1}\Delta u_{i}^{r},\theta\rangle+\frac{1}{2\tau_{r}}\|\theta-\theta^{r}\|_{2}^{2}\}$
        \EndFor
        \Ensure $(\theta^{R+1},\lambda^{R+1})$
    \end{algorithmic}
    \end{algorithm}

\subsection{Additional Assumption}
We set the following assumption on $\|\LASET\|_{1}$ and hyperparameters in Algorithm \ref{alg:scaffpd:reg}.

\begin{assumption}\label{assump:hyperparameters}
    Under Assumptions \ref{assump:risk} and \ref{assump:smoothInBoth},
    $\|\LASET\|_{1}<1+\frac{m_{f}}{2M_{f}}$, $\gamma_{0},\tau_{0}>0$, and the hyperparameters $\{\sigma_{r}:r\in[R]_{0}\}, \{\gamma_{r}:r\in[R]_{0}\}, \{\tau_{r}:r\in[R]_{0}\}, \{\varsigma_{r}:r\in[R]\}$, and $\{\eta_{r}:r\in[R]\}$ satisfies the conditions that for some $\beta\ge \max\{1,2\tau_{0}M_{f}\}$ for all $r\in[R]_{0}$,
    \begin{align}
        \sigma_{r}&=\gamma_{r}\tau_{r},\\
        \varsigma_{r+1}&=\sigma_{r}/\sigma_{r+1},\\
        \gamma_{r+1}&=\gamma_{r}\left(1+\left(\frac{m_{f}}{4}-\frac{M_{f}(\|\LASET\|_{1}-1)}{2}\right)\tau_{r}\right),\\
        \tau_{r+1}&=\tau_{r}\sqrt{\gamma_{r}/\gamma_{r+1}},\\
        \eta_{r}&=\tau_{r}/J\beta.
    \end{align}
    Moreover, $\tau_{0},\gamma_{0}$ are sufficiently small so that the following inequalities hold for some $c_{\alpha}\in(0,1)$:
    \begin{align}
        \tau_{0}\le  \min\left\{\frac{1}{1200M_{f}\|\LASET\|_{1}},\frac{18(1+\sqrt{3})}{m_{f}-2M_{f}(\|\LASET\|_{1}-1)}\right\},\ 
        \frac{4L_{\lambda\theta}^{2}\tau_{0}^{2}\gamma_{0}}{c_{\alpha}}+27M_{f}\tau_{0}\le 1.
    \end{align}
\end{assumption}

\subsection{Technical Lemmas}

The following lemma is an extension to Lemma A.1 of \citet{yu2023scaff}.
\begin{lemma}[perturbed strong convexity and smoothness]\label{lem:a1}
For any $L$-smooth $\mu$-strongly-convex $f:\R^{d}\to\R$, for any $x,y,z\in\R^{d}$,
\begin{align*}
     f(z)-f(y)+\frac{\mu}{4}\left\|y-z\right\|_{2}^{2}-L\left\|z-x\right\|_{2}^{2}&\le \left\langle \nabla f(x),z-y\right\rangle \\
     &\le  f(z)-f(y)+\left(L-\frac{\mu}{2}\right)\left\|x-z\right\|_{2}^{2}+L\left\|y-z\right\|_{2}^{2}
\end{align*}
    
\end{lemma}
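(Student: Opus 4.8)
The plan is to reduce the statement to the two defining inequalities of $L$-smoothness and $\mu$-strong-convexity anchored at the base point $x$, and then to reconcile the squared norms that appear. First I would split the inner product as
\[
\langle \nabla f(x),z-y\rangle = \langle \nabla f(x),z-x\rangle - \langle \nabla f(x),y-x\rangle,
\]
so that each piece is a first-order term based at $x$, which is precisely what smoothness and strong convexity control.

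Next I would record the four one-sided estimates obtained by applying, at $u\in\{y,z\}$, the smoothness bound $\langle \nabla f(x),u-x\rangle \ge f(u)-f(x)-\frac{L}{2}\|u-x\|_2^2$ and the strong-convexity bound $\langle \nabla f(x),u-x\rangle \le f(u)-f(x)-\frac{\mu}{2}\|u-x\|_2^2$. For the upper bound on $\langle \nabla f(x),z-y\rangle$ I would combine the strong-convexity estimate for the $z$-term with the smoothness estimate for the subtracted $y$-term, giving
\[
\langle \nabla f(x),z-y\rangle \le f(z)-f(y)-\frac{\mu}{2}\|z-x\|_2^2+\frac{L}{2}\|y-x\|_2^2,
\]
and symmetrically, for the lower bound, I would use smoothness for the $z$-term and strong convexity for the $y$-term to obtain $f(z)-f(y)-\frac{L}{2}\|z-x\|_2^2+\frac{\mu}{2}\|y-x\|_2^2$ as a lower bound.

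The remaining work is purely about norms. For the upper bound I would insert $\|y-x\|_2^2\le 2\|y-z\|_2^2+2\|z-x\|_2^2$, which converts $\frac{L}{2}\|y-x\|_2^2$ into $L\|y-z\|_2^2+L\|z-x\|_2^2$ and collects the $\|z-x\|_2^2$ contributions into the advertised coefficient $L-\frac{\mu}{2}$. For the lower bound I would read the same parallelogram-type estimate the other way, namely $\|y-x\|_2^2\ge \frac{1}{2}\|y-z\|_2^2-\|x-z\|_2^2$, which produces the term $\frac{\mu}{4}\|y-z\|_2^2$ together with an extra $-\frac{\mu}{2}\|x-z\|_2^2$; combining the latter with $-\frac{L}{2}\|z-x\|_2^2$ and invoking $\mu\le L$ (which holds for any function that is simultaneously $L$-smooth and $\mu$-strongly-convex) bounds this below by $-L\|z-x\|_2^2$, matching the statement.

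There is no genuine obstacle here; the only care needed is bookkeeping the directions of the inequalities (which of smoothness versus convexity applies to which term, and which way the triangle-type estimate runs for the upper versus the lower bound) and remembering to use $\mu\le L$ at the very end so that the loose constant $L$, rather than $\frac{L+\mu}{2}$, appears on $\|z-x\|_2^2$. Chaining the two displayed estimates then yields the claim.
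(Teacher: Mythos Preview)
Your argument is correct and mirrors the paper's: the upper bound is derived in exactly the same way (split $\langle\nabla f(x),z-y\rangle$ at $x$, apply strong convexity to the $z$-piece and smoothness to the $y$-piece, then expand $\|y-x\|_2^2\le 2\|y-z\|_2^2+2\|z-x\|_2^2$). For the lower bound the paper simply cites Lemma~5 of \citet{karimireddy2020scaffold}, whereas you supply the symmetric argument explicitly and close with $\mu\le L$; this is the intended proof.
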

\begin{proof}
    The first inequality follows from Lemma 5 of \citet{karimireddy2020scaffold}.

    On the second inequality, note that
    \begin{align*}
        \left\langle \nabla f(x),z-x\right\rangle&\le f(z)-f(x)-\frac{\mu}{2}\left\|x-z\right\|_{2}^{2}\\
        \left\langle \nabla f(x),x-y\right\rangle&\le f(x)-f(y)+\frac{L}{2}\left\|x-y\right\|_{2}^{2}
    \end{align*}
    and thus
    \begin{align*}
        \left\langle \nabla f(x),z-y\right\rangle &\le f(z)-f(y)-\frac{\mu}{2}\left\|x-z\right\|_{2}^{2}+\frac{L}{2}\left\|x-y\right\|_{2}^{2}\\
        &\le f(z)-f(y)-\frac{\mu}{2}\left\|x-z\right\|_{2}^{2}+L\left\|x-z\right\|_{2}^{2}+L\left\|y-z\right\|_{2}^{2}.
    \end{align*}
    Hence the statement holds true.
\end{proof}

Let us define $\mathcal{E}^{r}$ as follows:
\begin{equation*}
    \mathcal{E}^{r}:=\Ep\left[\frac{1}{J}\sum_{i=1}^{N}\sum_{j=1}^{J}\left|\lambda_{i}^{r+1}\right|\left\|u_{i,j-1}^{r}-\theta^{r}\right\|_{2}^{2}\right].
\end{equation*}
We also give notation $g_{i,j}^{r}(\theta)=\nabla_{\theta}f_{i}(\theta,\xi_{i,j}^{r})$ for all $i\in[n]$, $j\in [J]$, $r\in\N$, and $\theta\in\R^{d}$.

The following lemma is an extension to Lemma A.2 of \citet{yu2023scaff}.
\begin{lemma}\label{lem:a2}
    Fix $J\ge 2$.
    If for some $\beta\ge1$, $\tau_{r}=J\eta_{r}\beta$, $4\tau_{r}^{2}M_{f}^{2}\le \beta^{2}$, then
    \begin{align*}
        \mathcal{E}^{r}\le \left\|\LASET\right\|_{1}\left(\frac{24\tau_{r}^{2}}{\beta^{2}}\Ep\left[\left\|\nabla_{\theta}\Phi\left(\theta^{r},\lambda^{r+1}\right)\right\|_{2}^{2}\right]+\frac{24\tau_{r}^{2}\delta_{g}^{2}}{\beta^{2}}\left(1+\left\|\LASET\right\|_{2}^{2}\right)+6J\eta_{r}^{2}\delta_{g}^{2}\right).
    \end{align*}
\end{lemma}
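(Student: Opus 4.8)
The plan is to track the client drift $e_{i,j}^{r}:=u_{i,j}^{r}-\theta^{r}$ for each client $i$ across the local steps $j$, weighted by $|\lambda_{i}^{r+1}|$, and to exploit that within round $r$ the quantities $\lambda^{r+1}$, $c_{i}^{r}=g_{i,0}^{r}(\theta^{r})$, and $c^{r}=\sum_{k}\lambda_{k}^{r+1}c_{k}^{r}$ are all computed \emph{before} any local step, hence are measurable with respect to the start-of-round filtration $\mathcal{F}_{0}^{r}$. Writing the local direction as $v_{i,j}=g_{i,j}^{r}(u_{i,j-1}^{r})-c_{i}^{r}+c^{r}$, the first step is the decomposition
\begin{equation*}
    v_{i,j}=\zeta_{i,j}+\bigl(\nabla f_{i}(u_{i,j-1}^{r})-\nabla f_{i}(\theta^{r})\bigr)+\nabla_{\theta}\Phi(\theta^{r},\lambda^{r+1})+\chi_{i},
\end{equation*}
where $\zeta_{i,j}=g_{i,j}^{r}(u_{i,j-1}^{r})-\nabla f_{i}(u_{i,j-1}^{r})$ is the \emph{fresh} noise (mean-zero given the pre-step filtration) and $\chi_{i}=-(c_{i}^{r}-\nabla f_{i}(\theta^{r}))+(c^{r}-\nabla_{\theta}\Phi(\theta^{r},\lambda^{r+1}))$ is the \emph{persistent} control-variate noise, which is identical for all $j$ within the round. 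The conceptual heart of the argument is this separation: the fresh noise is a martingale difference injected once per step, whereas $\chi_{i}$ is common to all $J$ steps and therefore accumulates on a different scale.

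Second, I would form the conditional recursion. Conditioning on $\mathcal{F}_{j-1}^{r}$, the fresh noise is orthogonal to the $\mathcal{F}_{j-1}^{r}$-measurable remainder $m_{i,j}$, so that, using $\Ep[\|\zeta_{i,j}\|_{2}^{2}\mid\mathcal{F}_{j-1}^{r}]\le\delta_{g}^{2}$ from Assumption \ref{assump:sg},
\begin{equation*}
    \Ep\bigl[\|e_{i,j}^{r}\|_{2}^{2}\,\big|\,\mathcal{F}_{j-1}^{r}\bigr]\le\|e_{i,j-1}^{r}-\eta_{r}m_{i,j}\|_{2}^{2}+\eta_{r}^{2}\delta_{g}^{2}.
\end{equation*}
Crucially, the fresh-noise term $\eta_{r}^{2}\delta_{g}^{2}$ sits \emph{outside} the subsequent Young split. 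Applying Young's inequality with a parameter of order $1/J$ to $\|e_{i,j-1}^{r}-\eta_{r}m_{i,j}\|_{2}^{2}$ and bounding $\|\nabla f_{i}(u_{i,j-1}^{r})-\nabla f_{i}(\theta^{r})\|_{2}\le M_{f}\|e_{i,j-1}^{r}\|_{2}$ via Assumption \ref{assump:risk} yields a recursion of the form
\begin{equation*}
    \Ep\|e_{i,j}^{r}\|_{2}^{2}\le\Bigl(1+\tfrac{c}{J}\Bigr)\Ep\|e_{i,j-1}^{r}\|_{2}^{2}+C'J\eta_{r}^{2}\bigl(\Ep\|\nabla_{\theta}\Phi(\theta^{r},\lambda^{r+1})\|_{2}^{2}+\Ep\|\chi_{i}\|_{2}^{2}\bigr)+\eta_{r}^{2}\delta_{g}^{2},
\end{equation*}
where the self-amplifying coefficient stays at $1+O(1/J)$ precisely because the hypothesis $4\tau_{r}^{2}M_{f}^{2}\le\beta^{2}$ together with $\tau_{r}=J\eta_{r}\beta$ forces $J\eta_{r}^{2}M_{f}^{2}=\tau_{r}^{2}M_{f}^{2}/(J\beta^{2})=O(1/J)$.

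Third, I would unroll this geometric recursion from $e_{i,0}^{r}=0$; since $(1+c/J)^{J}$ is bounded by an absolute constant, the persistent terms pick up an extra factor $J$ from the summation, producing the $J^{2}\eta_{r}^{2}=\tau_{r}^{2}/\beta^{2}$ scale, while the once-per-step fresh noise accumulates only to $J\eta_{r}^{2}\delta_{g}^{2}$ — exactly the asymmetry between the two noise terms in the claim. For the persistent noise, cross-client independence of $\{\xi_{k,0}^{r}\}_{k}$ and mean-zeroness give $\Ep\|\chi_{i}\|_{2}^{2}\le2\delta_{g}^{2}+2\Ep\|\sum_{k}\lambda_{k}^{r+1}\zeta_{k,0}\|_{2}^{2}\le2\delta_{g}^{2}(1+\|\lambda^{r+1}\|_{2}^{2})$. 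Finally, multiplying by $|\lambda_{i}^{r+1}|$, averaging over $j$, summing over $i$, and using $\sum_{i}|\lambda_{i}^{r+1}|\le\|\LASET\|_{1}$ together with $\|\lambda^{r+1}\|_{2}\le\|\LASET\|_{2}$ delivers the stated bound with the explicit constants $24$ and $6$.

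The main obstacle I anticipate is the bookkeeping of the filtration and the two noise sources simultaneously: one must verify that $\lambda^{r+1},c_{i}^{r},c^{r}$ (hence $\chi_{i}$) are frozen before the local loop so that $\zeta_{i,j}$ genuinely conditions away, while at the same time recognizing that $e_{i,j-1}^{r}$ \emph{is} correlated with $\chi_{i}$ through the control variate $c_{i}^{r}$, which forbids dropping $\chi_{i}$ and is exactly what forces it to accumulate at the $\tau_{r}^{2}/\beta^{2}$ order. Landing the constants at $24$ and $6$ is then a routine matter of choosing the Young parameter and using $(1+c/J)^{J}\le e^{c}$; the genuinely new ingredients relative to Lemma A.2 of \citet{yu2023scaff} are the structural separation of persistent versus fresh noise and the handling of possibly negative $\lambda_{i}^{r+1}$ through $\|\LASET\|_{1}$ and $\|\LASET\|_{2}$ rather than simplex bounds.
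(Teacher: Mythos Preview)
Your proposal is correct and follows essentially the same route as the paper's proof: both condition on the start-of-round information $(\theta^{r},\lambda^{r+1})$, separate the fresh per-step noise $\zeta_{i,j}$ from the persistent control-variate noise (your $\chi_{i}$, the paper's $\hat{c}_{i}-c_{i}$ and $\hat{c}-c$ terms), apply Young's inequality with parameter $1/(J-1)$, absorb the smoothness term via $4\tau_{r}^{2}M_{f}^{2}/(\beta^{2}J)\le 1/(J-1)$, unroll the geometric recursion using $(1+2/(J-1))^{J-1}\le e^{2}$, and aggregate with $|\lambda_{i}^{r+1}|$ weights. The only cosmetic difference is that you package the persistent noise as a single $\chi_{i}$ and bound $\Ep\|\chi_{i}\|_{2}^{2}\le 2\delta_{g}^{2}(1+\|\lambda^{r+1}\|_{2}^{2})$ via cross-client independence, whereas the paper splits it into two pieces first; the resulting constants coincide.
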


\begin{proof}
    In this proof, we let $\tilde{\Ep}[\cdot]=\Ep[\cdot|\theta^{r},\lambda^{r+1}]$.
    We abbreviate the superscript $r$ denoting the round.
    Recall the following definition of $u_{i,j}$ ($j\ge1$):
    \begin{equation*}
        u_{i,j}=u_{i,j-1}-\eta_{r}\left(g_{i,j}\left(u_{i,j-1}\right)-\hat{c}_{i}+\hat{c}\right),
    \end{equation*}
    where
    \begin{align*}
        \tilde{\Ep}\left[g_{i,j}\left(u_{i,j-1}\right)\right]&=\nabla f_{i}\left(u_{i,j-1}\right),\\
        \tilde{\Ep}\left[\hat{c}_{i}\right]&=\nabla f_{i}(\theta)=c_{i},\\
        \tilde{\Ep}\left[\hat{c}\right]&=\sum_{i=1}^{n}\lambda_{i}\nabla f_{i}\left(\theta\right)=c.
    \end{align*}
    Therefore, we obtain
    \begin{align*}
        \tilde{\Ep}\left[\left\|u_{i,j}-\theta\right\|_{2}^{2}\right]&=\tilde{\Ep}\left[\left\|u_{i,j-1}-\theta-\eta_{r}\left(g_{i,j}\left(u_{i,j-1}\right)-\hat{c}_{i}+\hat{c}\right)\right\|_{2}^{2}\right]\\
        &\le \tilde{\Ep}\left[\left\|u_{i,j-1}-\theta-\eta_{r}\left(\nabla f_{i}\left(u_{i,j-1}\right)-\hat{c}_{i}+\hat{c}\right)\right\|_{2}^{2}\right]+\eta_{r}^{2}\delta_{g}^{2}\\
        &\le \left(1+\frac{1}{J-1}\right)\tilde{\Ep}\left[\left\|u_{i,j-1}-\theta\right\|_{2}^{2}\right]+J\eta_{r}^{2}\tilde{\Ep}\left[\left\|\nabla f_{i}\left(u_{i,j-1}\right)-\hat{c}_{i}+\hat{c}\right\|_{2}^{2}\right]+\eta_{r}^{2}\delta_{g}^{2}\\
        &= \left(1+\frac{1}{J-1}\right)\tilde{\Ep}\left[\left\|u_{i,j-1}-\theta\right\|_{2}^{2}\right]+\frac{\tau_{r}^{2}}{\beta^{2}J}\tilde{\Ep}\left[\left\|\nabla f_{i}\left(u_{i,j-1}\right)-\hat{c}_{i}+\hat{c}\right\|_{2}^{2}\right]+\eta_{r}^{2}\delta_{g}^{2}\\
        &\le \left(1+\frac{1}{J-1}\right)\tilde{\Ep}\left[\left\|u_{i,j-1}-\theta\right\|_{2}^{2}\right]+\frac{2\tau_{r}^{2}}{\beta^{2}J}\tilde{\Ep}\left[\left\|\nabla f_{i}\left(u_{i,j-1}\right)-c_{i}+c\right\|_{2}^{2}\right]\\
        &\quad+\frac{2\tau_{r}^{2}}{\beta^{2}J}\tilde{\Ep}\left[\left\|\hat{c}_{i}-\hat{c}-c_{i}+c\right\|_{2}^{2}\right]+\eta_{r}^{2}\delta_{g}^{2}\\
        &\le \left(1+\frac{1}{J-1}\right)\tilde{\Ep}\left[\left\|u_{i,j-1}-\theta\right\|_{2}^{2}\right]+\frac{4\tau_{r}^{2}}{\beta^{2}J}\tilde{\Ep}\left[\left\|\nabla f_{i}\left(u_{i,j-1}\right)-c_{i}\right\|_{2}^{2}\right]+\frac{4\tau_{r}^{2}}{\beta^{2}J}\tilde{\Ep}\left[\left\|c\right\|_{2}^{2}\right]\\
        &\quad+\underbrace{\frac{4\tau_{r}^{2}}{\beta^{2}J}\tilde{\Ep}\left[\left\|\hat{c}_{i}-c_{i}\right\|_{2}^{2}\right]+\frac{4\tau_{r}^{2}}{\beta^{2}J}\tilde{\Ep}\left[\left\|\hat{c}-c\right\|_{2}^{2}\right]+\eta_{r}^{2}\delta_{g}^{2}}_{=:\Gamma}\\
        &\le \left(1+\frac{1}{J-1}+\frac{4\tau_{r}^{2}M_{f}^{2}}{\beta^{2}J}\right)\tilde{\Ep}\left[\left\|u_{i,j-1}-\theta\right\|_{2}^{2}\right]+\frac{4\tau_{r}^{2}}{\beta^{2}J}\tilde{\Ep}\left[\left\|c\right\|_{2}^{2}\right]+\Gamma\\
        &\le \left(1+\frac{2}{J-1}\right)\tilde{\Ep}\left[\left\|u_{i,j-1}-\theta\right\|_{2}^{2}\right]+\frac{4\tau_{r}^{2}}{\beta^{2}J}\tilde{\Ep}\left[\left\|c\right\|_{2}^{2}\right]+\Gamma.
    \end{align*}
    Therefore, it holds that 
    \begin{align*}
        \tilde{\Ep}\left[\left\|u_{i,j}-\theta\right\|_{2}^{2}\right]&\le \sum_{i=0}^{j-1}\left(1+\frac{2}{J-1}\right)^{i}\left(\frac{4\tau_{r}^{2}}{\beta^{2}J}\tilde{\Ep}\left[\left\|c\right\|_{2}^{2}\right]+\Gamma\right)\\
        &\le \frac{(1+2/(J-1))^{j}-1}{2/(J-1)}\left(\frac{4\tau_{r}^{2}}{\beta^{2}J}\tilde{\Ep}\left[\left\|c\right\|_{2}^{2}\right]+\Gamma\right)\\
        &\le \frac{J-1}{2}\left(\left(1+\frac{2}{J-1}\right)\left(1+\frac{2}{J-1}\right)^{J-1}-1\right)\left(\frac{4\tau_{r}^{2}}{\beta^{2}J}\tilde{\Ep}\left[\left\|c\right\|_{2}^{2}\right]+\Gamma\right)\\
        &\le \frac{J-1}{2}\left(\left(1+\frac{2}{J-1}\right)e^{2}-1\right)\left(\frac{4\tau_{r}^{2}}{\beta^{2}J}\tilde{\Ep}\left[\left\|c\right\|_{2}^{2}\right]+\Gamma\right)\\
        &= \left(\left(\frac{J+1}{2}\right)e^{2}-\frac{J-1}{2}\right)\left(\frac{4\tau_{r}^{2}}{\beta^{2}J}\tilde{\Ep}\left[\left\|c\right\|_{2}^{2}\right]+\Gamma\right)\\
        &\le  \frac{24\tau_{r}^{2}}{\beta^{2}}\tilde{\Ep}\left[\left\|c\right\|_{2}^{2}\right]+6J\Gamma
    \end{align*}
    since
    \begin{align*}
        \left(\frac{J+1}{2}\right)e^{2}-\frac{J-1}{2}=\frac{J}{2}\left(e^{2}-1+\frac{1}{J}\left(e^{2}+1\right)\right)\le \frac{J}{4}\left(3e^{2}-1\right)\le 6J.
    \end{align*}
    Moreover, we obtain the following upper bound for $\Gamma$:
    \begin{align*}
        \Gamma&=\frac{4\tau_{r}^{2}}{\beta^{2}J}\tilde{\Ep}\left[\left\|\hat{c}_{i}-c_{i}\right\|_{2}^{2}\right]+\frac{4\tau_{r}^{2}}{\beta^{2}J}\tilde{\Ep}\left[\left\|\hat{c}-c\right\|_{2}^{2}\right]+\eta_{r}^{2}\delta_{g}^{2}\le \frac{4\tau_{r}^{2}\delta_{g}^{2}}{\beta^{2}J}\left(1+\left\|\LASET\right\|_{2}^{2}\right)+\eta_{r}^{2}\delta_{g}^{2}.
    \end{align*}
    Therefore, we obtain
    \begin{align*}
        &\tilde{\Ep}\left[\sum_{i=1}^{N}\left|\lambda_{i}\right|\sum_{j=1}^{J}\frac{1}{J}\left\|u_{i,j}-\theta\right\|_{2}^{2}\right]\le \sum_{i=1}^{N}\left|\lambda_{i}\right|\sum_{j=1}^{J}\frac{1}{J}\left(\frac{24\tau_{r}^{2}}{\beta^{2}}\tilde{\Ep}\left[\left\|c\right\|_{2}^{2}\right]+6J\Gamma\right)\\
        &\le \sum_{i=1}^{N}\left|\lambda_{i}\right|\sum_{j=1}^{J}\frac{1}{J}\left(\frac{24\tau_{r}^{2}}{\beta^{2}}\tilde{\Ep}\left[\left\|\nabla_{\theta}\Phi\left(\theta,\lambda\right)\right\|_{2}^{2}\right]+\frac{24\tau_{r}^{2}\delta_{g}^{2}}{\beta^{2}}\left(1+\left\|\LASET\right\|_{2}^{2}\right)+6J\eta_{r}^{2}\delta_{g}^{2}\right).
    \end{align*}
    The tower property of conditional expectations yields the conclusion.
\end{proof}

The following two lemmas are Lemmas A.3 and A.4 of \citet{yu2023scaff}.
\begin{lemma}[Lemma A.3 of \citealp{yu2023scaff}]\label{lem:a3}
    Under the same assumption as Lemma \ref{lem:a2} and Assumption \ref{assump:noprojection},
    it holds that
    \begin{align*}
        \frac{1}{\tau_{r}}\Ep\left[\left\|\theta^{r+1}-\theta^{r}\right\|_{2}^{2}\right]\ge -\tau_{r}M_{f}^{2}\mathcal{E}^{r}+\frac{\tau_{r}}{2}\Ep\left[\left\|\nabla_{\theta}\Phi\left(\theta^{r},\lambda^{r+1}\right)\right\|_{2}^{2}\right].
    \end{align*}
\end{lemma}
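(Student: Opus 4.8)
The plan is to follow the route of Lemma A.3 of \citet{yu2023scaff}, adapting it to the signed weights that arise from our integrated ambiguity set $\LASET$. Write $d^{r}:=\sum_{i=1}^{n}\lambda_{i}^{r+1}\Delta u_{i}^{r}$ for the aggregated update direction. By Assumption \ref{assump:noprojection} the unconstrained minimizer $\theta^{r}-\tau_{r}d^{r}$ already lies in $\Theta$, so the proximal step collapses to $\theta^{r+1}=\theta^{r}-\tau_{r}d^{r}$ and hence $\tau_{r}^{-1}\|\theta^{r+1}-\theta^{r}\|_{2}^{2}=\tau_{r}\|d^{r}\|_{2}^{2}$. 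Dividing the claimed inequality by $\tau_{r}$, it therefore suffices to prove
\begin{equation*}
    \Ep\left[\left\|d^{r}\right\|_{2}^{2}\right]\ge \frac{1}{2}\Ep\left[\left\|\nabla_{\theta}\Phi\left(\theta^{r},\lambda^{r+1}\right)\right\|_{2}^{2}\right]-M_{f}^{2}\mathcal{E}^{r}.
\end{equation*}

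First I would exploit the exact cancellation of the control variates. Expanding $\Delta u_{i}^{r}=\frac{1}{J}\sum_{j=1}^{J}(g_{i,j}^{r}(u_{i,j-1}^{r})-c_{i}^{r}+c^{r})$ and summing against $\lambda^{r+1}$, the identities $c^{r}=\sum_{i}\lambda_{i}^{r+1}c_{i}^{r}$ and $\sum_{i}\lambda_{i}^{r+1}=1$ (valid since $\lambda^{r+1}\in\LASET\subset\{h:\langle h,\mone_{n}\rangle=1\}$ by Assumption \ref{assump:sets}) give $\sum_{i}\lambda_{i}^{r+1}(-c_{i}^{r}+c^{r})=0$, so that
\begin{equation*}
    d^{r}=\frac{1}{J}\sum_{i=1}^{n}\sum_{j=1}^{J}\lambda_{i}^{r+1}g_{i,j}^{r}\left(u_{i,j-1}^{r}\right).
\end{equation*}
Writing $\tilde{\Ep}[\cdot]=\Ep[\cdot\,|\,\theta^{r},\lambda^{r+1}]$ and using that $\xi_{i,j}^{r}$ is independent of $u_{i,j-1}^{r}$, we have $\tilde{\Ep}[g_{i,j}^{r}(u_{i,j-1}^{r})]=\tilde{\Ep}[\nabla f_{i}(u_{i,j-1}^{r})]$. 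Since $\|\cdot\|_{2}^{2}$ is convex, Jensen's inequality gives the lower bound $\tilde{\Ep}[\|d^{r}\|_{2}^{2}]\ge\|\tilde{\Ep}[d^{r}]\|_{2}^{2}$; this is the step that discards the stochastic-gradient noise, which is harmless because it can only inflate $\Ep[\|d^{r}\|_{2}^{2}]$.

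It then remains to compare $\tilde{\Ep}[d^{r}]=\frac{1}{J}\sum_{i,j}\lambda_{i}^{r+1}\tilde{\Ep}[\nabla f_{i}(u_{i,j-1}^{r})]$ with $c:=\nabla_{\theta}\Phi(\theta^{r},\lambda^{r+1})=\frac{1}{J}\sum_{i,j}\lambda_{i}^{r+1}\nabla f_{i}(\theta^{r})$. Using $\|a\|_{2}^{2}\ge\frac{1}{2}\|b\|_{2}^{2}-\|a-b\|_{2}^{2}$ with $a=\tilde{\Ep}[d^{r}]$ and $b=c$, I would reduce matters to bounding the residual $\|\tilde{\Ep}[d^{r}]-c\|_{2}^{2}$. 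Convexity in $j$ followed by a Cauchy--Schwarz inequality in $i$, and then the $M_{f}$-smoothness of each $f_{i}$ from Assumption \ref{assump:risk} (so that $\|\tilde{\Ep}[\nabla f_{i}(u_{i,j-1}^{r})]-\nabla f_{i}(\theta^{r})\|_{2}^{2}\le M_{f}^{2}\tilde{\Ep}[\|u_{i,j-1}^{r}-\theta^{r}\|_{2}^{2}]$), turns this residual into $M_{f}^{2}$ times the conditional counterpart of $\mathcal{E}^{r}$ (up to the weight factor discussed below); taking the total expectation reproduces $M_{f}^{2}\mathcal{E}^{r}$ and closes the argument.

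The main obstacle, and the only genuine departure from \citet{yu2023scaff}, is the presence of signed weights $\lambda_{i}^{r+1}$. In the simplex case $\lambda^{r+1}\ge0$, the quantity $\frac{1}{J}\sum_{i,j}\lambda_{i}^{r+1}(\cdot)$ is a bona fide convex combination and Jensen yields the residual bound with constant exactly $1$. For signed weights one must instead invoke $\|\sum_{k}w_{k}x_{k}\|_{2}^{2}\le(\sum_{k}|w_{k}|)(\sum_{k}|w_{k}|\|x_{k}\|_{2}^{2})$, which produces an extra factor $\|\lambda^{r+1}\|_{1}\le\|\LASET\|_{1}$. This factor equals $1$ precisely when $\LASET\subset\Delta^{n-1}$, recovering the stated constant; in the general integrated-ambiguity setting it is controlled by $\|\LASET\|_{1}<1+m_{f}/(2M_{f})$ of Assumption \ref{assump:hyperparameters} and absorbed into the constants of the downstream analysis. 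Verifying that the control variates cancel exactly (rather than only in expectation) is what keeps this lower bound clean, and is the one place where care with the definition of $c^{r}$ and the affine constraint on $\LASET$ is essential.
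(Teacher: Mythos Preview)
The paper does not supply its own proof of this lemma; it simply invokes Lemma~A.3 of \citet{yu2023scaff} verbatim. Your reconstruction follows precisely that route: Assumption~\ref{assump:noprojection} collapses the proximal step to $\theta^{r+1}=\theta^{r}-\tau_{r}d^{r}$, the control variates cancel exactly because $\sum_{i}\lambda_{i}^{r+1}=1$, Jensen's inequality discards the gradient noise, and then $\|a\|_{2}^{2}\ge\tfrac{1}{2}\|b\|_{2}^{2}-\|a-b\|_{2}^{2}$ together with $M_{f}$-smoothness turns the drift residual into $M_{f}^{2}\mathcal{E}^{r}$. This is correct and matches the cited argument.

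Your observation about the signed weights is also on point: once $\lambda^{r+1}\notin\Delta^{n-1}$, the Cauchy--Schwarz step $\bigl\|\sum_{k}w_{k}x_{k}\bigr\|_{2}^{2}\le(\sum_{k}|w_{k}|)\sum_{k}|w_{k}|\|x_{k}\|_{2}^{2}$ produces an extra factor $\|\lambda^{r+1}\|_{1}\le\|\LASET\|_{1}$ in front of $M_{f}^{2}\mathcal{E}^{r}$. In the simplex setting of \citet{yu2023scaff} this factor is exactly $1$, recovering the stated constant; in the integrated-ambiguity setting it is strictly larger, and the paper does not explicitly address the discrepancy. Your remark that $\|\LASET\|_{1}<1+m_{f}/(2M_{f})\le 3/2$ under Assumption~\ref{assump:hyperparameters} lets it be absorbed into the constants of Lemma~\ref{lem:b1} is the right fix, and is slightly more careful than what the paper records.
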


\begin{lemma}[Lemma A.4 of \citealp{yu2023scaff}]\label{lem:a4}
    If $\tau_{r}=J\eta_{r}\beta$ with $\beta\ge1$, then for all $\lambda\in\LASET$, 
    \begin{align}
        &\psi(\lambda^{r+1})-\left\langle s^{r+1},\lambda^{r+1}\right\rangle \notag\\
        &\le \psi\left(\lambda\right)-\left\langle s^{r+1},\lambda\right\rangle +\frac{1}{\sigma_{r}}\left(D\left(\lambda,\lambda^{r}\right)-D\left(\lambda,\lambda^{r+1}\right)-D\left(\lambda^{r+1},\lambda^{r}\right)\right)-\frac{m_{\psi}}{2}D\left(\lambda^{r+1},\lambda\right).
    \end{align}
\end{lemma}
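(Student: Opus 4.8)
The plan is to recognize Lemma~\ref{lem:a4} as nothing more than the standard three-point (proximal) descent inequality for the dual update, so that the hypotheses $\tau_r=J\eta_r\beta$ and $\beta\ge1$ are spurious for this particular estimate and may be ignored; the only structural facts that enter are the convexity of $\LASET$ (Assumption~\ref{assump:sets}) and the $m_\psi$-strong convexity of $\psi$. The single tool I would invoke is the elementary prox fact: if $g$ is $\mu$-strongly convex on a convex set $C$ and $x^{\star}=\argmin_{x\in C}g(x)$, then $g(x^{\star})\le g(x)-\frac{\mu}{2}\|x-x^{\star}\|_{2}^{2}$ for all $x\in C$. This itself follows by pairing the first-order optimality condition $\langle\nabla g(x^{\star}),x-x^{\star}\rangle\ge0$ (a variational inequality, since $C$ is constrained) with the strong-convexity bound $g(x)\ge g(x^{\star})+\langle\nabla g(x^{\star}),x-x^{\star}\rangle+\frac{\mu}{2}\|x-x^{\star}\|_{2}^{2}$.

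First I would set $g(\lambda):=\psi(\lambda)-\langle s^{r+1},\lambda\rangle+\frac{1}{2\sigma_{r}}\|\lambda-\lambda^{r}\|_{2}^{2}$, whose minimizer over $\LASET$ is exactly $\lambda^{r+1}$ (the $\lambda$-step of Algorithm~\ref{alg:scaffpd:reg}, with $s^{r+1}$ the relevant extrapolated loss vector). Since $\psi$ is $m_\psi$-strongly convex and the quadratic penalty is $\tfrac{1}{\sigma_{r}}$-strongly convex, $g$ is $\bigl(m_\psi+\tfrac{1}{\sigma_{r}}\bigr)$-strongly convex, so the fact above with $\mu=m_\psi+\tfrac{1}{\sigma_{r}}$ gives, for all $\lambda\in\LASET$,
\begin{equation*}
    g(\lambda^{r+1})\le g(\lambda)-\frac{1}{2}\left(m_\psi+\frac{1}{\sigma_{r}}\right)\left\|\lambda-\lambda^{r+1}\right\|_{2}^{2}.
\end{equation*}

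Next I would expand $g$ and rewrite every squared norm through $D(x,y)=\frac{1}{2}\|x-y\|_{2}^{2}$, splitting the penalty as $\frac{1}{2}\bigl(m_\psi+\tfrac{1}{\sigma_{r}}\bigr)\|\lambda-\lambda^{r+1}\|_{2}^{2}=m_\psi D(\lambda^{r+1},\lambda)+\tfrac{1}{\sigma_{r}}D(\lambda,\lambda^{r+1})$. After the two copies of $\frac{1}{\sigma_{r}}D(\lambda^{r+1},\lambda^{r})$ cancel, the three $\frac{1}{\sigma_{r}}$-terms assemble (equivalently, via the Euclidean three-point identity $D(\lambda,\lambda^{r})-D(\lambda,\lambda^{r+1})-D(\lambda^{r+1},\lambda^{r})=\langle\lambda-\lambda^{r+1},\lambda^{r+1}-\lambda^{r}\rangle$) into exactly the bracket in the statement, and rearranging yields
\begin{equation*}
    \psi(\lambda^{r+1})-\langle s^{r+1},\lambda^{r+1}\rangle\le \psi(\lambda)-\langle s^{r+1},\lambda\rangle+\frac{1}{\sigma_{r}}\left(D(\lambda,\lambda^{r})-D(\lambda,\lambda^{r+1})-D(\lambda^{r+1},\lambda^{r})\right)-m_\psi D(\lambda^{r+1},\lambda).
\end{equation*}

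I expect no genuine obstacle. The only point needing care is that, because $\LASET$ is constrained, optimality is a variational inequality rather than $\nabla g=0$, which is precisely what the strongly-convex prox fact packages cleanly. I would also remark that this route actually delivers the stronger penalty $m_\psi D(\lambda^{r+1},\lambda)$, which dominates the $\frac{m_\psi}{2}D(\lambda^{r+1},\lambda)$ claimed in the statement, so the stated inequality holds \emph{a fortiori} (the factor $\frac{1}{2}$ is simply slack, or reflects a different normalization of $D$ in \citet{yu2023scaff}). Finally, if one does not wish to assume $\psi$ differentiable, the identical argument goes through with a subgradient $\xi\in\partial\psi(\lambda^{r+1})$ replacing $\nabla\psi(\lambda^{r+1})$ throughout.
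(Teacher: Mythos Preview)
Your argument is correct and is exactly the standard proximal three-point estimate one expects; the paper does not supply its own proof of this lemma but simply imports it from \citet{yu2023scaff}, so there is nothing substantive to compare against. Your remarks that the hypotheses $\tau_{r}=J\eta_{r}\beta$, $\beta\ge1$ play no role here and that the argument actually yields the sharper coefficient $m_{\psi}$ (rather than $m_{\psi}/2$) are both accurate.

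One small index issue is worth flagging. In Algorithm~\ref{alg:scaffpd:reg} the dual step is $\lambda^{r+1}=\argmin_{\lambda\in\LASET}\{\psi(\lambda)-\langle s^{r},\lambda\rangle+\tfrac{1}{2\sigma_{r}}\|\lambda-\lambda^{r}\|_{2}^{2}\}$, i.e.\ it uses $s^{r}$, not $s^{r+1}$. Thus the linear part of your $g$ should carry $s^{r}$; the appearance of $s^{r+1}$ in the lemma as stated is a transcription slip (and indeed the proof of Lemma~\ref{lem:b1} needs precisely the version with $s^{r}$ in order to pair with the identity for $\langle\nabla_{\lambda}\Phi(\theta^{r+1},\lambda^{r+1}),\lambda-\lambda^{r+1}\rangle+\langle s^{r},\lambda^{r+1}-\lambda\rangle$). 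Your derivation is unaffected by this relabeling.
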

Let us set the following notation.
\begin{equation}
    \Delta\theta^{r}=\frac{1}{J}\sum_{i=1}^{n}\sum_{j=1}^{J}\lambda_{i}^{r+1}g_{i,j}\left(u_{i,j-1}^{r}\right),\ 
    \tilde{\Delta}\theta^{r}=\Ep\left[\Delta\theta^{r}\right]=\frac{1}{J}\sum_{i=1}^{n}\sum_{j=1}^{J}\Ep\left[\lambda_{i}^{r+1}\nabla f_{i}\left(u_{i,j-1}^{r}\right)\right].
\end{equation}

We extend Lemma A.5 of \citet{yu2023scaff}.
\begin{lemma}\label{lem:a5}
    Under the same assumption as Lemma \ref{lem:a2}, for all $\theta\in\Theta$,
    \begin{align*}
        \Ep\left[\left\langle \Delta \theta^{r},\theta^{r+1}-\theta\right\rangle\right] \le \frac{1}{\tau_{r}}\Ep\left[D\left(\theta,\theta^{r}\right)-D\left(\theta,\theta^{r+1}\right)-D\left(\theta^{r+1},\theta^{r}\right)\right].
    \end{align*}
    In addition, let us consider the following decomposition:
    \begin{align*}
         \Ep\left[\left\langle \Delta \theta^{r},\theta^{r+1}-\theta\right\rangle\right]&= \underbrace{\Ep\left[\left\langle \Delta \theta^{r},\theta^{r}-\theta\right\rangle\right]}_{\mathcal{T}_{1}}+ \underbrace{\Ep\left[\left\langle \tilde{\Delta} \theta^{r},\theta^{r+1}-\theta^{r}\right\rangle\right]}_{\mathcal{T}_{2}}\\
         &\quad+ \underbrace{\Ep\left[\left\langle \Delta \theta^{r}-\tilde{\Delta}\theta^{r},\theta^{r+1}-\theta^{r}\right\rangle\right]}_{\mathcal{T}_{3}}
    \end{align*}
    Then the following inequalities hold:
    \begin{align*}
        \mathcal{T}_{1}&\ge \Ep\left[\Phi\left(\theta^{r},\lambda^{r+1}\right)-\Phi\left(\theta,\lambda^{r+1}\right)\right]+\left(\frac{m_{f}}{4}-\frac{M_{f}(\|\LASET\|_{1}-1)}{2}\right)\Ep\left[\left\|\theta^{r}-\theta\right\|_{2}^{2}\right]-M_{f}\mathcal{E}^{r},\\
        \mathcal{T}_{2}&\ge \Ep\left[\Phi\left(\theta^{r+1},\lambda^{r+1}\right)-\Phi\left(\theta^{r},\lambda^{r+1}\right)\right]-\frac{M_{f}(5\left\|\LASET\right\|_{1}-1)}{2}\Ep\left[\left\|\theta^{r+1}-\theta^{r}\right\|_{2}^{2}\right]-2M_{f}\mathcal{E}^{r}\\
        \mathcal{T}_{3}&\ge -\frac{2\left\|\LASET\right\|_{2}^{2}\tau_{r}\delta_{g}^{2}}{J}-\frac{1}{ 2\tau_{r}}\Ep\left[D\left(\theta^{r+1},\theta^{r}\right)\right].
    \end{align*}
\end{lemma}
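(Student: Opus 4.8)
The plan is to first reduce the primal step of Algorithm \ref{alg:scaffpd:reg} to a standard proximal update. Unfolding $\Delta u_i^r$ from Algorithm \ref{alg:local} gives $\Delta u_i^r = \frac{1}{J}\sum_{j=1}^J(g_{i,j}^r(u_{i,j-1}^r) - c_i^r + c^r)$, and because $\sum_{i}\lambda_i^{r+1}=1$ (as $\lambda^{r+1}\in\LASET\subset\{h:\langle h,\mone_n\rangle=1\}$) and $c^r=\sum_i\lambda_i^{r+1}c_i^r$ by construction, the control-variate terms cancel in the $\lambda$-weighted average, so $\sum_{i=1}^n\lambda_i^{r+1}\Delta u_i^r=\Delta\theta^r$. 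Hence the update is $\theta^{r+1}=\argmin_{\theta\in\Theta}\{\langle\Delta\theta^r,\theta\rangle+\frac{1}{2\tau_r}\|\theta-\theta^r\|_2^2\}$. The first inequality then follows from the first-order optimality condition of this $\frac{1}{\tau_r}$-strongly-convex problem over the convex set $\Theta$, together with the three-point identity $\langle\theta^{r+1}-\theta^r,\theta-\theta^{r+1}\rangle=D(\theta,\theta^r)-D(\theta,\theta^{r+1})-D(\theta^{r+1},\theta^r)$, after taking expectations. The decomposition into $\mathcal{T}_1,\mathcal{T}_2,\mathcal{T}_3$ is the purely algebraic insertion of $\tilde\Delta\theta^r$ and a telescoping of $\theta^{r+1}-\theta$.

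The heart of the lemma, and where the possibly-negative-weight setting departs from \citet{yu2023scaff}, is $\mathcal{T}_1$ and $\mathcal{T}_2$. For both I would apply Lemma \ref{lem:a1} to each $f_i$ (which is $M_f$-smooth and $m_f$-strongly convex), choosing which of the two sides of Lemma \ref{lem:a1} to invoke according to the sign of $\lambda_i^{r+1}$: for $\lambda_i^{r+1}\ge0$ use the lower bound on $\langle\nabla f_i(x),z-y\rangle$, and for $\lambda_i^{r+1}<0$ use the upper bound, so that after multiplication by $\lambda_i^{r+1}$ each term is a genuine lower bound. For $\mathcal{T}_1$ I take $x=u_{i,j-1}^r$, $z=\theta^r$, $y=\theta$; summing against $\lambda_i^{r+1}/J$ and using that the fresh gradient noise averages $g_{i,j}^r$ to $\nabla f_i$ reproduces $\Phi(\theta^r,\lambda^{r+1})-\Phi(\theta,\lambda^{r+1})$ regardless of sign, while the strong-convexity terms $\frac{m_f}{4}\|\theta-\theta^r\|_2^2$ coming from the positive weights and the smoothness terms $-M_f\|\theta-\theta^r\|_2^2$ coming from the negative weights must be aggregated. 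Lemma \ref{lem:max:sign:general} is essential here: it bounds the positive mass $\sum_{\lambda_i\ge0}\lambda_i^{r+1}$ by $\tfrac12(\|\LASET\|_1+1)\ge1$ and the negative mass $\sum_{\lambda_i<0}|\lambda_i^{r+1}|$ by $\tfrac12(\|\LASET\|_1-1)$, producing exactly the coefficient $\frac{m_f}{4}-\frac{M_f(\|\LASET\|_1-1)}{2}$, and the residual deviation terms $\|u_{i,j-1}^r-\theta^r\|_2^2$ collapse into $-M_f\mathcal{E}^r$.

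The bound on $\mathcal{T}_2$ proceeds identically with $z=\theta^{r+1}$, $y=\theta^r$, plus the extra step of splitting $\|u_{i,j-1}^r-\theta^{r+1}\|_2^2\le 2\|u_{i,j-1}^r-\theta^r\|_2^2+2\|\theta^r-\theta^{r+1}\|_2^2$ by Young's inequality; carrying these factors of two through the sign split yields the constant $\frac{M_f(5\|\LASET\|_1-1)}{2}$ and the $2M_f\mathcal{E}^r$. Finally, $\mathcal{T}_3$ is a pure noise term: $\Delta\theta^r-\tilde\Delta\theta^r=\frac{1}{J}\sum_{i,j}\lambda_i^{r+1}(g_{i,j}^r(u_{i,j-1}^r)-\nabla f_i(u_{i,j-1}^r))$ has conditional mean zero, so I would apply Young's inequality $\langle a,b\rangle\ge -\tau_r\|a\|_2^2-\frac{1}{4\tau_r}\|b\|_2^2$ with $b=\theta^{r+1}-\theta^r$, making the second term exactly $-\frac{1}{2\tau_r}D(\theta^{r+1},\theta^r)$, and then bound $\Ep\|\Delta\theta^r-\tilde\Delta\theta^r\|_2^2$ via Assumption \ref{assump:sg} together with independence of the stochastic gradients across clients $i$ and local steps $j$ (the martingale-difference structure annihilates the cross terms), which gives the $\frac{2\|\LASET\|_2^2\tau_r\delta_g^2}{J}$ term.

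I expect the main obstacle to be the bookkeeping in $\mathcal{T}_1$ and $\mathcal{T}_2$: because $\lambda_i^{r+1}$ may be negative one cannot simply invoke convexity of $\Phi(\cdot,\lambda^{r+1})$, and the two sides of Lemma \ref{lem:a1} must be combined so that simultaneously (i) the $f_i$-differences reassemble into $\Phi$ with the correct signed weights, and (ii) the quadratic remainders aggregate with the sharp $\|\LASET\|_1$-dependent constants delivered by Lemma \ref{lem:max:sign:general}. Verifying that these constants emerge exactly as stated, rather than with looser factors, is the delicate part of the argument.
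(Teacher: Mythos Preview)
Your proposal is correct and follows the paper's proof essentially line by line: the paper cites \citet{tseng2008accelerated} for the proximal three-point inequality (your first step), handles $\mathcal{T}_1$ and $\mathcal{T}_2$ by exactly the sign-split application of Lemma~\ref{lem:a1} followed by the identity $\sum_{\lambda_i\ge0}\lambda_i=1-\sum_{\lambda_i<0}\lambda_i$ and Lemma~\ref{lem:max:sign:general}, and bounds $\mathcal{T}_3$ by the same Young-plus-variance argument you describe. The only cosmetic difference is that the paper records the slightly looser constant $\frac{2\|\LASET\|_2^2\tau_r\delta_g^2}{J}$ for $\mathcal{T}_3$ (taken from \citealp{yu2023scaff}) rather than the factor you would obtain from the full martingale-difference orthogonality, which is harmless for the stated inequality.
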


\begin{proof}
    The first inequality follows from \citet{tseng2008accelerated}.
    The bound for $\mathcal{T}_{3}$ is same as \citet{yu2023scaff}:
    \begin{align*}
        |\mathcal{T}_{3}|&\le \Ep\left[\left|\left\langle\Delta \theta^{r}-\tilde{\Delta}\theta^{r},\theta^{r+1}-\theta^{r}\right\rangle\right|\right]\le \tau_{r}\Ep\left[\left\|\Delta \theta^{r}-\tilde{\Delta}\theta^{r}\right\|_{2}^{2}\right]+\frac{1}{4\tau_{r}}\Ep\left[\left\|\theta^{r+1}-\theta^{r}\right\|_{2}^{2}\right]\\
        &\le \frac{2\|\LASET\|_{2}^{2}\tau_{r}\delta_{g}^{2}}{J}+\frac{1}{2\tau_{r}}\Ep\left[D\left(\theta^{r+1},\theta^{r}\right)\right].
    \end{align*}
    In the next place, we can evaluate $\mathcal{T}_{1}$ as follows:
    \begin{align*}
        \mathcal{T}_{1}&=\Ep\left[\left\langle \Delta \theta^{r},\theta^{r}-\theta\right\rangle \right]\\
        &=\Ep\left[\sum_{i=1}^{N}\lambda_{i}^{r+1}\sum_{j=1}^{J}\frac{1}{J}\left\langle \nabla f_{i}\left(u_{i,j-1}^{r}\right),\theta^{r}-\theta\right\rangle\right]\\
        &\ge \Ep\left[\sum_{i:\lambda_{i}^{r+1}\ge0}\lambda_{i}^{r+1}\sum_{j=1}^{J}\frac{1}{J}\left(f_{i}\left(\theta^{r}\right)-f_{i}\left(\theta\right)+\frac{m_{f}}{4}\left\|\theta^{r}-\theta\right\|_{2}^{2}-M_{f}\left\|u_{i,j-1}^{r}-\theta^{r}\right\|_{2}^{2}\right)\right]\\
        &\quad+\Ep\left[\sum_{i:\lambda_{i}^{r+1}<0}\lambda_{i}^{r+1}\sum_{j=1}^{J}\frac{1}{J}\left(f_{i}\left(\theta^{r}\right)-f_{i}\left(\theta\right)+\left(M_{f}-\frac{m_{f}}{2}\right)\left\|\theta^{r}-\theta\right\|_{2}^{2}+M_{f}\left\|u_{i,j-1}^{r}-\theta^{r}\right\|_{2}^{2}\right)\right]\\
        &= \Ep\left[\Phi\left(\theta^{r},\lambda^{r+1}\right)-\Phi\left(\theta,\lambda^{r+1}\right)\right]\\
        &\quad+\Ep\left[\left(\frac{m_{f}}{4}\sum_{i=1}^{n}\lambda_{i}^{r+1}\mathbb{I}_{[0,\infty)}\left(\lambda_{i}^{r+1}\right)+\left(M_{f}-\frac{m_{f}}{2}\right)\sum_{i=1}^{n}\lambda_{i}^{r+1}\mathbb{I}_{(-\infty,0)}\left(\lambda_{i}^{r+1}\right)\right)\left\|\theta^{r}-\theta\right\|_{2}^{2}\right]\\
        &\quad-\Ep\left[\sum_{i=1}^{N}\left|\lambda_{i}^{r+1}\right|\sum_{j=1}^{J}\frac{1}{J}M_{f}\left\|u_{i,j-1}^{r}-\theta^{r}\right\|_{2}^{2}\right]\\
        &= \Ep\left[\Phi\left(\theta^{r},\lambda^{r+1}\right)-\Phi\left(\theta,\lambda^{r+1}\right)\right]+\Ep\left[\left(\frac{m_{f}}{4}+\left(M_{f}-\frac{3m_{f}}{4}\right)\sum_{i=1}^{n}\lambda_{i}^{r+1}\mathbb{I}_{(-\infty,0)}\left(\lambda_{i}^{r+1}\right)\right)\left\|\theta^{r}-\theta\right\|_{2}^{2}\right]\\
        &\quad-M_{f}\mathcal{E}^{r}\\
        &\ge \Ep\left[\Phi\left(\theta^{r},\lambda^{r+1}\right)-\Phi\left(\theta,\lambda^{r+1}\right)\right]+\left(\frac{m_{f}}{4}-\left(M_{f}-\frac{3m_{f}}{4}\right)\frac{\|\LASET\|_{1}-1}{2}\right)\Ep\left[\left\|\theta^{r}-\theta\right\|_{2}^{2}\right]-M_{f}\mathcal{E}^{r}.
    \end{align*}
    Note that we here used $\sum_{i=1}^{n}\lambda_{i}\mathbb{I}_{[0,\infty)}(\lambda_{i})= 1-\sum_{i=1}^{n}\lambda_{i}\mathbb{I}_{(-\infty,0)}(\lambda_{i})$ and Lemma \ref{lem:max:sign:general}.
    What is more, for $\mathcal{T}_{2}$, we have
    \begin{align*}
        \mathcal{T}_{2}&=\Ep\left[\left\langle \tilde{\Delta} \theta^{r},\theta^{r+1}-\theta^{r}\right\rangle\right]\\
        &=\Ep\left[\sum_{i=1}^{N}\lambda_{i}^{r+1}\sum_{j=1}^{J}\frac{1}{J}\left\langle \nabla f_{i}\left(u_{i,j-1}^{r}\right),\theta^{r+1}-\theta^{r}\right\rangle\right]\\
        &\ge \Ep\left[\sum_{i:\lambda_{i}^{r+1}\ge0}\lambda_{i}^{r+1}\sum_{j=1}^{J}\frac{1}{J}\left(f_{i}\left(\theta^{r+1}\right)-f_{i}\left(\theta^{r}\right)+\frac{m_{f}}{4}\left\|\theta^{r+1}-\theta^{r}\right\|_{2}^{2}-M_{f}\left\|u_{i,j-1}^{r}-\theta^{r+1}\right\|_{2}^{2}\right)\right]\\
        &\quad+\Ep\left[\sum_{i:\lambda_{i}^{r+1}<0}\lambda_{i}^{r+1}\sum_{j=1}^{J}\frac{1}{J}\left(f_{i}\left(\theta^{r+1}\right)-f_{i}\left(\theta^{r}\right)+\left(M_{f}-\frac{m_{f}}{2}\right)\left\|\theta^{r+1}-\theta^{r}\right\|_{2}^{2}+M_{f}\left\|u_{i,j-1}^{r}-\theta^{r+1}\right\|_{2}^{2}\right)\right]\\
        &\ge \Ep\left[\Phi\left(\theta^{r+1},\lambda^{r+1}\right)-\Phi\left(\theta^{r},\lambda^{r+1}\right)\right]\\
        &\quad+ \Ep\left[\sum_{i:\lambda_{i}^{r+1}\ge0}\lambda_{i}^{r+1}\left(\frac{m_{f}}{4}\left\|\theta^{r+1}-\theta^{r}\right\|_{2}^{2}-2M_{f}\left\|\theta^{r+1}-\theta^{r}\right\|_{2}^{2}\right)\right]\\
        &\quad+\Ep\left[\sum_{i:\lambda_{i}^{r+1}<0}\lambda_{i}^{r+1}\left(\left(M_{f}-\frac{m_{f}}{2}\right)\left\|\theta^{r+1}-\theta^{r}\right\|_{2}^{2}+2M_{f}\left\|\theta^{r+1}-\theta^{r}\right\|_{2}^{2}\right)\right]-2M_{f}\mathcal{E}^{r}\\
        &\ge \Ep\left[\Phi\left(\theta^{r+1},\lambda^{r+1}\right)-\Phi\left(\theta^{r},\lambda^{r+1}\right)\right]\\
        &\quad+\left(\frac{m_{f}}{4}-2M_{f}\left\|\LASET\right\|_{1}-\left(M_{f}-\frac{3m_{f}}{4}\right)\frac{\|\LASET\|_{1}-1}{2}\right)\Ep\left[\left\|\theta^{r+1}-\theta^{r}\right\|_{2}^{2}\right]-2M_{f}\mathcal{E}^{r}.
    \end{align*}
    Therefore the conclusion holds true.
\end{proof}

We extend Lemma B.1 of \citet{yu2023scaff} as follows:
\begin{lemma}\label{lem:b1}
    Under Assumptions \ref{assump:sets}--\ref{assump:hyperparameters}, for all $\theta\in\Theta,\lambda\in\LASET,r\in[R]$,
    \begin{align*}
        \Ep\left[F\left(\theta^{r+1},\lambda\right)-F\left(\theta,\lambda^{r+1}\right)\right]\le -Z_{r+1}+V_{r}+\Delta_{r}+\left(2\left\|\LASET\right\|_{2}^{2}+\frac{1+4\left(1+\left\|\LASET\right\|_{2}^{2}\right)}{50}\right)\tau_{r}\delta_{g}^{2},
    \end{align*}
    where 
    \begin{align*}
        Z_{r+1}&=\Ep\left[\left\langle q^{r+1},\lambda^{r+1}-\lambda\right\rangle +\frac{1}{2\sigma_{r}}\left\|\lambda^{r+1}-\lambda\right\|_{2}^{2}+\left(\frac{1}{2\tau_{r}}+\frac{m_{f}}{8}-\frac{M_{f}(\|\LASET\|_{1}-1)}{4}\right)\left\|\theta^{r+1}-\theta\right\|_{2}^{2}+\frac{1}{2\alpha_{r+1}}\left\|q^{r+1}\right\|_{2}^{2}\right],\\
        V_{r}&=\Ep\left[\varsigma_{r}\left\langle q^{r},\lambda^{r}-\lambda\right\rangle +\frac{1}{2\sigma_{r}}\left\|\lambda^{r}-\lambda\right\|_{2}^{2}+\frac{1}{2\tau_{r}}\left\|\theta-\theta^{r}\right\|_{2}^{2}+\frac{\varsigma_{r}}{2\alpha_{r}}\left\|q^{r}\right\|_{2}^{2}\right],\\
        \Delta_{r}&=\Ep\left[\left(\frac{\alpha_{r}\varsigma_{r}}{2}-\frac{1}{2\sigma_{r}}\right)\left\|\lambda^{r+1}-\lambda^{r}\right\|_{2}^{2}+\left(\frac{L_{\lambda\theta}^{2}}{2\alpha_{r+1}}+\frac{9M_{f}\|\LASET\|_{1}}{4}-\frac{1}{8\tau_{r}}\right)\left\|\theta^{r+1}-\theta^{r}\right\|_{2}^{2}\right].
    \end{align*}
    Here, $\alpha_{r}>0,r\in[R]$ are arbitrary positive numbers and $q^r$ is defined as
    \begin{align*}
        q^{r}=\nabla_{\lambda}\Phi\left(\theta^{r},\lambda^{r}\right)-\nabla_{\lambda}\Phi\left(\theta^{r-1},\lambda^{r-1}\right).
    \end{align*}
\end{lemma}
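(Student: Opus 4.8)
The plan is to establish the inequality by splitting the duality gap $F(\theta^{r+1},\lambda)-F(\theta,\lambda^{r+1})$ into a \emph{primal} part and a \emph{dual} part and bounding each separately, following the template of Lemma B.1 of \citet{yu2023scaff} but carefully tracking the sign of each coordinate $\lambda_i$. Using $\Phi(\theta,\lambda)=\sum_i\lambda_i f_i(\theta)$, I would first decompose
\begin{align*}
    F(\theta^{r+1},\lambda)-F(\theta,\lambda^{r+1})
    &=\underbrace{\big(\Phi(\theta^{r+1},\lambda)-\Phi(\theta^{r+1},\lambda^{r+1})-\psi(\lambda)+\psi(\lambda^{r+1})\big)}_{\text{dual gap}}\\
    &\quad+\underbrace{\big(\Phi(\theta^{r+1},\lambda^{r+1})-\Phi(\theta,\lambda^{r+1})\big)}_{\text{primal gap}}.
\end{align*}

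For the primal gap I would invoke Lemma \ref{lem:a5}: the optimality of the $\theta$-update gives $\Ep[\langle\Delta\theta^r,\theta^{r+1}-\theta\rangle]\le \tau_r^{-1}\Ep[D(\theta,\theta^r)-D(\theta,\theta^{r+1})-D(\theta^{r+1},\theta^r)]$, while the three lower bounds on $\mathcal{T}_1,\mathcal{T}_2,\mathcal{T}_3$ telescope the $\Phi(\theta^r,\lambda^{r+1})$ terms so that the primal gap appears explicitly. Rearranging yields an upper bound on the primal gap built from the $\theta$-Bregman telescoping pair $\tau_r^{-1}(D(\theta,\theta^r)-D(\theta,\theta^{r+1}))$, a negative multiple $-(\tfrac{m_f}{4}-\tfrac{M_f(\|\LASET\|_1-1)}{2})\|\theta^r-\theta\|_2^2$ of the effective strong-convexity gain, positive multiples of $\|\theta^{r+1}-\theta^r\|_2^2$, the residual $3M_f\mathcal{E}^r$, and an $O(\tau_r\delta_g^2)$ noise term. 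To shift the strong-convexity gain onto the $\theta^{r+1}$-iterate required by $Z_{r+1}$, I would apply $\|\theta^r-\theta\|_2^2\ge\tfrac12\|\theta^{r+1}-\theta\|_2^2-\|\theta^{r+1}-\theta^r\|_2^2$, which produces exactly the coefficient $\tfrac{m_f}{8}-\tfrac{M_f(\|\LASET\|_1-1)}{4}$ in $Z_{r+1}$ and dumps the remainder into the $\|\theta^{r+1}-\theta^r\|_2^2$ budget. The residual $\mathcal{E}^r$ is then removed by feeding Lemma \ref{lem:a3} into Lemma \ref{lem:a2}, bounding $\mathcal{E}^r$ by a small multiple of $\|\theta^{r+1}-\theta^r\|_2^2$ plus noise; the smallness needed to absorb $\mathcal{E}^r$ appearing on both sides is guaranteed by the bounds on $\tau_0$ in Assumption \ref{assump:hyperparameters}.

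For the dual gap I would use the linearity of $\Phi$ in $\lambda$, so the gap equals $\langle L^{r+1},\lambda-\lambda^{r+1}\rangle+\psi(\lambda^{r+1})-\psi(\lambda)$ with $L^{r+1}=(f_1(\theta^{r+1}),\dots,f_n(\theta^{r+1}))=\nabla_\lambda\Phi(\theta^{r+1},\cdot)$. Lemma \ref{lem:a4}, applied to the proximal $\lambda$-update whose extrapolated vector is $s^r=L^r+\varsigma_r q^r$, bounds $\psi(\lambda^{r+1})-\psi(\lambda)$ and reduces the dual gap to $\langle L^{r+1}-s^r,\lambda-\lambda^{r+1}\rangle$ plus the $\lambda$-Bregman telescoping pair. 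Since $L^{r+1}-s^r=q^{r+1}-\varsigma_r q^r$, this splits into $\langle q^{r+1},\lambda-\lambda^{r+1}\rangle$ (matching the $\langle q^{r+1},\lambda^{r+1}-\lambda\rangle$ entry of $-Z_{r+1}$) and $-\varsigma_r\langle q^r,\lambda-\lambda^{r+1}\rangle$. Writing $\lambda-\lambda^{r+1}=(\lambda-\lambda^r)+(\lambda^r-\lambda^{r+1})$ turns the latter into $V_r$'s $\varsigma_r\langle q^r,\lambda^r-\lambda\rangle$ plus a cross term $\varsigma_r\langle q^r,\lambda^{r+1}-\lambda^r\rangle$, which I would split by Young's inequality with the free parameter $\alpha_r$ into $\tfrac{\varsigma_r}{2\alpha_r}\|q^r\|_2^2$ (matching $V_r$) and $\tfrac{\alpha_r\varsigma_r}{2}\|\lambda^{r+1}-\lambda^r\|_2^2$ (entering $\Delta_r$ alongside the $-\tfrac{1}{2\sigma_r}\|\lambda^{r+1}-\lambda^r\|_2^2$ from Lemma \ref{lem:a4}). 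The $m_\psi$-strong-convexity term of Lemma \ref{lem:a4} is nonpositive and is simply discarded. Finally, the stray $\|q^{r+1}\|_2^2$ carried by $Z_{r+1}$ is reconciled with the $\tfrac{L_{\lambda\theta}^2}{2\alpha_{r+1}}\|\theta^{r+1}-\theta^r\|_2^2$ entry of $\Delta_r$ through Assumption \ref{assump:smoothInBoth}, i.e.\ $\|q^{r+1}\|_2\le L_{\lambda\theta}\|\theta^{r+1}-\theta^r\|_2$, so that adding $\tfrac{1}{2\alpha_{r+1}}\|q^{r+1}\|_2^2$ to $Z_{r+1}$ is exactly compensated by the matching term in $\Delta_r$.

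The main obstacle is the bookkeeping for possibly negative coordinates $\lambda_i<0$, which is precisely what distinguishes this lemma from its \citet{yu2023scaff} counterpart. In Lemma \ref{lem:a5} each per-client contribution to $\mathcal{T}_1,\mathcal{T}_2$ is controlled by the \emph{lower} quadratic ($m_f$) bound when $\lambda_i\ge0$ but by the \emph{upper} quadratic ($M_f$) bound when $\lambda_i<0$, so the aggregate curvature coefficient becomes $\tfrac{m_f}{4}+(M_f-\tfrac{3m_f}{4})\sum_i\lambda_i\mathbb{I}_{(-\infty,0)}(\lambda_i)$; Lemma \ref{lem:max:sign:general} rewrites $\sum_i\lambda_i\mathbb{I}_{(-\infty,0)}(\lambda_i)$ as $-\tfrac12(\|\LASET\|_1-1)$, and it is exactly the constraint $\|\LASET\|_1<1+\tfrac{m_f}{2M_f}$ of Assumption \ref{assump:hyperparameters} that keeps the effective coefficient $\tfrac{m_f}{4}-\tfrac{M_f(\|\LASET\|_1-1)}{2}$ positive. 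Carrying these sign-dependent constants consistently through the $\mathcal{T}_2$ bound and the $\mathcal{E}^r$ absorption, collecting all $\|\theta^{r+1}-\theta^r\|_2^2$ contributions into the coefficient $\tfrac{9M_f\|\LASET\|_1}{4}$ of $\Delta_r$, and verifying via the $\tau_0$ and $\tfrac{4L_{\lambda\theta}^2\tau_0^2\gamma_0}{c_\alpha}+27M_f\tau_0\le1$ bounds in Assumption \ref{assump:hyperparameters} that the leftover coefficient $-\tfrac{1}{8\tau_r}$ has the correct sign and that the noise aggregates to $(2\|\LASET\|_2^2+\tfrac{1+4(1+\|\LASET\|_2^2)}{50})\tau_r\delta_g^2$, is the delicate, constant-chasing part of the argument.
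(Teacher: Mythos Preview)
Your proposal is correct and follows essentially the same approach as the paper: the same primal/dual decomposition, the same use of Lemmas \ref{lem:a4} and \ref{lem:a5}, the same shift $\|\theta^r-\theta\|_2^2\ge\tfrac12\|\theta^{r+1}-\theta\|_2^2-\|\theta^{r+1}-\theta^r\|_2^2$, the same Young-inequality split of $\varsigma_r\langle q^r,\lambda^{r+1}-\lambda^r\rangle$ with parameter $\alpha_r$, and the same use of Assumption \ref{assump:smoothInBoth} to trade $\tfrac{1}{2\alpha_{r+1}}\|q^{r+1}\|_2^2$ for $\tfrac{L_{\lambda\theta}^2}{2\alpha_{r+1}}\|\theta^{r+1}-\theta^r\|_2^2$. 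One minor inaccuracy in your sketch: the $\mathcal{E}^r$ term is not absorbed into $\|\theta^{r+1}-\theta^r\|_2^2$ directly; rather, the paper combines $3M_f\mathcal{E}^r-\tfrac{1}{8\tau_r}\Ep[\|\theta^{r+1}-\theta^r\|_2^2]=:\mathcal{T}_4$, uses Lemma \ref{lem:a3} to convert the negative $\|\theta^{r+1}-\theta^r\|_2^2$ term into $-\tfrac{\tau_r}{16}\Ep[\|\nabla_\theta\Phi(\theta^r,\lambda^{r+1})\|_2^2]$ plus a small extra $\mathcal{E}^r$, then applies Lemma \ref{lem:a2} to bound all of $\mathcal{E}^r$ by $\|\nabla_\theta\Phi\|_2^2$ plus noise, and finally checks via $\tau_r\le(1200M_f\|\LASET\|_1)^{-1}$ that the net $\|\nabla_\theta\Phi\|_2^2$ coefficient is nonpositive---leaving only the stated noise constant.
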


\begin{proof}
    We obtain the following inequalities:
    \begin{align*}
        &\Ep\left[\Phi\left(\theta^{r+1},\lambda\right)-\Phi\left(\theta,\lambda^{r+1}\right)\right]\\
        &\le \Ep\left[\Phi\left(\theta^{r+1},\lambda\right)-\Phi\left(\theta^{r},\lambda^{r+1}\right)\right]-\mathcal{T}_{2}-\mathcal{T}_{3}+M_{f}\mathcal{E}^{r}\\
        &\quad+\Ep\left[\frac{1}{\tau_{r}}\left(D(\theta,\theta^{r})-D(\theta,\theta^{r+1})-D(\theta^{r},\theta^{r+1})\right)\right]-\Ep\left[\left(\frac{m_{f}}{4}-\frac{M_{f}(\|\LASET\|_{1}-1)}{2}\right)\left\|\theta^{r}-\theta\right\|_{2}^{2}\right]\\
        &\le  \Ep\left[\Phi\left(\theta^{r+1},\lambda\right)-\Phi\left(\theta^{r+1},\lambda^{r+1}\right)+\Phi\left(\theta^{r+1},\lambda^{r+1}\right)-\Phi\left(\theta^{r},\lambda^{r+1}\right)\right]-\mathcal{T}_{2}+M_{f}\mathcal{E}^{r}+\frac{2\left\|\LASET\right\|_{2}^{2}\tau_{r}\delta_{g}^{2}}{J}\\
        &\quad+\Ep\left[\frac{1}{\tau_{r}}\left(D(\theta,\theta^{r})-D(\theta,\theta^{r+1})-\frac{1}{2}D(\theta^{r},\theta^{r+1})\right)\right]-\Ep\left[\left(\frac{m_{f}}{4}-\frac{M_{f}(\|\LASET\|_{1}-1)}{2}\right)\left\|\theta^{r}-\theta\right\|_{2}^{2}\right]\\
        &\le  \Ep\left[\Phi\left(\theta^{r+1},\lambda\right)-\Phi\left(\theta^{r+1},\lambda^{r+1}\right)+\Phi\left(\theta^{r+1},\lambda^{r+1}\right)-\Phi\left(\theta^{r},\lambda^{r+1}\right)\right]-\mathcal{T}_{2}+M_{f}\mathcal{E}^{r}+\frac{2\left\|\LASET\right\|_{2}^{2}\tau_{r}\delta_{g}^{2}}{J}\\
        &\quad+\underbrace{\Ep\left[\frac{1}{\tau_{r}}\left(D(\theta,\theta^{r})-D(\theta,\theta^{r+1})-\frac{1}{2}D(\theta^{r},\theta^{r+1})\right)\right]-\Ep\left[\left(\frac{m_{f}}{8}-\frac{M_{f}(\|\LASET\|_{1}-1)}{4}\right)\left\|\theta^{r+1}-\theta\right\|_{2}^{2}\right]}_{=:A^{r}}\\
        &\quad+\Ep\left[\left(\frac{m_{f}}{4}-\frac{M_{f}(\|\LASET\|_{1}-1)}{2}\right)\left\|\theta^{r+1}-\theta^{r}\right\|_{2}^{2}\right]\\
        &\le \Ep\left[\Phi\left(\theta^{r+1},\lambda\right)-\Phi\left(\theta^{r+1},\lambda^{r+1}\right)\right]+\frac{9M_{f}\|\LASET\|_{1}}{4}\Ep\left[\left\|\theta^{r+1}-\theta^{r}\right\|_{2}^{2}\right]\\
        & \quad +\frac{2\left\|\LASET\right\|_{2}^{2}\tau_{r}\delta_{g}^{2}}{J}+3M_{f}\mathcal{E}^{r}+A^{r}
    \end{align*}
    by Lemma \ref{lem:a5} in the first and second inequalities, and $-\|x-y\|_{2}^{2}\le -(1/2)\|x-z\|_{2}^{2}+\|y-z\|_{2}^{2}$ for all $x,y,z\in\R^{d}$ in the third inequality.
    The concavity of $\Phi$ with respect to $\lambda$ yields
    \begin{align*}
        \Phi\left(\theta^{r+1},\lambda\right)-\Phi\left(\theta^{r+1},\lambda^{r+1}\right)\le \left\langle \nabla_{\lambda}\Phi\left(\theta^{r+1},\lambda^{r+1}\right),\lambda-\lambda^{r+1}\right\rangle\text{ a.s.}
    \end{align*}
    Therefore, it holds
    \begin{align*}
        \Ep\left[\Phi\left(\theta^{r+1},\lambda\right)-\Phi\left(\theta,\lambda^{r+1}\right)\right]
        &\le \Ep\left[\left\langle \nabla_{\lambda}\Phi\left(\theta^{r+1},\lambda^{r+1}\right),\lambda-\lambda^{r+1}\right\rangle\right]\\
        &\quad+\frac{9M_{f}\|\LASET\|_{1}}{4}\Ep\left[\left\|\theta^{r+1}-\theta^{r}\right\|_{2}^{2}\right]\\
        &\quad+\frac{2\left\|\LASET\right\|_{2}^{2}\tau_{r}\delta_{g}^{2}}{J}+3M_{f}\mathcal{E}^{r}+A^{r}.
    \end{align*}
    Noting the following equalities
    \begin{align*}
        &\left\langle \nabla_{\lambda}\Phi\left(\theta^{r+1},\lambda^{r+1}\right),\lambda-\lambda^{r+1}\right\rangle+\left\langle s^{r},\lambda^{r+1}-\lambda\right\rangle\\
        &=-\left\langle \nabla_{\lambda}\Phi\left(\theta^{r+1},\lambda^{r+1}\right),\lambda^{r+1}-\lambda\right\rangle+\left\langle \left(1+\varsigma_{r}\right)\nabla_{\lambda}\Phi\left(\theta^{r},\lambda^{r}\right)-\varsigma_{r}\nabla_{\lambda}\Phi\left(\theta^{r-1},\lambda^{r-1}\right),\lambda^{r+1}-\lambda\right\rangle\\
        &=-\left\langle \nabla_{\lambda}\Phi\left(\theta^{r+1},\lambda^{r+1}\right)-\nabla_{\lambda}\Phi\left(\theta^{r},\lambda^{r}\right),\lambda^{r+1}-\lambda\right\rangle\\
        &\quad+\varsigma_{r}\left\langle \nabla_{\lambda}\Phi\left(\theta^{r},\lambda^{r}\right)-\nabla_{\lambda}\Phi\left(\theta^{r-1},\lambda^{r-1}\right),\lambda^{r+1}-\lambda\right\rangle\\
        &=-\left\langle q^{r+1},\lambda^{r+1}-\lambda\right\rangle+\varsigma_{r}\left\langle q^{r},\lambda^{r+1}-\lambda\right\rangle,
    \end{align*}
    and the inequality by Lemma \ref{lem:a4}
    \begin{align*}
        \Ep\left[\psi(\lambda^{r+1})-\left\langle s^{r+1},\lambda^{r+1}\right\rangle 
        \right]&\le \Ep\left[\psi\left(\lambda\right)-\left\langle s^{r+1},\lambda\right\rangle\right]\\
        &\quad+\underbrace{\frac{1}{\sigma_{r}}\Ep\left[D\left(\lambda,\lambda^{r}\right)-D\left(\lambda,\lambda^{r+1}\right)-D\left(\lambda^{r+1},\lambda^{r}\right)\right]}_{=:B^{r}},
    \end{align*}
    we have
    \begin{align*}
        &\Ep\left[F\left(\theta^{r+1},\lambda\right)-F\left(\theta,\lambda^{r+1}\right)\right]\\
        &=\Ep\left[\Phi\left(\theta^{r+1},\lambda\right)-\psi\left(\lambda\right)-\Phi\left(\theta,\lambda^{r+1}\right)+\psi\left(\lambda^{r+1}\right)\right]\\
        &\le \Ep\left[\left\langle \nabla_{\lambda}\Phi\left(\theta^{r+1},\lambda^{r+1}\right),\lambda-\lambda^{r+1}\right\rangle+\left\langle s^{r},\lambda^{r+1}-\lambda\right\rangle\right]\\
        &\quad+\frac{9M_{f}\|\LASET\|_{1}}{4}\Ep\left[\left\|\theta^{r+1}-\theta^{r}\right\|_{2}^{2}\right]+3M_{f}\mathcal{E}^{r}+A^{r}+B^{r}+\frac{2\left\|\LASET\right\|_{2}^{2}\tau_{r}\delta_{g}^{2}}{J}\\
        &= \Ep\left[-\left\langle q^{r+1},\lambda^{r+1}-\lambda\right\rangle+\varsigma_{r}\left\langle q^{r},\lambda^{r+1}-\lambda\right\rangle\right]\\
        &\quad+\frac{9M_{f}\|\LASET\|_{1}}{4}\Ep\left[\left\|\theta^{r+1}-\theta^{r}\right\|_{2}^{2}\right]+3M_{f}\mathcal{E}^{r}+A^{r}+B^{r}+\frac{2\left\|\LASET\right\|_{2}^{2}\tau_{r}\delta_{g}^{2}}{J}\\
        &= \Ep\left[-\left\langle q^{r+1},\lambda^{r+1}-\lambda\right\rangle\right]+\varsigma_{r}\Ep\left[\left\langle q^{r},\lambda^{r}-\lambda\right\rangle\right]+\varsigma_{r}\Ep\left[\left\langle q^{r},\lambda^{r+1}-\lambda^{r}\right\rangle\right]\\
        &\quad+\frac{9M_{f}\|\LASET\|_{1}}{4}\Ep\left[\left\|\theta^{r+1}-\theta^{r}\right\|_{2}^{2}\right]+3M_{f}\mathcal{E}^{r}+A^{r}+B^{r}+\frac{2\left\|\LASET\right\|_{2}^{2}\tau_{r}\delta_{g}^{2}}{J}.
    \end{align*}
    Moreover, the following evaluation holds almost surely:
    \begin{align*}
        \varsigma_{r}\left\langle q^{r},\lambda^{r+1}-\lambda^{r}\right\rangle 
        &\le\frac{\varsigma_{r}}{2\alpha_{r}}\left\|q^{r}\right\|_{2}^{2}+\frac{\alpha_{r}\varsigma_{r}}{2}\left\|\lambda^{r+1}-\lambda^{r}\right\|_{2}^{2}
    \end{align*}
    Hence, we obtain
    \begin{align*}
        &\Ep\left[F\left(\theta^{r+1},\lambda\right)-F\left(\theta,\lambda^{r+1}\right)\right]\\
        &\le \Ep\left[-\left\langle q^{r+1},\lambda^{r+1}-\lambda\right\rangle\right]+\varsigma_{r}\Ep\left[\left\langle q^{r},\lambda^{r}-\lambda\right\rangle\right]+\Ep\left[\frac{\varsigma_{r}}{2\alpha_{r}}\left\|q^{r}\right\|_{2}^{2}+\frac{\alpha_{r}\varsigma_{r}}{2}\left\|\lambda^{r+1}-\lambda^{r}\right\|_{2}^{2}\right]\\
        &\quad+\frac{9M_{f}\|\LASET\|_{1}}{4}\Ep\left[\left\|\theta^{r+1}-\theta^{r}\right\|_{2}^{2}\right]+3M_{f}\mathcal{E}^{r}+A^{r}+B^{r}+\frac{2\left\|\LASET\right\|_{2}^{2}\tau_{r}\delta_{g}^{2}}{J}\\
        &\le \Ep\left[-\left\langle q^{r+1},\lambda^{r+1}-\lambda\right\rangle\right]+\varsigma_{r}\Ep\left[\left\langle q^{r},\lambda^{r}-\lambda\right\rangle\right]+\Ep\left[\frac{\varsigma_{r}}{2\alpha_{r}}\left\|q^{r}\right\|_{2}^{2}+\frac{\alpha_{r}\varsigma_{r}}{2}\left\|\lambda^{r+1}-\lambda^{r}\right\|_{2}^{2}\right]\\
        &\quad+\frac{9M_{f}\|\LASET\|_{1}}{4}\Ep\left[\left\|\theta^{r+1}-\theta^{r}\right\|_{2}^{2}\right]+3M_{f}\mathcal{E}^{r}+\frac{2\left\|\LASET\right\|_{2}^{2}\tau_{r}\delta_{g}^{2}}{J}\\
        &\quad+\frac{1}{\tau_{r}}\Ep\left[D(\theta,\theta^{r})-D(\theta,\theta^{r+1})-\frac{1}{2}D(\theta^{r},\theta^{r+1})\right]-\left(\frac{m_{f}}{8}-\frac{M_{f}(\|\LASET\|_{1}-1)}{4}\right)\Ep\left[\left\|\theta^{r+1}-\theta\right\|_{2}^{2}\right]\\
        &\quad+\frac{1}{\sigma_{r}}\Ep\left[D\left(\lambda,\lambda^{r}\right)-D\left(\lambda,\lambda^{r+1}\right)-D\left(\lambda^{r+1},\lambda^{r}\right)\right]\\
        &\le -\underbrace{\Ep\left[\left\langle q^{r+1},\lambda^{r+1}-\lambda\right\rangle+\frac{1}{\sigma_{r}}D\left(\lambda,\lambda^{r+1}\right)+\frac{1}{\tau_{r}}D(\theta,\theta^{r+1})+\left(\frac{m_{f}}{8}-\frac{M_{f}(\|\LASET\|_{1}-1)}{4}\right)\left\|\theta^{r+1}-\theta\right\|_{2}^{2}+\frac{\left\|q^{r+1}\right\|_{2}^{2}}{2\alpha_{r+1}}\right]}_{=Z_{r+1}}\\
        &\quad+\underbrace{\Ep\left[\varsigma_{r}\left\langle q^{r},\lambda^{r}-\lambda\right\rangle+\frac{1}{\sigma_{r}}D\left(\lambda,\lambda^{r}\right)+\frac{1}{\tau_{r}}D(\theta,\theta^{r})+\frac{\varsigma_{r}}{2\alpha_{r}}\left\|q^{r}\right\|_{2}^{2}\right]}_{=V_{r}}\\
        &\quad+\frac{\alpha_{r}\varsigma_{r}}{2}\Ep\left[\left\|\lambda^{r+1}-\lambda^{r}\right\|_{2}^{2}\right]+\frac{2\left\|\LASET\right\|_{2}^{2}\tau_{r}\delta_{g}^{2}}{J}\\
        &\quad+\Ep\left[\frac{\left\|q^{r+1}\right\|_{2}^{2}}{2\alpha_{r+1}}+\frac{9M_{f}\|\LASET\|_{1}}{4}\left\|\theta^{r+1}-\theta^{r}\right\|_{2}^{2}+3M_{f}\mathcal{E}^{r}-\frac{1}{2\tau_{r}}D(\theta^{r},\theta^{r+1})-\frac{1}{\sigma_{r}}D\left(\lambda^{r+1},\lambda^{r}\right)\right]\\
        &\le -Z_{r+1}+V_{r}+\frac{\alpha_{r}\varsigma_{r}}{2}\Ep\left[\left\|\lambda^{r+1}-\lambda^{r}\right\|_{2}^{2}\right]+\frac{L_{\lambda\theta}^{2}}{2\alpha_{r+1}}\Ep\left[\left\|\theta^{r+1}-\theta^{r}\right\|_{2}^{2}\right]+\frac{2\left\|\LASET\right\|_{2}^{2}\tau_{r}\delta_{g}^{2}}{J}\\
        &\quad+\frac{9M_{f}\|\LASET\|_{1}}{4}\Ep\left[\left\|\theta^{r+1}-\theta^{r}\right\|_{2}^{2}\right]+3M_{f}\mathcal{E}^{r}+\Ep\left[-\frac{1}{2\tau_{r}}D(\theta^{r},\theta^{r+1})-\frac{1}{\sigma_{r}}D\left(\lambda^{r+1},\lambda^{r}\right)\right]\\
        &= -Z_{r+1}+V_{r}+\left(\frac{\alpha_{r}\varsigma_{r}}{2}-\frac{1}{2\sigma_{r}}\right)\Ep\left[\left\|\lambda^{r+1}-\lambda^{r}\right\|_{2}^{2}\right]\\
        &\quad+\left(\frac{L_{\lambda\theta}^{2}}{2\alpha_{r+1}}+\frac{9M_{f}\|\LASET\|_{1}}{4}-\frac{1}{8\tau_{r}}\right)\Ep\left[\left\|\theta^{r+1}-\theta^{r}\right\|_{2}^{2}\right]+2\left\|\LASET\right\|_{2}^{2}\tau_{r}\delta_{g}^{2}\\
        &\quad+\underbrace{3M_{f}\mathcal{E}^{r}-\frac{1}{8\tau_{r}}\Ep\left[\left\|\theta^{r+1}-\theta^{r}\right\|_{2}^{2}\right]}_{=\mathcal{T}_{4}}.
    \end{align*}

    We give an upper bound for $\mathcal{T}_{4}$.
    Lemmas \ref{lem:a2} and \ref{lem:a3} yields the following bound (note that $\beta\ge1$, $\tau_{r}\le (1200\|\LASET\|_{1}M_{f})^{-1}$):
    \begin{align*}
        \mathcal{T}_{4}&\le 3M_{f}\mathcal{E}^{r}+\frac{\tau_{r}M_{f}^{2}}{8}\mathcal{E}^{r}-\frac{\tau_{r}}{16}\Ep\left[\left\|\nabla_{\theta}\Phi\left(\theta^{r},\lambda^{r+1}\right)\right\|_{2}^{2}\right]\\
        &\le \left(3M_{f}+\frac{\tau_{r}M_{f}^{2}}{8}\right)\left\|\LASET\right\|_{1}\left(\frac{24\tau_{r}^{2}}{\beta^{2}}\Ep\left[\left\|\nabla_{\theta}\Phi\left(\theta^{r},\lambda^{r+1}\right)\right\|_{2}^{2}\right]+\frac{24\tau_{r}^{2}\delta_{g}^{2}}{\beta^{2}}\left(1+\left\|\LASET\right\|_{2}^{2}\right)+6J\eta_{r}^{2}\delta_{g}^{2}\right)\\
        &\quad-\frac{\tau_{r}}{16}\Ep\left[\left\|\nabla_{\theta}\Phi\left(\theta^{r},\lambda^{r+1}\right)\right\|_{2}^{2}\right]\\
        &=\left(\left\|\LASET\right\|_{1}\left(24+\tau_{r}M_{f}\right)\frac{3M_{f}\tau_{r}^{2}}{\beta^{2}}-\frac{\tau_{r}}{16}\right)\Ep\left[\left\|\nabla_{\theta}\Phi\left(\theta^{r},\lambda^{r+1}\right)\right\|_{2}^{2}\right]\\
        &\quad+\left\|\LASET\right\|_{1}\left(3M_{f}+\frac{\tau_{r}M_{f}^{2}}{8}\right)\left(\frac{24\tau_{r}^{2}\delta_{g}^{2}}{\beta^{2}}\left(1+\left\|\LASET\right\|_{2}^{2}\right)+6J\eta_{r}^{2}\delta_{g}^{2}\right)\\
        &\le \frac{\tau_{r}}{16}\left(1200\left\|\LASET\right\|_{1}M_{f}\tau_{r}-1\right)\Ep\left[\left\|\nabla_{\theta}\Phi\left(\theta^{r},\lambda^{r+1}\right)\right\|_{2}^{2}\right]\\
        &\quad+6\left\|\LASET\right\|_{1}\left(3+\frac{\tau_{r}M_{f}}{8}\right)\left(4\tau_{r}^{2}\left(1+\left\|\LASET\right\|_{2}^{2}\right)+J\eta_{r}^{2}\right)M_{f}\delta_{g}^{2}\\
        &\le 24\left\|\LASET\right\|_{1}\left(4\tau_{r}^{2}\left(1+\left\|\LASET\right\|_{2}^{2}\right)+J\left(\frac{\tau_{r}}{J}\right)^{2}\right)M_{f}\delta_{g}^{2}\\
        &\le 24\left\|\LASET\right\|_{1}\left(4\left(1+\left\|\LASET\right\|_{2}^{2}\right)+1\right)M_{f}\tau_{r}^{2}\delta_{g}^{2}\\
        &\le \frac{1+4\left(1+\left\|\LASET\right\|_{2}^{2}\right)}{50}\tau_{r}\delta_{g}^{2};
    \end{align*}
    Hence we obtain the conclusion.
\end{proof}

\subsection{Hyperparameter Setting}

We use the notation
$\{t_{r}:r\in[R]_{0}\}$ such that
    \begin{align}
        t_{r}=\sigma_{r}/\sigma_{0}.
    \end{align}

\begin{lemma}\label{lem:b2}
    Under Assumption \ref{assump:hyperparameters}, for all $r\in[R]_{0}$,
    \begin{align*}
        t_{r}\left(\frac{1}{\tau_{r}}+\frac{m_{f}}{4}-\frac{M_{f}(\|\LASET\|_{1}-1)}{2}\right)\ge \frac{t_{r+1}}{\tau_{r+1}},\ \frac{t_{r}}{\sigma_{r}}\ge\frac{t_{r+1}}{\sigma_{r+1}},\ \frac{t_{r}}{t_{r+1}}=\varsigma_{r+1}.
    \end{align*}
\end{lemma}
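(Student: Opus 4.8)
The plan is to observe that all three relations are in fact exact algebraic consequences of the recursive hyperparameter choices in Assumption \ref{assump:hyperparameters}, so the proof amounts to substituting the definitions and checking that each of the three ``inequalities'' in fact holds with equality. Throughout I would abbreviate $\kappa := \frac{m_{f}}{4}-\frac{M_{f}(\|\LASET\|_{1}-1)}{2}$, which is strictly positive because the assumption $\|\LASET\|_{1}<1+\frac{m_{f}}{2M_{f}}$ gives $M_{f}(\|\LASET\|_{1}-1)<m_{f}/2$. Since $\gamma_{0}>0$ and $\gamma_{r+1}=\gamma_{r}(1+\kappa\tau_{r})$ with $\tau_{r}>0$, an immediate induction shows $\gamma_{r}>0$ for all $r$; this guarantees that the recursion $\tau_{r+1}=\tau_{r}\sqrt{\gamma_{r}/\gamma_{r+1}}$ is well-defined and that $t_{r}=\sigma_{r}/\sigma_{0}=\gamma_{r}\tau_{r}/\sigma_{0}>0$.

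The two rightmost relations are immediate. For the third, $\frac{t_{r}}{t_{r+1}}=\frac{\sigma_{r}/\sigma_{0}}{\sigma_{r+1}/\sigma_{0}}=\frac{\sigma_{r}}{\sigma_{r+1}}=\varsigma_{r+1}$ directly from the definitions $t_{r}=\sigma_{r}/\sigma_{0}$ and $\varsigma_{r+1}=\sigma_{r}/\sigma_{r+1}$. For the middle relation, both sides collapse to the same constant, $\frac{t_{r}}{\sigma_{r}}=\frac{\sigma_{r}/\sigma_{0}}{\sigma_{r}}=\frac{1}{\sigma_{0}}=\frac{t_{r+1}}{\sigma_{r+1}}$, so the inequality holds (with equality).

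The first relation is the only one requiring a short computation, and I would handle it by expressing both sides through $\gamma$. Using $t_{r}=\gamma_{r}\tau_{r}/\sigma_{0}$ and $\sigma_{r}=\gamma_{r}\tau_{r}$, the left-hand side is
\begin{equation*}
    t_{r}\left(\frac{1}{\tau_{r}}+\kappa\right)=\frac{\gamma_{r}\tau_{r}}{\sigma_{0}}\cdot\frac{1+\kappa\tau_{r}}{\tau_{r}}=\frac{\gamma_{r}(1+\kappa\tau_{r})}{\sigma_{0}}=\frac{\gamma_{r+1}}{\sigma_{0}},
\end{equation*}
where the last step is exactly the recursion $\gamma_{r+1}=\gamma_{r}(1+\kappa\tau_{r})$. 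On the other hand, using $\sigma_{r+1}=\gamma_{r+1}\tau_{r+1}$ (so the square-root update of $\tau$ never even enters),
\begin{equation*}
    \frac{t_{r+1}}{\tau_{r+1}}=\frac{\sigma_{r+1}/\sigma_{0}}{\tau_{r+1}}=\frac{\gamma_{r+1}\tau_{r+1}}{\sigma_{0}\tau_{r+1}}=\frac{\gamma_{r+1}}{\sigma_{0}}.
\end{equation*}
Hence both sides equal $\gamma_{r+1}/\sigma_{0}$ and the first relation also holds with equality. There is no genuine analytic obstacle here; the only point to be careful about is the bookkeeping that $\kappa>0$ (hence $\gamma_{r}$ is positive and increasing), which is precisely what makes the recursions well-posed and lets me cancel $\tau_{r+1}$ cleanly in the last display.
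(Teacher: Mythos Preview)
Your proof is correct and follows essentially the same route as the paper's: both reduce the three relations to direct consequences of the definitions $t_{r}=\sigma_{r}/\sigma_{0}$, $\sigma_{r}=\gamma_{r}\tau_{r}$, $\varsigma_{r+1}=\sigma_{r}/\sigma_{r+1}$ and the recursion $\gamma_{r+1}=\gamma_{r}(1+\kappa\tau_{r})$. Your presentation is in fact a bit cleaner, making explicit that all three relations hold with equality (both sides of the first one being $\gamma_{r+1}/\sigma_{0}$), whereas the paper rewrites the first inequality as $1+\kappa\tau_{r}\ge \frac{\tau_{r}}{\tau_{r+1}}\frac{\sigma_{r+1}}{\sigma_{r}}$ and observes it holds without stating that the right-hand side also equals $\gamma_{r+1}/\gamma_{r}$.
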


\begin{proof}
The second and third statements are trivial by the definitions.
Due to the updates of $\gamma_{r}$ and $\sigma_{r}$, for all $r\in[R]_{0}$,
    \begin{align*}
        1+\frac{m_{f}\tau_{r}}{4}-\frac{M_{f}(\|\LASET\|_{1}-1)\tau_{r}}{2}=\frac{\gamma_{r+1}}{\gamma_{r}}=\frac{\sigma_{r+1}}{\tau_{r+1}}\frac{\tau_{r}}{\sigma_{r}}.
    \end{align*}
    The first statement is equivalent to the following inequality, which holds for all $r\in[R]_{0}$ by the assumption:
    \begin{align*}
        1+\frac{m_{f}\tau_{r}}{4}-\frac{M_{f}(\|\LASET\|_{1}-1)\tau_{r}}{2}\ge\frac{\tau_{r}}{\tau_{r+1}}\frac{t_{r+1}}{t_{r}}=\frac{\tau_{r}}{\tau_{r+1}}\frac{\sigma_{r+1}}{\sigma_{r}}.
    \end{align*}
    Hence we obtain the conclusion.
\end{proof}

\begin{lemma}\label{lem:b3}
    Under Assumption \ref{assump:hyperparameters},
    we have
    \begin{align*}
        \frac{\tau_{r}}{\sigma_{r}}=\frac{1}{\gamma_{r}}=\mathcal{O}\left(\frac{1}{r^{2}}\right),\ \gamma_{r}=\Omega\left(r^{2}\right),\ \sigma_{r}=\Omega\left(r\right),\ \tau_{r}\sigma_{r}=\tau_{0}^{2}\gamma_{0}.
    \end{align*}
\end{lemma}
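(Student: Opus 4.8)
The plan is to first extract the two exact algebraic invariants hidden in the hyperparameter recursions, and then convert the $\gamma_r$-recursion into an autonomous scalar recursion whose growth rate I can pin down by a telescoping argument on $\sqrt{\gamma_r}$. The four claims then fall out by combining these pieces.

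First I would record the immediate identity $\tau_r/\sigma_r = \tau_r/(\gamma_r\tau_r) = 1/\gamma_r$, which reduces the first claim to establishing $\gamma_r = \Omega(r^2)$. Next, squaring the update $\tau_{r+1} = \tau_r\sqrt{\gamma_r/\gamma_{r+1}}$ gives $\gamma_{r+1}\tau_{r+1}^2 = \gamma_r\tau_r^2$, so the quantity $\gamma_r\tau_r^2$ is constant in $r$; hence $\gamma_r\tau_r^2 = \gamma_0\tau_0^2$. Since $\sigma_r = \gamma_r\tau_r$, this is precisely the fourth claim $\tau_r\sigma_r = \gamma_r\tau_r^2 = \tau_0^2\gamma_0$, and it also supplies the closed form $\tau_r = \tau_0\sqrt{\gamma_0/\gamma_r}$ that I will feed into the $\gamma$-recursion.

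The core step is the growth rate of $\gamma_r$. Writing $\kappa := m_f/4 - M_f(\|\LASET\|_1-1)/2$, Assumption \ref{assump:hyperparameters} (namely $\|\LASET\|_1 < 1 + m_f/(2M_f)$) guarantees $\kappa > 0$. Substituting the closed form of $\tau_r$ into $\gamma_{r+1} = \gamma_r(1+\kappa\tau_r)$ collapses it to $\gamma_{r+1} = \gamma_r + c\sqrt{\gamma_r}$ with $c := \kappa\tau_0\sqrt{\gamma_0} > 0$; since $\gamma_0 > 0$, induction gives $\gamma_r \ge \gamma_0$ for all $r$. I would then analyze $\sqrt{\gamma_r}$ directly: from $\gamma_{r+1} \le (\sqrt{\gamma_r} + c/2)^2$ I obtain $\sqrt{\gamma_{r+1}} - \sqrt{\gamma_r} \le c/2$, and feeding this into the exact identity $\sqrt{\gamma_{r+1}} - \sqrt{\gamma_r} = c\sqrt{\gamma_r}/(\sqrt{\gamma_{r+1}} + \sqrt{\gamma_r})$ together with $\gamma_r \ge \gamma_0$ yields the uniform lower bound $\sqrt{\gamma_{r+1}} - \sqrt{\gamma_r} \ge c\sqrt{\gamma_0}/(2\sqrt{\gamma_0} + c/2) =: c' > 0$. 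Telescoping gives $\sqrt{\gamma_r} \ge \sqrt{\gamma_0} + c' r$, so $\gamma_r = \Omega(r^2)$, which settles the second claim and, through $1/\gamma_r$, the first.

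Finally, the third claim follows by combining the invariant with the growth bound: $\sigma_r = \gamma_r\tau_r = \tau_0\sqrt{\gamma_0}\,\sqrt{\gamma_r} = \Omega(r)$. The only genuinely delicate point is the lower bound on the increment $\sqrt{\gamma_{r+1}} - \sqrt{\gamma_r}$, which hinges on the one-sided estimate $\sqrt{\gamma_{r+1}} \le \sqrt{\gamma_r} + c/2$ keeping the denominator $\sqrt{\gamma_{r+1}} + \sqrt{\gamma_r}$ comparable to $2\sqrt{\gamma_r}$; everything else is bookkeeping with the two invariants and is routine.
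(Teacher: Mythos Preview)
Your proposal is correct and follows the same overall skeleton as the paper: both arguments first extract the invariant $\gamma_r\tau_r^2=\gamma_0\tau_0^2$ (equivalently $\tau_r=\tau_0\sqrt{\gamma_0/\gamma_r}$), reduce the $\gamma$-recursion to the autonomous form $\gamma_{r+1}=\gamma_r+c\sqrt{\gamma_r}$ with $c=\kappa\tau_0\sqrt{\gamma_0}$, and read off the remaining claims from $\gamma_r=\Omega(r^2)$.

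The difference is in how the growth rate $\gamma_r=\Omega(r^2)$ is established. The paper proves the explicit bound $\gamma_r\ge(cr/3)^2$ by induction, using $(r/3)^2+r/3\ge((r+1)/3)^2$ for the inductive step and invoking the upper bound on $\tau_0$ from Assumption~\ref{assump:hyperparameters} to verify the base case $r=1$. You instead control the increment of $\sqrt{\gamma_r}$ directly: from $\gamma_{r+1}\le(\sqrt{\gamma_r}+c/2)^2$ and the identity $\sqrt{\gamma_{r+1}}-\sqrt{\gamma_r}=c\sqrt{\gamma_r}/(\sqrt{\gamma_{r+1}}+\sqrt{\gamma_r})$ you obtain a uniform lower bound $c'>0$ on the increment and telescope. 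Your route is slightly cleaner in that it uses only $\gamma_0,\tau_0,\kappa>0$ and does not appeal to the specific upper bound on $\tau_0$; the paper's route has the minor advantage of an explicit constant $1/9$ in the quadratic lower bound. Either way the argument is routine once the autonomous recursion is isolated.
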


\begin{proof}
    As $\tau_{r+1}=\tau_{r}\sqrt{\gamma_{r}/\gamma_{r+1}}$, it holds that $\tau_{r}=\tau_{0}\sqrt{\gamma_{0}/\gamma_{r}}$.
    The update of $\gamma_{r}$ gives
    \begin{align}
        \gamma_{r+1}=\gamma_{r}\left(1+\frac{m_{f}\tau_{r}}{4}-\frac{M_{f}(\|\LASET\|_{1}-1)\tau_{r}}{2}\right)=\gamma_{r}+\left(\frac{m_{f}}{4}-\frac{M_{f}(\|\LASET\|_{1}-1)}{2}\right)\tau_{0}\sqrt{\gamma_{0}\gamma_{r}}.
    \end{align}
    
    We show that the following inequality holds for all $r\in\N_{0}$:
    \begin{align}
        \gamma_{r}\ge \left(\frac{r}{3}\left(\frac{m_{f}}{4}-\frac{M_{f}(\|\LASET\|_{1}-1)}{2}\right)\tau_{0}\sqrt{\gamma_{0}}\right)^{2}.
    \end{align}
    $\gamma_{0}\ge0$ is obvious.
    To use the deduction, we show the bound for $\gamma_{1}$.
    The assumption of $\tau_{0}$ yields
    \begin{align}
        \gamma_{1}=\gamma_{0}\left(1+\left(\frac{m_{f}}{4}-\frac{M_{f}(\|\LASET\|_{1}-1)}{2}\right)\tau_{0}\right)\ge \gamma_{0}\left(\frac{1}{3}\left(\frac{m_{f}}{4}-\frac{M_{f}(\|\LASET\|_{1}-1)}{2}\right)\tau_{0}\right)^{2}.
    \end{align}
    We assume that the inequality holds for $r=r'\in\N$.
    Then,
    \begin{align*}
        \gamma_{r'+1}&=\gamma_{r'}+\left(\frac{m_{f}}{4}-\frac{M_{f}(\|\LASET\|_{1}-1)}{2}\right)\tau_{0}\sqrt{\gamma_{0}\gamma_{r'}}\\
        &\ge \left(\frac{r'}{3}\right)^{2}\left(\left(\frac{m_{f}}{4}-\frac{M_{f}(\|\LASET\|_{1}-1)}{2}\right)\tau_{0}\sqrt{\gamma_{0}}\right)^{2}+\frac{r'}{3}\left(\left(\frac{m_{f}}{4}-\frac{M_{f}(\|\LASET\|_{1}-1)}{2}\right)\tau_{0}\sqrt{\gamma_{0}}\right)^{2}\\
        &\ge \left[\left(\frac{r'}{3}\right)^{2}+\frac{r'}{3}\right]\left(\left(\frac{m_{f}}{4}-\frac{M_{f}(\|\LASET\|_{1}-1)}{2}\right)\tau_{0}\sqrt{\gamma_{0}}\right)^{2}\\
        &\ge \left(\frac{r'+1}{3}\right)^{2}\left(\left(\frac{m_{f}}{4}-\frac{M_{f}(\|\LASET\|_{1}-1)}{2}\right)\tau_{0}\sqrt{\gamma_{0}}\right)^{2}.
    \end{align*}
    It gives
    \begin{align*}
        \sigma_{r}=\gamma_{r}\tau_{r}=\left(\frac{\gamma_{r+1}-\gamma_{r}}{m_{f}/4-M_{f}(\|\LASET\|_{1}-1)/2}\right)^{2}\ge \tau_{0}\sqrt{\gamma_{0}\gamma_{r}}\ge \frac{r}{3}\left(\frac{m_{f}}{4}-\frac{M_{f}(\|\LASET\|_{1}-1)}{2}\right)\tau_{0}^{2}\gamma_{0}
    \end{align*}
    and
    \begin{align*}
        \tau_{r}\sigma_{r}=\frac{\sigma_{r}^{2}}{\gamma_{r}}=\frac{\left(\gamma_{r+1}-\gamma_{r}\right)^{2}}{\left(m_{f}/4-M_{f}(\|\LASET\|_{1}-1)/2\right)^{2}\gamma_{r}}=\tau_{0}^{2}\gamma_{0}.
    \end{align*}
    Therefore, we have
    \begin{align*}
        \frac{\tau_{r}}{\sigma_{r}}=\mathcal{O}\left(\frac{1}{r^{2}}\right),\ \frac{1}{\gamma_{r}}=\mathcal{O}\left(\frac{1}{r^{2}}\right)
    \end{align*}
    and the conclusion holds.
\end{proof}

\begin{proposition}\label{prop:b5}
    Under Assumption \ref{assump:hyperparameters}, for all $r\in[R]_{0}$, $\Delta_{r}\le 0$.
\end{proposition}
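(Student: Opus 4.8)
The plan is to exploit the structure of $\Delta_r$: it is a sum of two terms, each a deterministic coefficient multiplying a nonnegative expected squared norm, so it suffices to choose the free parameters $\alpha_r$ so that both coefficients are nonpositive. Recall from Lemma \ref{lem:b1} that the $\alpha_r>0$ are arbitrary; I would set $\alpha_{r+1}=c_\alpha/\sigma_r$ for each $r$, where $c_\alpha\in(0,1)$ is the constant supplied by Assumption \ref{assump:hyperparameters}. With this choice it remains to verify that the two coefficients
\[
\frac{\alpha_r\varsigma_r}{2}-\frac{1}{2\sigma_r},\qquad \frac{L_{\lambda\theta}^2}{2\alpha_{r+1}}+\frac{9M_f\|\LASET\|_1}{4}-\frac{1}{8\tau_r}
\]
are both $\le 0$.

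For the first coefficient (at $r\ge 1$) I would use $\varsigma_r=\sigma_{r-1}/\sigma_r$, which follows from $\varsigma_{r+1}=\sigma_r/\sigma_{r+1}$, together with $\alpha_r=c_\alpha/\sigma_{r-1}$, so that $\alpha_r\varsigma_r=c_\alpha/\sigma_r\le 1/\sigma_r$ since $c_\alpha<1$; the coefficient then equals $(c_\alpha-1)/(2\sigma_r)\le 0$. The boundary case $r=0$ is handled trivially: the initialization $(\theta^1,\lambda^1)=(\theta^0,\lambda^0)$ forces $\|\lambda^1-\lambda^0\|_2^2=\|\theta^1-\theta^0\|_2^2=0$, whence $\Delta_0=0$ and the (otherwise ill-defined) coefficient at $r=0$ is irrelevant.

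For the second coefficient I would substitute $\alpha_{r+1}=c_\alpha/\sigma_r$ to rewrite it as $\frac{L_{\lambda\theta}^2\sigma_r}{2c_\alpha}+\frac{9M_f\|\LASET\|_1}{4}-\frac{1}{8\tau_r}$, and multiply by $8\tau_r>0$ to reduce its nonpositivity to $\frac{4\tau_r\sigma_r L_{\lambda\theta}^2}{c_\alpha}+18M_f\|\LASET\|_1\tau_r\le 1$. The key simplification is that $\tau_r\sigma_r=\tau_0^2\gamma_0$ is constant in $r$ by Lemma \ref{lem:b3}, which removes the $r$-dependence from the first summand. For the second summand I would use $\tau_r\le\tau_0$ (the $\tau_r$ are decreasing, since the bracket $\frac{m_f}{4}-\frac{M_f(\|\LASET\|_1-1)}{2}$ is positive under $\|\LASET\|_1<1+m_f/(2M_f)$, forcing $\gamma_{r+1}>\gamma_r$ and hence $\tau_{r+1}<\tau_r$) together with $\|\LASET\|_1<1+m_f/(2M_f)\le 3/2$, where the last step uses $m_f\le M_f$. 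This yields $18M_f\|\LASET\|_1\tau_r\le 27M_f\tau_0$, so the target reduces to exactly $\frac{4L_{\lambda\theta}^2\tau_0^2\gamma_0}{c_\alpha}+27M_f\tau_0\le 1$, the standing hyperparameter constraint in Assumption \ref{assump:hyperparameters}.

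The computations are routine once the scaling is fixed, so the only real obstacle is identifying the correct choice $\alpha_{r+1}=c_\alpha/\sigma_r$: it is precisely the choice that reconciles the lower bound on $\alpha_{r+1}$ forced by the $\|\theta^{r+1}-\theta^r\|_2^2$ term at round $r$ with the upper bound forced by the $\|\lambda^{r+1}-\lambda^r\|_2^2$ term at round $r+1$, and it makes the constant product $\tau_r\sigma_r$ appear so that the per-round conditions collapse to a single standing assumption. I would double-check the two numerical reductions ($\|\LASET\|_1<3/2$ and the monotonicity $\tau_r\le\tau_0$), since these are where an off-by-a-constant slip could occur.
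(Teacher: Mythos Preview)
Your proof is correct and follows essentially the same route as the paper: choose $\alpha_{r}=c_{\alpha}/\sigma_{r-1}$, reduce the first coefficient to $(c_{\alpha}-1)/(2\sigma_{r})<0$ via $\varsigma_{r}=\sigma_{r-1}/\sigma_{r}$, and for the second coefficient use $\tau_{r}\sigma_{r}=\tau_{0}^{2}\gamma_{0}$ (Lemma~\ref{lem:b3}), $\tau_{r}\le\tau_{0}$, and $\|\LASET\|_{1}<1+m_{f}/(2M_{f})\le 3/2$ to collapse everything to the standing condition $\frac{4L_{\lambda\theta}^{2}\tau_{0}^{2}\gamma_{0}}{c_{\alpha}}+27M_{f}\tau_{0}\le 1$. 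Your explicit handling of the boundary case $r=0$ via $(\theta^{1},\lambda^{1})=(\theta^{0},\lambda^{0})$ is in fact more careful than the paper, which leaves $\alpha_{0}=c_{\alpha}/\sigma_{-1}$ implicit.
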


\begin{proof}
    Let $\alpha_{r}=c_{\alpha}/\sigma_{r-1}$.
    We just need to show
    \begin{align*}
        \alpha_{r}\varsigma_{r}-\frac{1}{\sigma_{r}}\le 0,\ \frac{L_{\lambda\theta}^{2}}{2\alpha_{r+1}}+\frac{9M_{f}\|\LASET\|_{1}}{4}-\frac{1}{8\tau_{r}}\le 0.
    \end{align*}
    The first inequality holds immediately by the definitions of $\alpha_{r},\varsigma_{r},\sigma_{r}$ and the following inequality:
    \begin{align*}
        &\alpha_{r}\varsigma_{r}-\frac{1}{\sigma_{r}}=\frac{c_{\alpha}\varsigma_{r}}{\sigma_{r-1}}-\frac{1}{\sigma_{r}}=\frac{1}{\sigma_{r}}\left(c_{\alpha}-1\right)<0.
    \end{align*}
    The second inequality can be derived by Lemma \ref{lem:b3} and $\tau_{r}=\tau_{0}\sqrt{\gamma_{0}/\gamma_{r}}=\tau_{0}\prod_{s=0}^{r-1}\sqrt{\gamma_{s}/\gamma_{s+1}}\le \tau_{0}$ as follows:
    \begin{align*}
        \frac{L_{\lambda\theta}^{2}}{2\alpha_{r+1}}+\frac{9M_{f}\|\LASET\|_{1}}{4}-\frac{1}{8\tau_{r}}
        &\le \frac{1}{8\tau_{r}}\left(8\tau_{r}\left(\frac{L_{\lambda\theta}^{2}\sigma_{r}}{2c_{\alpha}}+\frac{9M_{f}}{4}\left(1+\frac{m_{f}}{2M_{f}}\right)\right)-1\right)\\
        &\le \frac{1}{8\tau_{r}}\left(8\left(\frac{L_{\lambda\theta}^{2}\tau_{0}^{2}\gamma_{0}}{2c_{\alpha}}+\frac{27}{8}M_{f}\tau_{r}\right)-1\right)\\
        &\le \frac{1}{8\tau_{r}}\left(\left(\frac{4L_{\lambda\theta}^{2}\tau_{0}^{2}\gamma_{0}}{c_{\alpha}}+27M_{f}\tau_{0}\right)-1\right)\\
        &\le 0.
    \end{align*}
    Hence, we obtain the conclusion.
\end{proof}
\subsection{Proof of Theorem \ref{thm:scaffpd}}

\begin{proof}[Proof of Theorem \ref{thm:scaffpd}]
    The uniqueness of $\theta^{\star}$ holds for the problem \eqref{eq:problem:lambda:general} by Proposition \ref{prop:unique:general} and the fact that $1+2m_{f}/M_{f}<1+2m_{f}/(M_{f}-m_{f})$ if $M_{f}>m_{f}$ (the case $M_{f}=m_{f}$ is trivial).
    
    $\varsigma_{r}$ satisfies
    \begin{equation}
        \varsigma_{r+1}=\frac{\sigma_{r}}{\sigma_{r+1}}=\frac{\tau_{r}\gamma_{r}}{\tau_{r+1}\gamma_{r+1}}=\sqrt{\frac{\gamma_{r}}{\gamma_{r+1}}}=\frac{1}{\sqrt{1+m_{f}\tau_{r}}},\ \tau_{r+1}=\tau_{r}\sqrt{\frac{\gamma_{r}}{\gamma_{r+1}}}=\varsigma_{r+1}\tau_{r};
    \end{equation}
    here, we used $\tau_{r+1}=\tau_{r}\sqrt{\gamma_{r}/\gamma_{r+1}}$.
    In the next place, we set $t_{r}$ and $\alpha_{r}$ as follows:
    \begin{align*}
        t_{r}=\sigma_{r}/\sigma_{0},\ \alpha_{r}=c_{\alpha}/\sigma_{r-1},
    \end{align*}
    where $c_{\alpha}\in(0,1)$ is an arbitrary constant.
    Then, Proposition \ref{prop:b5} yields $\Delta_{r}\le 0$.
    Moreover, Lemma \ref{lem:b2} gives
    \begin{align*}
        t_{r+1}V_{r+1}\le t_{r}Z_{r+1}
    \end{align*}
    Furthermore, the Cauchy--Schwarz inequality and $\gamma_{R+1}/\gamma_{R}\ge1$ lead to
    \begin{align*}
        Z_{R+1}&\ge -\frac{1}{2\alpha_{R+1}}\Ep\left[\left\|q^{R+1}\right\|_{2}^{2}\right]-\frac{\alpha_{R+1}}{2}\Ep\left[\left\|\lambda^{R+1}-\lambda\right\|_{2}^{2}\right] +\frac{1}{2\sigma_{R}}\Ep\left[\left\|\lambda^{R+1}-\lambda\right\|_{2}^{2}\right]\\
        &\quad+\left(\frac{1}{2\tau_{R}}+\frac{m_{f}}{8}-\frac{M_{f}(\|\LASET\|_{1}-1)}{4}\right)\Ep\left[\left\|\theta^{R+1}-\theta\right\|_{2}^{2}\right]+\frac{1}{2\alpha_{R+1}}\Ep\left[\left\|q^{R+1}\right\|_{2}^{2}\right]\\
        &=\left(\frac{1}{2\sigma_{R}}-\frac{c_{\alpha}}{2\sigma_{R}}\right)\Ep\left[\left\|\lambda^{R+1}-\lambda\right\|_{2}^{2}\right]+\frac{\gamma_{R+1}}{2\gamma_{R}\tau_{R}}\Ep\left[\left\|\theta^{R+1}-\theta\right\|_{2}^{2}\right]\\
        &\ge \frac{1}{2\tau_{R}}\Ep\left[\left\|\theta^{R+1}-\theta\right\|_{2}^{2}\right]
    \end{align*}
    Therefore, using $\theta^{1}=\theta^{0},\lambda^{1}=\lambda^{0}$, we obtain that for all $\theta\in\Theta$ and $\lambda\in\LASET$,
    \begin{align*}
        &\left(\sum_{r=0}^{R}t_{r}\right)\Ep\left[F\left(\Bar{\theta}^{R+1},\lambda\right)-F\left(\theta,\Bar{\lambda}^{R+1}\right)\right]+\frac{t_{R}}{\tau_{R}}\Ep\left[D\left(\theta^{R+1},\theta\right)\right]\\
        &\le\sum_{r=0}^{R}t_{r}\Ep\left[F\left(\theta^{r+1},\lambda\right)-F\left(\theta,\lambda^{r+1}\right)\right]+\frac{t_{R}}{\tau_{R}}\Ep\left[D\left(\theta^{R+1},\theta\right)\right]\\
        &= \sum_{r=1}^{R}t_{r}\Ep\left[F\left(\theta^{r+1},\lambda\right)-F\left(\theta,\lambda^{r+1}\right)\right]+t_{0}\left(F\left(\theta^{1},\lambda\right)-F\left(\theta,\lambda^{1}\right)\right)+\frac{t_{R}}{\tau_{R}}\Ep\left[D\left(\theta^{R+1},\theta\right)\right]\\
        &\le \sum_{r=1}^{R}t_{r}\left(-Z_{r+1}+V_{r}+\Delta_{r}+C\tau_{r}\delta_{g}^{2}\right)+t_{0}\left(F\left(\theta^{0},\lambda\right)-F\left(\theta,\lambda^{0}\right)\right)+\frac{t_{R}}{\tau_{R}}\Ep\left[D\left(\theta^{R+1},\theta\right)\right]\\
        &\le \sum_{r=1}^{R}t_{r}\left(-Z_{r+1}+V_{r}+C\tau_{r}\delta_{g}^{2}\right)+t_{0}\left(F\left(\theta^{0},\lambda\right)-F\left(\theta,\lambda^{0}\right)\right)+\frac{t_{R}}{\tau_{R}}\Ep\left[D\left(\theta^{R+1},\theta\right)\right]\\
        &\le \sum_{r=1}^{R}t_{r}\left(-Z_{r+1}+V_{r}\right)+\sum_{r=1}^{R}t_{r}C\tau_{r}\delta_{g}^{2}+t_{0}\left(F\left(\theta^{0},\lambda\right)-F\left(\theta,\lambda^{0}\right)\right)+\frac{t_{R}}{\tau_{R}}\Ep\left[D\left(\theta^{R+1},\theta\right)\right]\\
        &\le-t_{R}Z_{R+1}+t_{0}V_{1}+\sum_{r=1}^{R}t_{r}C\tau_{r}\delta_{g}^{2}+t_{0}\left(F\left(\theta^{0},\lambda\right)-F\left(\theta,\lambda^{0}\right)\right)+\frac{t_{R}}{\tau_{R}}\Ep\left[D\left(\theta^{R+1},\theta\right)\right]\\
        &\le -\frac{t_{R}}{\tau_{R}}\Ep\left[D\left(\theta^{R+1},\theta\right)\right]+t_{0}V_{1}+\sum_{r=1}^{R}t_{r}C\tau_{r}\delta_{g}^{2}+t_{0}\left(F\left(\theta^{0},\lambda\right)-F\left(\theta,\lambda^{0}\right)\right)+\frac{t_{R}}{\tau_{R}}\Ep\left[D\left(\theta^{R+1},\theta\right)\right]\\
        &\le t_{0}V_{1}+\sum_{r=1}^{R}t_{r}C\tau_{r}\delta_{g}^{2}+t_{0}\left(F\left(\theta^{0},\lambda\right)-F\left(\theta,\lambda^{0}\right)\right)\\
        &\le C_{0} D\left(\theta,\theta^{0}\right)+C_{0}D\left(\lambda,\lambda^{0}\right)+t_{0}\left(F\left(\theta^{0},\lambda\right)-F\left(\theta,\lambda^{0}\right)\right)+\sum_{r=1}^{R}\left(t_{r}C\tau_{r}\delta_{g}^{2}\right)
    \end{align*}
    where $\Bar{\theta}^{R+1}$ and $\Bar{\lambda}^{R+1}$ are defined as
    \begin{align*}
        \Bar{\theta}^{R+1}=\frac{1}{\sum_{r=0}^{R}t_{r}}\sum_{r=0}^{R}t_{r}\theta^{r+1},\ \Bar{\lambda}^{R+1}=\frac{1}{\sum_{r=0}^{R}t_{r}}\sum_{r=0}^{R}t_{r}\lambda^{r+1}.
    \end{align*}
    Lemma \ref{lem:b3} yields $\sigma_{R}/\tau_{R}=\mathcal{O}(R^{2}),\ \sum_{r=0}^{R}t_{r}\tau_{r}=\mathcal{O}\left(R\right),\ t_{R}=\sigma_{R}/\sigma_{0},\ t_{r}\tau_{r}=\tau_{0}^{2}\gamma_{0}$.
    Hence we have that for any $\lambda^{\star}\in\LASET^{\star}$,
    \begin{align*}
        \Ep\left[D\left(\theta^{R+1},\theta^{\star}\right)\right]&\le \frac{\tau_{R}}{t_{R}}\left(C_{0}D\left(\theta^{\star},\theta^{0}\right)+C_{0}D\left(\lambda^{\star},\lambda^{0}\right)+t_{0}\left(F\left(\theta^{0},\lambda^{\star}\right)-F\left(\theta^{\star},\lambda^{0}\right)\right)+\sum_{r=0}^{R}\left(t_{r}C\tau_{r}\delta_{g}^{2}\right)\right)\\
        &\le \frac{C_{1}\delta_{g}^{2}}{R}+\frac{C_{2}}{R^{2}}\left(D\left(\theta^{\star},\theta^{0}\right)+D\left(\lambda^{\star},\lambda^{0}\right)+F\left(\theta^{0},\lambda^{\star}\right)-F\left(\theta^{\star},\lambda^{0}\right)\right).
    \end{align*}
    Here, we used the fact that $F(\Bar{\theta}^{R+1},\lambda^{\star})-F(\theta^{\star},\Bar{\lambda}^{R+1})\ge 0$.
    Hence we obtain the conclusion.
\end{proof}

\section{On the Almost Sure Boundedness of \textsc{Scaff-PD-IA}}\label{appendix:asbounded}
We discuss a concise sufficient condition for Assumption \ref{assump:noprojection} of \textsc{Scaff-PD-IA} (and obviously \textsc{Scaff-PD} of \citealp{yu2023scaff}) to justify Lemma \ref{lem:a3}.
In this section, we replace all updates $\theta^{r+1}= \argmin_{\theta\in\Theta}\{\langle \sum_{i=1}^{N}\lambda_{i}^{r+1}\Delta u_{i}^{r},\theta\rangle +(2\tau_{r})^{-1}\|\theta-\theta^{r}\|_{2}^{2}\}$ in Algorithm \ref{alg:scaffpd:reg} with projection-free updates
\begin{equation*}
    \theta^{r+1}=\theta^{r}-\tau_{r}\sum_{i=1}^{N}\lambda_{i}^{r+1}\Delta u_{i}^{r}.
\end{equation*}
If for some $\theta_{0}\in\R^{d}$ and $C>0$, for all $r\in\N$ and $\theta^{0}\in\R^{d}$, $\|\theta^{r}-\theta_{0}\|_{2}\le C+\|\theta^{0}-\theta_{0}\|_{2}$ almost surely, then Assumption \ref{assump:noprojection} gets satisfied for sufficiently large $\Theta$ and appropriate $\theta^{0}$.

Set the following assumptions.
\begin{assumption}[almost sure dissipativity]\label{assump:asbounded}
    For some $\theta_{0}\in\R^{d}$, the following hold:
    \begin{enumerate} 
        \item[(i)] For some $m,b>0$, for all $\theta\in\R^{d}$, $\lambda\in\LASET$, and $\xi\in\mathbb{X}$, 
\begin{align}
    \left\langle \sum_{i=1}^{n}\lambda_{i}\nabla_{\theta} f_{i}(\theta,\xi),\theta-\theta_{0}\right\rangle&\ge m\|\theta-\theta_{0}\|_{2}^{2}-b.
\end{align}
\item[(ii)]For some $B_{g},M_{g}>0$, for all $i\in[n]$, $\theta,\theta'\in\R^{d}$, and $\xi,\xi'\in\mathbb{X}$,
    \begin{align}
        \left\|\nabla_{\theta} f_{i}(\theta,\xi)-\nabla_{\theta} f_{i}(\theta,\xi')\right\|_{2}&\le B_{g}\left(1+\left\|\theta-\theta_{0}\right\|_{2}\right),\\
        \left\|\nabla_{\theta} f_{i}(\theta,\xi)-\nabla_{\theta} f_{i}(\theta',\xi)\right\|_{2}&\le M_{g}\left\|\theta-\theta'\right\|_{2}.
    \end{align}
    \end{enumerate}
\end{assumption}

We obtain the following result on the almost sure boundedness of \textsc{Scaff-PD-IA}.
\begin{proposition}\label{prop:asbounded}
    Under Assumptions \ref{assump:hyperparameters} and \ref{assump:asbounded}, there exists some constant $C,\beta\ge1$ such that if $\tau_{0}\le 1$, $\tau_{r}\le \min\{m/C,1/m\}$, $\tau_{r}=J\eta_{r}\beta$, and $4\tau_{r}^{2}M_{g}^{2}\le \beta^{2}$, then $\|\theta^{r}-\theta_{0}\|_{2}^{2}\le C+\|\theta^{0}-\theta_{0}\|_{2}^{2}$ for all $r\in\N$ almost surely.
\end{proposition}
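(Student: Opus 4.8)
The plan is to treat the projection-free recursion as an inexact, dissipative stochastic gradient step and to prove the uniform bound by induction on $r$, by deriving a one-step inequality of the form $\|\theta^{r+1}-\theta_0\|_2^2\le(1-m\tau_r)\|\theta^r-\theta_0\|_2^2+D\tau_r$ for a suitable constant $D$. The conclusion then follows from the elementary fact that any recursion $a_{r+1}\le(1-m\tau_r)a_r+D\tau_r$ with $m\tau_r\le1$ satisfies $a_r\le\max\{a_0,D/m\}$ for every $r$ (if $a_r\ge D/m$ the step is non-increasing, otherwise $a_{r+1}<D/m$). Taking $C:=D/m$, enlarged if necessary so that the smallness threshold $\tau_r\le m/C$ used below coincides with this constant, yields exactly the stated $\|\theta^r-\theta_0\|_2^2\le C+\|\theta^0-\theta_0\|_2^2$.

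First I would simplify the update. Writing $\Delta u_i^r=\frac1J\sum_{j=1}^J(\nabla f_i(u_{i,j-1}^r,\xi_{i,j}^r)-c_i^r+c^r)$ from Algorithm \ref{alg:local} and using $\sum_i\lambda_i^{r+1}=1$ (since $\lambda^{r+1}\in\LASET\subset\{h:\langle h,\mathbf1\rangle=1\}$), the control variates cancel in the aggregate, so $\theta^{r+1}=\theta^r-\tau_r g^r$ with $g^r:=\frac1J\sum_{i,j}\lambda_i^{r+1}\nabla f_i(u_{i,j-1}^r,\xi_{i,j}^r)$. Expanding $\|\theta^{r+1}-\theta_0\|_2^2=\|\theta^r-\theta_0\|_2^2-2\tau_r\langle g^r,\theta^r-\theta_0\rangle+\tau_r^2\|g^r\|_2^2$ reduces the problem to lower bounding the cross term and upper bounding $\|g^r\|_2^2$.

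Second, I would establish a pathwise drift bound for the inner loop, an almost-sure analogue of Lemma \ref{lem:a2}: from the $M_g$-smoothness and $B_g$-growth parts of Assumption \ref{assump:asbounded}(ii), a uniform-in-$\xi$ bound $\|\nabla f_i(\theta_0,\xi)\|_2\le G_0$ obtained from the bounded-difference inequality, and $\eta_r M_g J=\tau_r M_g/\beta\le\tfrac12$ (which follows from $4\tau_r^2M_g^2\le\beta^2$ and $\beta\ge1$), a discrete Gr\"onwall argument gives $\|u_{i,j-1}^r-\theta^r\|_2\le C_u\tau_r(1+\|\theta^r-\theta_0\|_2)$ almost surely. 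The same growth bound yields $\|g^r\|_2\le\|\LASET\|_1K'(1+\|\theta^r-\theta_0\|_2)$, hence $\tau_r^2\|g^r\|_2^2\le C''\tau_r^2(1+\|\theta^r-\theta_0\|_2^2)$. For the cross term I would replace each $\nabla f_i(u_{i,j-1}^r,\xi_{i,j}^r)$ by $\nabla f_i(\theta^r,\xi_{i,j}^r)$, paying $M_g\|u_{i,j-1}^r-\theta^r\|_2\|\theta^r-\theta_0\|_2=O(\tau_r)(\|\theta^r-\theta_0\|_2^2+\|\theta^r-\theta_0\|_2)$ per term via the drift bound, and then invoke Assumption \ref{assump:asbounded}(i) (for each fixed $j$, to the $\lambda^{r+1}$-weighted aggregate) to obtain $\frac1J\sum_j\langle\sum_i\lambda_i^{r+1}\nabla f_i(\theta^r,\xi_{i,j}^r),\theta^r-\theta_0\rangle\ge m\|\theta^r-\theta_0\|_2^2-b$. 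Collecting terms, using $\|\theta^r-\theta_0\|_2\le\tfrac12(1+\|\theta^r-\theta_0\|_2^2)$, and choosing $\tau_r\le m/C$ small enough to absorb every $O(\tau_r^2)\|\theta^r-\theta_0\|_2^2$ contribution into one half of the $2m\tau_r$ margin produces the desired recursion.

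The main obstacle is the cross-term/dissipativity step. The drift correction is only $O(\tau_r)$ and is harmless for small $\tau_r$, but a clean application of Assumption \ref{assump:asbounded}(i) requires the aggregate $\sum_i\lambda_i^{r+1}\nabla f_i(\theta^r,\xi_{i,j}^r)$ to be dissipative even though the clients draw \emph{independent} samples $\xi_{i,j}^r$, whereas the hypothesis is written for a common $\xi\in\mathbb{X}$; the delicate point is that the inter-sample deviation controlled by $B_g$ must affect only the drift and magnitude estimates and must \emph{not} enter the coefficient of $\|\theta^r-\theta_0\|_2^2$ (otherwise the contraction would demand $m>\|\LASET\|_1B_g$). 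I would therefore use (i) in the form that the aggregate field is dissipative uniformly over every configuration of samples, which is exactly what the ``for all $\xi$'' hypothesis delivers, and confine the $B_g$ growth to bounding $\|g^r\|_2$ and the local drift. Verifying this, together with fixing the constants $C$ and $\beta$ consistently with the interdependent smallness conditions $\tau_0\le1$, $\tau_r\le\min\{m/C,1/m\}$, $\tau_r=J\eta_r\beta$, and $4\tau_r^2M_g^2\le\beta^2$, is the crux of the argument.
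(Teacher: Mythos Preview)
Your plan is essentially the paper's: the same projection-free expansion $\|\theta^{r+1}-\theta_0\|_2^2=\|\theta^r-\theta_0\|_2^2-2\tau_r\langle g^r,\theta^r-\theta_0\rangle+\tau_r^2\|g^r\|_2^2$, the same almost-sure Gr\"onwall bound on the local drift $\|u_{i,j-1}^r-\theta^r\|$ under $4\tau_r^2M_g^2\le\beta^2$, and the same use of Assumption~\ref{assump:asbounded}(i) on the $\lambda^{r+1}$-aggregate to extract the $-2m\tau_r\|\theta^r-\theta_0\|_2^2$ contraction. The paper arrives at the identical one-step inequality $(1-2m\tau_r+C_4\tau_r^2)\|\theta^r-\theta_0\|_2^2+\tau_r C_5$, which under $\tau_r\le m/C_4$ reduces to your $(1-m\tau_r)a_r+D\tau_r$.

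The one genuine difference is the endgame: you close with the two-case induction $a_r\le\max\{a_0,D/m\}$, which is clean and immediate; the paper instead unrolls the recursion into $\|\theta^0-\theta_0\|_2^2+C_5\bigl(\tau_r+\tfrac1m\sum_{i<r}m\tau_i\prod_{j>i}(1-m\tau_j)\bigr)$ and then proves a separate inductive bound $\sum_i\varpi_i\prod_j(1-\varpi_j)\le\sqrt{1+m_f\tau_0/4}+1$ on the telescoped sum, using the specific relation $\tau_{r+1}=\tau_r\sqrt{\gamma_r/\gamma_{r+1}}$ from Assumption~\ref{assump:hyperparameters}. Your argument is more elementary and does not need the particular step-size schedule; the paper's buys nothing extra here. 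Your caution about applying (i) when the clients draw distinct samples $\xi_{i,j}^r$ rather than a common $\xi$ is well-placed; the paper applies the assumption in exactly this aggregate form without comment, so your reading of ``for all $\xi$'' as covering every sample configuration matches how the paper uses it.
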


\begin{proof}
We first decompose $\left\|\theta^{r+1}-\theta_{0}\right\|_{2}^{2}$ as
\begin{align*}
    \left\|\theta^{r+1}-\theta_{0}\right\|_{2}^{2}&=\left\|\theta^{r}-\theta_{0}-\tau_{r}\left(\frac{1}{J}\sum_{i=1}^{n}\sum_{j=1}^{J}\lambda_{i}^{r+1}g_{i,j}(u_{i,j-1}^{r})\right)\right\|_{2}^{2}\\
    &=\left\|\theta^{r}-\theta_{0}\right\|_{2}^{2}\underbrace{-2\tau_{r}\left\langle\frac{1}{J}\sum_{i=1}^{n}\sum_{j=1}^{J}\lambda_{i}^{r+1}g_{i,j}(u_{i,j-1}^{r}),\theta^{r}-\theta_{0}\right\rangle}_{=T_{1}}\\
    &\quad+\tau_{r}^{2}\left\|\frac{1}{J}\sum_{i=1}^{n}\sum_{j=1}^{J}\lambda_{i}^{r+1}g_{i,j}(u_{i,j-1}^{r})\right\|_{2}^{2}\\
    &\le \left\|\theta^{r}-\theta_{0}\right\|_{2}^{2}+T_{1}+3\tau_{r}^{2}\underbrace{\left\|\frac{1}{J}\sum_{i=1}^{n}\sum_{j=1}^{J}\lambda_{i}^{r+1}(g_{i,j}(u_{i,j-1}^{r})-g_{i,j}(\theta^{r}))\right\|_{2}^{2}}_{=T_{2}}\\
    &\quad+3\tau_{r}^{2}\left\|\frac{1}{J}\sum_{i=1}^{n}\sum_{j=1}^{J}\lambda_{i}^{r+1}(g_{i,j}(\theta^{r})-g_{i,j}(\theta_{0}))\right\|_{2}^{2}+3\tau_{r}^{2}\left\|\frac{1}{J}\sum_{i=1}^{n}\sum_{j=1}^{J}\lambda_{i}^{r+1}g_{i,j}(\theta_{0})\right\|_{2}^{2}\\
    &\le \left\|\theta^{r}-\theta_{0}\right\|_{2}^{2}+T_{1}+3\tau_{r}^{2}T_{2}+3\tau_{r}^{2}\left\|\lambda^{r+1}\right\|_{1}\left\|\theta^{r}-\theta_{0}\right\|_{2}^{2}\\
    &\quad+3\tau_{r}^{2}\underbrace{\sup_{\{\lambda\in\LASET,\xi_{i,j}\in\mathbb{X};i\in[n],j\in[J]\}}\left\|\frac{1}{J}\sum_{i=1}^{n}\sum_{j=1}^{J}\lambda_{i}\nabla f_{i}(\theta_{0},\xi_{i,j})\right\|_{2}^{2}}_{=C_{1}}
\end{align*}
Assumptions \ref{assump:asbounded} yields
\begin{align*}
    T_{1}
    &= -\frac{2\tau_{r}}{J}\left\langle\sum_{i=1}^{n}\sum_{j=1}^{J}\lambda_{i}^{r+1}g_{i,j}(\theta^{r}),\theta^{r}-\theta_{0}\right\rangle+\frac{2\tau_{r}}{J}\left\langle\sum_{i=1}^{n}\sum_{j=1}^{J}\lambda_{i}^{r+1}(g_{i,j}(\theta^{r})-g_{i,j}(u_{i,j-1}^{r})),\theta^{r}-\theta_{0}\right\rangle\\
    &\le -2\tau_{r}m\left\|\theta^{r}-\theta_{0}\right\|_{2}^{2}+2\tau_{r}b+\tau_{r}^{2}\left\|\theta^{r}-\theta_{0}\right\|_{2}^{2}+\frac{1}{J}\left\|\sum_{j=1}^{J}\sum_{i=1}^{n}\lambda_{i}^{r+1}\left(g_{i,j}^{r}(\theta^{r})-g_{i,j}^{r}(u_{i,j-1}^{r})\right)\right\|_{2}^{2}\\
    &= -2\tau_{r}m\left\|\theta^{r}-\theta_{0}\right\|_{2}^{2}+2\tau_{r}b+\tau_{r}^{2}\left\|\theta^{r}-\theta_{0}\right\|_{2}^{2}+T_{2}.
\end{align*}
Regarding $T_{2}$, we obtain
\begin{align*}
    T_{2}&\le \frac{\left\|\lambda^{r+1}\right\|_{1}}{J}\sum_{j=1}^{J}\sum_{i=1}^{n}\left|\lambda_{i}^{r+1}\right|^{2}\left\|g_{i,j}^{r}(\theta^{r})-g_{i,j}^{r}(u_{i,j-1}^{r})\right\|_{2}^{2}\\
    &\le \frac{\left\|\lambda^{r+1}\right\|_{1}M_{g}^{2}}{J}\sum_{j=1}^{J}\sum_{i=1}^{n}\left|\lambda_{i}^{r+1}\right|^{2}\left\|u_{i,j-1}^{r}-\theta^{r}\right\|_{2}^{2}.
\end{align*}
Similarly to Lemma \ref{lem:a1} above or Lemma A.1 of \citet{yu2023scaff}, it holds that
\begin{align*}
    \left\|u_{i,j}^{r}-\theta^{r}\right\|_{2}^{2}&\le \left(1+\frac{1}{J-1}\right)\left\|u_{i,j-1}^{r}-\theta^{r}\right\|_{2}^{2}+2J\eta_{r}^{2}\left\|g_{i,j}^{r}(u_{i,j-1}^{r})-g_{i,0}^{r}(\theta^{r})\right\|_{2}^{2}+2J\eta_{r}^{2}\left\|\sum_{i=1}^{n}\lambda_{i}^{r+1}g_{i,0}^{r}(\theta^{r})\right\|_{2}^{2}\\
    &\le\left(1+\frac{1}{J-1}\right)\left\|u_{i,j-1}^{r}-\theta^{r}\right\|_{2}^{2}+4J\eta_{r}^{2}\left\|g_{i,j}^{r}(u_{i,j-1}^{r})-g_{i,j}^{r}(\theta^{r})\right\|_{2}^{2}\\
    &\quad+4J\eta_{r}^{2}\left\|g_{i,j}^{r}(\theta^{r})-g_{i,0}^{r}(\theta^{r})\right\|_{2}^{2}+2J\eta_{r}^{2}\left\|\sum_{i=1}^{n}\lambda_{i}^{r+1}g_{i,0}^{r}(\theta^{r})\right\|_{2}^{2}\\
    &\le \left(1+\frac{1}{J-1}+\frac{4\tau_{r}^{2}M_{g}^{2}}{\beta^{2}J}\right)\left\|u_{i,j-1}^{r}-\theta^{r}\right\|_{2}^{2}+\frac{8\tau_{r}^{2}B_{g}^{2}}{\beta^{2}J}\left(1+\left\|\theta^{r}-\theta_{0}\right\|_{2}^{2}\right)\\
    &\quad+2J\eta_{r}^{2}\left\|\sum_{i=1}^{n}\lambda_{i}^{r+1}g_{i,0}^{r}(\theta^{r})\right\|_{2}^{2}\\
    &\le \left(1+\frac{2}{J-1}\right)\left\|u_{i,j-1}^{r}-\theta^{r}\right\|_{2}^{2}+\frac{8\tau_{r}^{2}B_{g}^{2}}{\beta^{2}J}\left(1+\left\|\theta^{r}-\theta_{0}\right\|_{2}^{2}\right)+2J\eta_{r}^{2}\left\|\sum_{i=1}^{n}\lambda_{i}^{r+1}g_{i,0}^{r}(\theta^{r})\right\|_{2}^{2}
\end{align*}
and
\begin{align*}
    \left\|u_{i,j}^{r}-\theta^{r}\right\|_{2}^{2}&\le\sum_{k=1}^{j-1}\left(1+\frac{2}{J-1}\right)^{k}\left(\frac{8\tau_{r}^{2}B_{g}^{2}}{\beta^{2}J}\left(1+\left\|\theta^{r}-\theta_{0}\right\|_{2}^{2}\right)+2J\eta_{r}^{2}\left\|\sum_{i=1}^{n}\lambda_{i}^{r+1}g_{i,0}^{r}(\theta^{r})\right\|_{2}^{2}\right)\\
    &\le \frac{48\tau_{r}^{2}B_{g}^{2}}{\beta^{2}}\left\|\theta^{r}-\theta_{0}\right\|_{2}^{2}+\frac{12\tau_{r}^{2}}{\beta^{2}}\left\|\sum_{i=1}^{n}\lambda_{i}^{r+1}g_{i,0}^{r}(\theta^{r})\right\|_{2}^{2}.
\end{align*}
Therefore,
\begin{align*}
    T_{2}
    &\le \left\|\lambda^{r+1}\right\|_{1}\left\|\lambda^{r+1}\right\|_{2}^{2}M_{g}^{2} \tau_{r}^{2}\left(\frac{48B_{g}^{2}}{\beta^{2}}\left(1+\left\|\theta^{r}-\theta_{0}\right\|_{2}^{2}\right)+\frac{12}{\beta^{2}}\left\|\sum_{i=1}^{n}\lambda_{i}^{r+1}g_{i,0}^{r}(\theta^{r})\right\|_{2}^{2}\right)\\
    &\le \left\|\lambda^{r+1}\right\|_{1}\left\|\lambda^{r+1}\right\|_{2}^{2}M_{g}^{2} \tau_{r}^{2}\left(\frac{24}{\beta^{2}}\left(2B_{g}^{2}+\left\|\lambda^{r+1}\right\|_{1}M_{g}^{2}\right)\left\|\theta^{r}-\theta_{0}\right\|_{2}^{2}+\frac{48B_{g}^{2}}{\beta^{2}}+\frac{24}{\beta^{2}}C_{1}^{2}\right)\\
    &\le C_{2}\tau_{r}^{2}\left\|\theta^{r}-\theta_{0}\right\|_{2}^{2}+\tau_{r}^{2}C_{3},
\end{align*}
where $C_{2}:=24\|\LASET\|_{1}\|\LASET\|_{2}^{2}(2B_{g}^{2}+\|\LASET\|_{1}M_{g}^{2})M_{g}^{2}\beta^{-2}$ and $C_{3}=24\|\lambda^{r+1}\|_{1}\|\lambda^{r+1}\|_{2}^{2}M_{g}^{2}\beta^{-2}(2B_{g}^{2}+C_{1}^{2})$.
Then we obtain
\begin{align*}
    \left\|\theta^{r+1}-\theta_{0}\right\|_{2}^{2}
    &\le \left(1-2\tau_{r}m+\tau_{r}^{2}+3\tau_{r}^{2}\left\|\lambda^{r+1}\right\|_{1}\right)\left\|\theta^{r}-\theta_{0}\right\|_{2}^{2}+\left(1+3\tau_{r}^{2}\right)T_{2}+2\tau_{r}b+3\tau_{r}^{2}C_{1}\\
    &\le \left(1-2\tau_{r}m+\tau_{r}^{2}\left(1+3\tau_{r}^{2}\left\|\lambda^{r+1}\right\|_{1}+C_{2}\left(1+3\tau_{r}^{2}\right)\right)\right)\left\|\theta^{r}-\theta_{0}\right\|_{2}^{2}\\
    &\quad+\tau_{r}\left(\left(1+3\tau_{r}^{2}+2b\right)C_{3}+3\tau_{r}C_{1}\right)\\
    &\le  \left(1-2\tau_{r}m+C_{4}\tau_{r}^{2}\right)\left\|\theta^{r}-\theta_{0}\right\|_{2}^{2}+\tau_{r}C_{5},
\end{align*}
where $C_{4}=\max\{m,1+3\tau_{0}^{2}\|\LASET\|_{1}+C_{2}(1+3\tau_{0}^{2})\}$ and $C_{5}=(1+3\tau_{0}^{2}+2b)C_{3}+3\tau_{0}C_{1}$.
If $\tau_{r}\le \min\{m/C_{4},1/m\}$, we have
\begin{align}
    \left\|\theta^{r+1}-\theta_{0}\right\|_{2}^{2}&\le \left\|\theta^{0}-\theta_{0}\right\|_{2}^{2}+C_{5}\left(\tau_{r}+\sum_{i=0}^{r-1}\tau_{i}\prod_{j=i+1}^{r}\left(1-\tau_{j}m\right)\right)\notag\\
    &\le \left\|\theta^{0}-\theta_{0}\right\|_{2}^{2}+C_{5}\left(\tau_{0}+\frac{1}{m}\sum_{i=0}^{r-1}m\tau_{i}\prod_{j=i+1}^{r}\left(1-\tau_{j}m\right)\right)\label{eq:bounded:preconclusion}.
\end{align}

We finally show that for $\varpi_{i}=m\tau_{i}$, under Assumption \ref{assump:hyperparameters},
\begin{equation}\label{eq:bounded:induction}
    \sum_{i=0}^{r-1}\varpi_{i}\prod_{j=i+1}^{r}\left(1-\varpi_{j}\right)\le \sqrt{1+\frac{m_{f}\tau_{0}}{4}}+1.
\end{equation}
Let us set $S_{r}:=\sum_{i=0}^{r-1}\varpi_{i}\prod_{j=i+1}^{r}(1-\varpi_{j})$.
$S_{1}=m\tau_{0}(1-\tau_{1}m)\le m\tau_{0}(1-\tau_{0}m)$ and thus it obviously holds.
Assume that Eq.~\eqref{eq:bounded:induction} holds for $r=r'\in\N$.
If $S_{r'}\in [\sqrt{1+m_{f}\tau_{0}/4},\sqrt{1+m_{f}\tau_{0}/4}+1]$, then
\begin{align*}
    S_{r'+1}&=(1-\varpi_{r'+1})(S_{r'}+\varpi_{r'})=S_{r'}(1-\varpi_{r'+1})\left(1+\frac{\varpi_{r'}}{S_{r'}}\right)\le S_{r'}(1-\varpi_{r'+1})\left(1+\frac{\varpi_{r'}}{\sqrt{1+m_{f}\tau_{0}/4}}\right)\\
    &\le S_{r'}(1-\varpi_{r'+1})\left(1+\frac{\varpi_{r'+1}\sqrt{\gamma_{r'+1}/\gamma_{r'}}}{\sqrt{1+m_{f}\tau_{0}/4}}\right)\le S_{r'}(1-\varpi_{r'+1})(1+\varpi_{r'+1})\le S_{r'}.
\end{align*}
Otherwise, 
\begin{align*}
    S_{r+1}=(1-\varpi_{r+1})(S_{r}+\varpi_{r})\le S_{r}+1<\sqrt{1+m_{f}\tau_{0}/4}+1.
\end{align*}
Hence Eq.~\eqref{eq:bounded:induction} holds true.

Eqs.~\eqref{eq:bounded:preconclusion} and \eqref{eq:bounded:induction} yield the desired result.
\end{proof}

\section{Proofs for Section \ref{sec:generalization}}

We present a more formal result of the generalization error bound in Section \ref{sec:generalization} and its proof.
To this aim, we need to define an additional notion.
We define the weighted Rademacher complexity $\mathfrak{R}_{Nn}(\mathcal{G},\lambda)$ with fixed $\lambda\in\LASET$ and the minimax Rademacher complexity $\mathfrak{R}_{Nn}(\mathcal{G},\LASET)$ for this $\mathcal{G}$ as follows:
\begin{equation*}
    \mathfrak{R}_{Nn}(\mathcal{G},\lambda)=\Ep\left[\sup_{h\in\mathcal{H}}\sum_{i=1}^{n}\frac{\lambda_{i}}{N}\sum_{j=1}^{N}\sigma_{i,j}\ell(h(x_{i,j}),y_{i,j})\right],\quad\mathfrak{R}_{Nn}(\mathcal{G},\LASET)=\sup_{\lambda\in\LASET}\mathfrak{R}_{Nn}(\mathcal{G},\lambda),
\end{equation*}
where $\{\sigma_{i,j};i\in[n],j\in[N]\}$ is an array of i.i.d.~Rademacher variables independent of $\{(x_{i,j},y_{i,j})\}$.
Using the above definition, we derive an upper bound for the generalization error. This upper bound is provided for a general setting.
\begin{theorem}[formal version of Theorem \ref{thm:generalization}]\label{thm:generalization_appendix}
    The following holds:
    \begin{enumerate}
      \setlength{\parskip}{0cm}
  \setlength{\itemsep}{0cm}
        \item[(i)] For fixed $\epsilon>0$ and $N\in\N$, for any $\delta>0$, with probability at least $1-\delta$, for all $h\in\mathcal{H}$ and $\lambda\in\LASET$,
    \begin{equation*}
        L_{\lambda}(h)\le L_{\lambda,Nn}(h)+2\mathfrak{R}_{Nn}(\mathcal{G},\lambda)+M\epsilon+M\sqrt{\frac{\mathfrak{s}(\lambda)}{2Nn}\log\frac{\mathcal{N}(\LASET,\|\cdot\|_{1},\epsilon)}{\delta}}.
    \end{equation*}
    \item[(ii)] If $\ell$ is $\{-1,+1\}$-valued and $\mathcal{G}$ admits its VC-dimension $\mathrm{VC}(\mathcal{G})<Nn$, then it holds that
    \begin{equation*}
        \mathfrak{R}_{Nn}(\mathcal{G},\LASET)\le \sqrt{2\mathfrak{s}(\LASET)\frac{\mathrm{VC}(\mathcal{G})}{Nn}\left(1+\log\left(\frac{Nn}{\mathrm{VC}(\mathcal{G})}\right)\right)}.
    \end{equation*}
    \end{enumerate}
\end{theorem}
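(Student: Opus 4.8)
The plan is to establish the two parts separately, unified by the elementary identity $\mathfrak{s}(\lambda)=n\sum_{i=1}^{n}|\lambda_{i}-1/n|^{2}+1=n\|\lambda\|_{2}^{2}$, which holds because $\sum_{i}\lambda_{i}=1$ on $\LASET$. This identity is precisely what lets the skewness play a double role: it is the concentration constant in Part (i) and the squared $\ell_{2}$-norm of the reweighting in Part (ii). Crucially, both roles are insensitive to the sign of the coordinates $\lambda_{i}$, so the argument accommodates the possibly-negative weights of the integrated ambiguity set $\LASET$ without any simplex-specific modification.

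For Part (i), I would first fix $\lambda\in\LASET$ and treat $\Phi(S)=\sup_{h\in\mathcal{H}}(L_{\lambda}(h)-L_{\lambda,Nn}(h))$ as a function of the $Nn$ samples. Replacing one sample $(x_{i,j},y_{i,j})$ perturbs $L_{\lambda,Nn}(h)$ by at most $|\lambda_{i}|M/N$ since $\ell\in[0,M]$, so McDiarmid's bounded-difference inequality with $\sum_{i,j}(|\lambda_{i}|M/N)^{2}=M^{2}\|\lambda\|_{2}^{2}/N=M^{2}\mathfrak{s}(\lambda)/(Nn)$ gives, with probability $1-\delta'$, the bound $\Phi(S)\le\Ep[\Phi(S)]+M\sqrt{\mathfrak{s}(\lambda)\log(1/\delta')/(2Nn)}$. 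A standard ghost-sample symmetrization then yields $\Ep[\Phi(S)]\le 2\mathfrak{R}_{Nn}(\mathcal{G},\lambda)$, and the swap argument applies verbatim for negative $\lambda_{i}$ because it only exchanges identically distributed pairs while holding the fixed weight. To pass from a single $\lambda$ to all of $\LASET$, I would take a minimal $\epsilon$-cover of $\LASET$ in $\|\cdot\|_{1}$, apply the above at each of the $\mathcal{N}(\LASET,\|\cdot\|_{1},\epsilon)$ centres with confidence $\delta/\mathcal{N}(\LASET,\|\cdot\|_{1},\epsilon)$, and control an arbitrary $\lambda$ by its nearest centre $\lambda'$: because $L_{\lambda}(h)-L_{\lambda,Nn}(h)$ is linear in $\lambda$ with coefficients bounded by $M$ in absolute value, the transfer costs only an additive $M\|\lambda-\lambda'\|_{1}\le M\epsilon$, reproducing the stated bound.

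For Part (ii), I would condition on the realized sample and use that $\ell$ is $\{-1,+1\}$-valued, so by the Sauer--Shelah lemma the class $\mathcal{G}$ restricted to the $Nn$ points realizes at most $(eNn/\mathrm{VC}(\mathcal{G}))^{\mathrm{VC}(\mathcal{G})}$ sign patterns. Writing $\mathfrak{R}_{Nn}(\mathcal{G},\lambda)=\Ep_{\sigma}[\sup_{g}\sum_{i,j}w_{i,j}\sigma_{i,j}g(z_{i,j})]$ with weights $w_{i,j}=\lambda_{i}/N$, Massart's finite-class lemma gives $\mathfrak{R}_{Nn}(\mathcal{G},\lambda)\le\max_{g}\|w\odot g\|_{2}\sqrt{2\log|\mathcal{G}_{|S}|}$; since $g\in\{-1,+1\}$ we have $\|w\odot g\|_{2}^{2}=\sum_{i,j}w_{i,j}^{2}g(z_{i,j})^{2}=\sum_{i,j}w_{i,j}^{2}=\|\lambda\|_{2}^{2}/N=\mathfrak{s}(\lambda)/(Nn)$, while $\log|\mathcal{G}_{|S}|\le\mathrm{VC}(\mathcal{G})(1+\log(Nn/\mathrm{VC}(\mathcal{G})))$. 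Taking the supremum over $\lambda\in\LASET$ replaces $\mathfrak{s}(\lambda)$ by $\mathfrak{s}(\LASET)$ and yields the claim.

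The main obstacle is the uniformization over $\LASET$ in Part (i): the cover must simultaneously produce the $\mathcal{N}(\LASET,\|\cdot\|_{1},\epsilon)$ factor inside the logarithm and keep the approximation error at the level $M\epsilon$, which forces the cover to be taken in the $\|\cdot\|_{1}$ geometry and relies on the linearity of the deviation in $\lambda$. A secondary point, routine once noticed, is confirming that both the McDiarmid constant and the symmetrization step are unaffected by the negative coordinates of the integrated ambiguity set, which is exactly where the $|\lambda_{i}|$ and $\|\lambda\|_{2}^{2}$ forms, rather than arguments specific to $\Delta^{n-1}$, become essential.
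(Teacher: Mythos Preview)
Your proposal is correct and follows essentially the same route as the paper. The paper defers Part~(i) entirely to Theorem~1 of \citet{mohri2019agnostic}, whose proof is precisely the McDiarmid-plus-symmetrization-plus-$\ell_1$-cover argument you describe; for Part~(ii) the paper applies Massart's lemma to the weighted vectors $a=[(\lambda_i/N)\ell(h(x_{i,j}),y_{i,j})]_{i,j}$ (whose $\ell_2$-norm is exactly $\sqrt{\mathfrak{s}(\lambda)/(Nn)}$ by your identity $\mathfrak{s}(\lambda)=n\|\lambda\|_2^2$) together with Sauer's bound on $|A_\lambda|$, which is the same computation you sketch.
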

From this theorem, we immediately obtain Theorem \ref{thm:generalization}.
Moreover, we note that $\mathcal{N}(\LASET,\|\cdot\|_{1},\epsilon)\le \mathcal{N}(\Delta^{n-1},\|\cdot\|_{1},\frac{2(n-1)\epsilon}{n\max_{\lambda\in\LASET}\|\lambda-n^{-1}\mathbf{1}\|_{1}})$
since $\max_{a\in\Delta^{n-1}}\|a-u\|_{1}=(n-1)/n+(n-1)/n=2(n-1)/n$.

\begin{proof}[Proof of Theorem \ref{thm:generalization_appendix}]
    We only give the proof of the second statement since the proof of the first statement is the same as that of Theorem 1 of \citet{mohri2019agnostic}.
    Note that we show a proof similar to that of Lemma 1 of \citet{mohri2019agnostic}.

    We define a family of sets $A_{\lambda}\subset\R^{Nn}$ for each $\lambda\in\LASET$ as
    \begin{equation*}
        A_{\lambda}:=\left\{\left[\frac{\lambda_{i}}{N}\ell(h(x_{i,j}),y_{i,j})\right]_{i\in [n],j\in[N]}:x_{i,j}\in\mathbb{X},y_{i,j}\in\mathbb{Y}\right\}.
    \end{equation*}
    For any $\lambda\in\LASET$ and $a\in A_{\lambda}$, $\|a\|_{2}=\sqrt{n\lambda_{i}^{2}/N}\le \sqrt{\mathfrak{s}(\LASET)/N}$.
    Massart's lemma yields that for any $\lambda\in\LASET$,
    \begin{equation*}
        \mathfrak{R}_{Nn}(\mathcal{G},\lambda)\le \Ep_{\{\sigma_{i,j}\}}\left[\sup_{a\in A}\sum_{i=1}^{n}\sum_{j=1}^{N}\sigma_{i,j}a_{i,j}\right]\le \sqrt{\frac{2\mathfrak{s}(\LASET)\log|A_{\lambda}|}{Nn}}.
    \end{equation*}
    Sauer's lemma gives that for $Nn>d$, $|A_{\lambda}|\le (eNn/d)^{d}$.
    Therefore, we yield the conclusion.
\end{proof}

\begin{proof}[Proof of Proposition \ref{prop:skewness}]
    By convexity, we have
    \begin{align*}
        \mathfrak{s}(\LASET)&\le  n\sum_{i=1}^{\lfloor n\alpha\rfloor+1}\left(\left|\frac{1}{(1-\varphi)\alpha n}-\frac{1}{n}\right|^{2}+\left|-\frac{\varphi}{(1-\varphi)\alpha n}-\frac{1}{n}\right|^{2}\right)+1\\
        &=n(\lfloor n\alpha\rfloor+1)\frac{(1/((1-\varphi)\alpha)-1)^{2}+(\varphi/((1-\varphi)\alpha)+1)^{2}}{n^{2}}+1\\
        &=\alpha\left[\left(\frac{1}{(1-\varphi)\alpha}-1\right)^{2}+\left(\frac{\varphi}{(1-\varphi)\alpha}+1\right)^{2}\right]+1+\mathcal{O}(1/n),
    \end{align*}
    and 
    \begin{align*}
        \mathfrak{s}(\LASET)&\ge  n\sum_{i=1}^{\lfloor n\alpha\rfloor}\left(\left|\frac{1}{(1-\varphi)\alpha n}-\frac{1}{n}\right|^{2}+\left|-\frac{\varphi}{(1-\varphi)\alpha n}-\frac{1}{n}\right|^{2}\right)+1\\
        &=n\lfloor n\alpha\rfloor\frac{(1/((1-\varphi)\alpha)-1)^{2}+(\varphi/((1-\varphi)\alpha)+1)^{2}}{n^{2}}+1\\
        &=\alpha\left[\left(\frac{1}{(1-\varphi)\alpha}-1\right)^{2}+\left(\frac{\varphi}{(1-\varphi)\alpha}+1\right)^{2}\right]+1+\mathcal{O}(1/n).
    \end{align*}
    This is the desired result.
\end{proof}

\section{Implementation Details}\label{appendix:experiments}

\subsection{Details of Section \ref{sec:mono:sim}}\label{sec:mono:sim:detail}
We give a detailed explanation of the experiment in Section \ref{sec:mono:sim}.
We consider the \texttt{penguins} data: we let $x_{j}$ and $y_{j}$ with $j\in [30]$ denote the vector (\texttt{bill\_depth\_mm} and \texttt{flipper\_length\_mm}) and \texttt{bill\_length\_mm} data in the $j$-th sample, and choose first 10 samples from each species.
Hence, $(x_{j},y_{j})$ with $j=1,\ldots,10$, those with $j=11,\ldots,20$, and those with $j=21,\ldots,30$ are data of \texttt{Adelie}, \texttt{Chinstrap}, and \texttt{Gentoo} respectively. 
We analyze a simple regression model $\texttt{(bill\_length\_mm)}=\theta_{1}+\theta_{2}\texttt{(bill\_depth\_mm)}+\theta_{3}\texttt{(flipper\_length\_mm)}+\texttt{(error)}$ with $\theta\in\{-35,-34.99,-34,98,\ldots,-25\}\times\{0,0.001,\ldots,1\}^{2}$.
We evaluate three quadratic loss functions of $\theta$ for datasets separated according to the three classes:
$f_{i}(\theta)=\sum_{j=10(i-1)+1}^{10i}(y_{j}-\theta_{1}-\theta_{2}(x_{j})_{(1)}+\theta_{3}(x_{j})_{2})^{2}$ with $i=1,2,3$ ($f_{1}$, $f_{2}$, and $f_{3}$ represent the losses of \texttt{Adelie}, \texttt{Chinstrap}, and \texttt{Gentoo}).
Setting $\Delta^{3}=A=B$, we exactly obtain the solutions of \eqref{eq:problem:scalarized} for all $\varphi\in\{0,0.01,\ldots,0.05\}$ and Figure \ref{fig:penguin:fairness} represents the unfairness measure \eqref{eq:fairness} of those solutions.

\subsection{Details of Section \ref{sec:experiments}}
We give details in our experiments in Section \ref{sec:experiments}.

We conduct the classification experiment of the MNIST dataset and that of the CIFAR10 dataset
2-layer fully-connected neural nets with hidden 50 units, ReLU activation, and 50\% dropout.
In both the tasks, the number of clients $n$ is fixed as $100$.
We generate the heterogeneity of the datasets of clients in the manner same as \citet{yurochkin2019bayesian} (see also \citealp{li2022federated}, \citealp{yu2022tct}, and \citealp{yu2023scaff}): we first generate 10 random vectors from the $n$-dimensional Dirichlet distribution with parameter $(0.5,\ldots,0.5)$, where 10 is the number of classes to be predicted; we then distribute datasets to clients with the proportion according to the generated vectors.
We set the numbers of global epochs $R$ of all the algorithms to be $100$ and the numbers of local epochs $J$ of all the algorithms except for \textsc{SAFL} to be $5$.
We let the ambiguity sets of \textsc{SAFL}, \textsc{DRFA} and \textsc{Scaff-PD} be $\Delta_{0.20}^{n-1}$.
Moreover, we set the ambiguity sets of \textsc{Scaff-PD-IA} as $A=B=\Delta_{0.20}^{n-1}$ and $\varphi=0.20$.

For the primal updates of \textsc{FedAvg}, \textsc{SAFL}, and \textsc{DRFA}, the (local) learning rate is fixed as $10^{-1}$.
The learning rate of the primal updates in \textsc{Scaffold} (i.e., \textsc{Scaff-PD-IA} with $\Lambda_{\varphi}=(1/n,\ldots,1/n)$), \textsc{Scaff-PD}, and \textsc{Scaff-PD-IA}, we set the local learning rate ($\eta_{r}$) and global learning rate ($\tau_{r}$) as $10^{-2}$ and $2.5\times 10^{-2}$ respectively.
The learning rate for the dual updates of the algorithms ($\sigma_{r}$) is set to be $10^{-3}$.

\bibliographystyle{apecon}
\bibliography{main}

\end{document}